\documentclass{article}

\usepackage{microtype}
\usepackage{graphicx}
\usepackage{subfigure}
\usepackage{booktabs} 

\usepackage{hyperref}

\newcommand{\method}{CHyLL++\xspace}
\newcommand{\methodlong}{\textbf{C}ontinuous \textbf{Hy}brid System \textbf{L}earning in \textbf{L}atent Space \xspace}

\usepackage[accepted]{icml2025}

\usepackage{amsmath}
\usepackage{amssymb}
\usepackage{mathtools}
\usepackage{amsthm}

\usepackage{algpseudocode}
\usepackage{float}

\usepackage[capitalize,noabbrev]{cleveref}

\theoremstyle{plain}

\usepackage{graphicx}
\usepackage{caption}
\graphicspath{
    {figures/}
}

\usepackage{amsmath,amssymb,enumerate}
\usepackage{amsthm}
\usepackage{setspace}
\usepackage{booktabs}
\usepackage[usenames,dvipsnames,svgnames,table]{xcolor}
\usepackage{mathtools}
\usepackage{blindtext}
\usepackage{gensymb}
\usepackage{xparse}
\usepackage{lipsum}
\usepackage{mathrsfs}
\usepackage[mathscr]{euscript}
\usepackage{times}
\usepackage{natbib}
\usepackage{multicol}
\definecolor{darkgreen}{rgb}{0,0.6,0}
\definecolor{cvprblue}{rgb}{0.21,0.49,0.74}
\definecolor{linkcolor}{rgb}{1.0, 0.0 ,0.0}
\usepackage{amsfonts}
\usepackage{cleveref}
\usepackage[utf8]{inputenc}
\usepackage[T1]{fontenc}
\usepackage{textcomp}
\usepackage{arydshln}
\usepackage{tabu}
\usepackage{cuted}
\usepackage{flushend}
\usepackage{balance}

\usepackage{thmtools,thm-restate}

\newtheorem{problem}{Problem}
\newtheorem{theorem}{Theorem}
\newtheorem{lemma}{Lemma}
\newtheorem{proposition}{Proposition}
\newtheorem{corollary}{Corollary}
\newtheorem{remark}{Remark}

\newtheorem{definition}{Definition}
\newtheorem{example}{Example}
\newtheorem{assumption}{Assumption}

\usepackage{enumitem}

\renewcommand{\thefigure}{\arabic{figure}}

\definecolor{note}{rgb}{0.1,0.1,1}
\definecolor{rephase}{rgb}{0.15,0.7,0.15}
\definecolor{bag}{rgb}{0.6,0.6,0.2}

\makeatletter
\renewcommand*\env@matrix[1][c]{\hskip -\arraycolsep
  \let\@ifnextchar\new@ifnextchar
  \array{*\c@MaxMatrixCols #1}}
\makeatother

\makeatletter
\newcommand{\mathleft}{\@fleqntrue\@mathmargin0pt}
\newcommand{\mathcenter}{\@fleqnfalse}
\makeatother

\usepackage{bm}
\usepackage{cleveref}

\usepackage{amsfonts}
\usepackage{amssymb}
\usepackage{amsbsy}
\usepackage{amsmath}
\usepackage{graphicx}
\usepackage{subcaption}
\usepackage{tablefootnote}
\usepackage{multirow}
\usepackage{soul}

\usepackage{wrapfig}
\usepackage{diagbox}
\usepackage{appendix}
\usepackage{graphicx}
\usepackage{subcaption}  

\usepackage{wrapfig}
\usepackage{caption} 

\usepackage[disable,textsize=tiny]{todonotes}
\usepackage[textsize=tiny]{todonotes}

\icmltitlerunning{Embedding Hybrid Systems into Continuous Latent Vector Fields}

\begin{document}

\twocolumn[
\icmltitle{Embedding Hybrid Systems into Continuous Latent Vector Fields}




\begin{icmlauthorlist}
\icmlauthor{Sangli Teng}{yyy}
\icmlauthor{Hang Liu}{xxx}
\icmlauthor{Koushil Sreenath}{yyy}
\end{icmlauthorlist}

\icmlaffiliation{yyy}{University of California, Berkeley, CA, USA}
\icmlaffiliation{xxx}{University of Michigan, Ann Arbor, MI, USA}

\icmlcorrespondingauthor{Sangli Teng}{sangliteng@berkeley.edu}

\icmlkeywords{Machine Learning, ICML}

\vskip 0.3in
]



\printAffiliationsAndNotice{}  

\begin{abstract}
This work proves that an $n$-dimensional hybrid system can be embedded into an $m$-dimensional Euclidean space equipped with a continuous vector field on its embedded image whenever $m>2n$. This result suggests that an \emph{intrinsically} discontinuous hybrid system generically admits a continuous \emph{extrinsic} representation that is well-posed for differentiable optimization.
Building on this existence theorem, we show that a latent Neural ODE with consistency loss in both the latent and state space can accurately recover the flow of hybrid systems. Extensive experiments suggest the proposed method outperforms the existing method in learning hybrid systems with varying geometries from only time series data.

\end{abstract}

\section{Introduction}

Hybrid systems (hybrid automata) model a broad class of physical~\citep{westervelt2003hybrid, posa2014direct} and cyber-physical processes~\citep{tabuada2007event, ames2014rapidly} by combining continuous-time vector fields with discrete state resets. Despite its powerful expressiveness, the hybrid system has nonsmooth or discontinuous state evolution that is not well-posed for differentiable optimization~\cite{paszke2017automatic}, especially at state resets.

\begin{figure}
    \centering
    \includegraphics[width=1\linewidth]{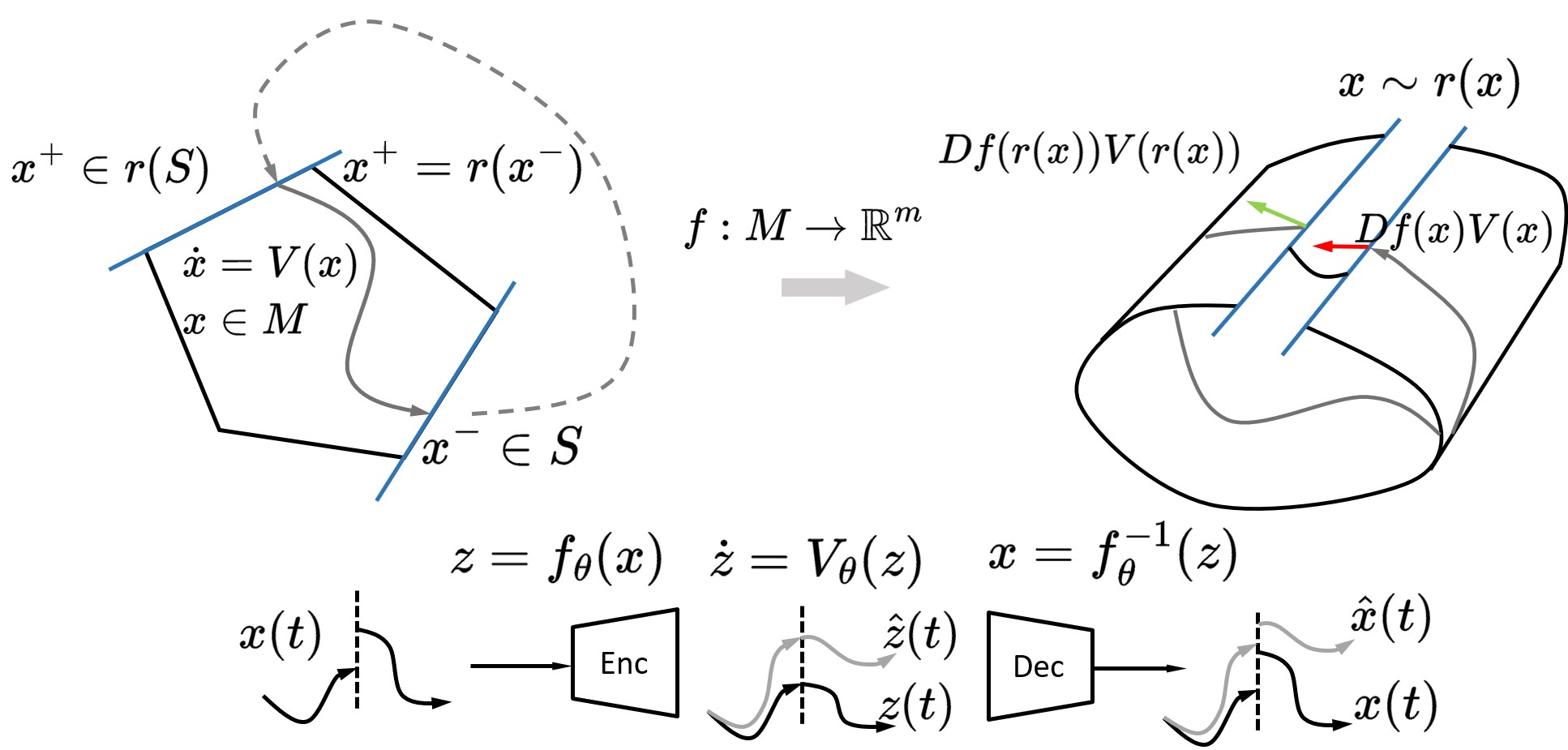}
    \caption{
    We proved that the $n-$dimensional \emph{discontinuous} flow of a hybrid system can be embedded into a latent space equipped with an $m-$dimensional \emph{continuous} extrinsic vector field when $m>2n$. The latent embedding can be learned by the proposed latent ODE framework \method. 
    }

    \label{fig:teaser}
    \vspace{-6mm}
\end{figure}

To learn hybrid systems from data, traditional methods \cite{poli2021neural, liu2025discrete} partition the trajectories into different segments and learn the dynamics in each mode. However, these methods require mode selection that is combinatorially complicated. On the other hand, the event function-based method \cite{chen2020learning} tries to differentiate through the state resets, which, however, suffers from ill-conditioning and bad initializations. 

A more recent structural alternative is to represent the hybrid dynamics as continuous flows. The hybrid system theory~\cite{simic2005towards} suggests that the state reset functions induce an equivalence relationship to \emph{glue} the partitioned state space into a continuous latent manifold. Furthermore, the glued manifold can be reconstructed from time-series data in our previous \methodlong (CHyLL) algorithm ~\cite{teng2025chyll}, leveraging the Whitney Embedding Theorem~\cite{hirsch2012differential}. 



However, the results in~\cite{simic2005towards, teng2025chyll} do not suggest that the \emph{latent vector field} is continuous. To further improve the differentiability, we pose the question: \textbf{\emph{``Do hybrid systems admit provably continuous latent embedding that induces a continuous vector field?''}}. Answering this question will lay a foundation to make differentiable optimization well-posed for hybrid system learning. As in \Cref{fig:teaser}, we make the following contributions:

\textbf{Contribution: }The \emph{\textbf{theoretical}} contribution is to prove that an $n$-dimensional hybrid system can be embedded into an $m$-dimensional Euclidean space equipped with a continuous vector field on its embedded image as long as $m>2n$. This theorem suggests that an \emph{intrinsically} discontinuous hybrid system admits an \emph{extrinsically} continuous representation that is well-posed for differentiable optimization. Based on this theorem, the \emph{\textbf{algorithmic}} contribution is a latent Neural ODE framework for learning hybrid systems from time series data. The experiments and ablations show that the consistency loss in both the latent and the observation space is the key to accurately recovering the flow of hybrid systems with varying topologies. The implementation is included in~\url{https://github.com/SangliTeng/Continuous-Hybrid-System-Learning}. 


\section{Related Work}

Hybrid systems can model a wide range of autonomous systems, such as legged locomotion~\cite{westervelt2003hybrid, westervelt2018feedback}, task and motion planning~\cite{garrett2021integrated}, and serve as an ideal abstraction for verification of safety \cite{ames2019control} or formal guarantees specified by temporal logic~\cite{leung2023backpropagation, 10161125}. Compared to conventional dynamical systems that can be represented by a single ordinary differential equation, hybrid systems are governed by both continuous-time ODE and discrete-time state transitions, which makes it expressive but more challenging for differentiable optimizations.

The Neural ODE ~\citep{chen2018neural} is a standard tool to learn the underlying vector field of the time-series observations. Though the universal approximation theorem suggests that any \emph{continuous} function on a compact domain can be uniformly approximated by neural networks (Lipschitz continuous)~\citep{kidger2020universal, cybenko1989approximation, hornik1991approximation}, the uniform approximation is not possible for discontinuous functions. However, the instantaneous state transitions in hybrid systems can only be represented by an impulse or discontinuous function, which is not well-posed for differentiable optimization. 
To mitigate this issue, the Neural Event ODE~\citep{chen2020learning} differentiates the reset and event functions in an event-based simulation. However, a randomly initialized neural event function is generally ill-conditioned, making the simulation difficult to proceed. More recent work, such as~\citep{liu2025discrete} and~\citep{poli2021neural} admits a mode selector to assign each mode a distinct continuous dynamics. However, this line of methods suffers from the combinatorial complexity in mode selections, and the number of modes may be unknown.

Other than learning the hybrid systems in the state space, the topology of hybrid systems indicates that they can be formulated as a continuous manifold~\cite{simic2005towards}. Building on this theorem, \cite{teng2025chyll} represents the hybrid systems as a latent flow leveraging the Whitney Embedding Theorem~\cite{hirsch2012differential} to obtain a singularity-free representation by increasing the latent dimension and a few geometry-based inductive biases.  

In this work, we not only inherit continuity of the latent space of hybrid systems through its topology~\citep{simic2005towards} and the Whitney embedding theorem~\cite{hirsch2012differential, teng2025chyll}, but further prove that the latent vector field can be continuous. Our main result builds on the transversality conditions \cite{abraham1967transversal}, which are powerful tools for proving the generic properties of dynamical systems.

\section{Preliminary}
In this section, we discuss the preliminaries of differential geometry and a geometric description of hybrid systems. 
\subsection{Differential Geometry}
Consider a finite-dimensional smooth manifold \( M \). The tangent space at a point \( x \in M \) is denoted by \( T_x M \). The tangent bundle \( TM := \bigcup_{x \in M} T_x M \) is the disjoint union of tangent spaces. A smooth vector field is a map $V : M \rightarrow TM$ such that \( V(x) \in T_x M \) for all \( x \in M \). The set of all smooth vector fields on \( M \) is denoted by \( \mathfrak{X}(M) \). 



For any smooth curve $c:(-\varepsilon,\varepsilon)\to M$
satisfying $c(0)=x$ and $\dot c(0)\in T_x M$, the tangent map
$Df(x) : T_x M \to T_{f(x)} N$ of the smooth map $f : M \to N$ is defined by:
$Df(x)\,\dot c(0)
\;=\;
\left.\frac{d}{dt}\big(f\circ c\big)(t)\right|_{t=0}.$
For $f : M \to N$ and $g : G \to M$ with $G$ a smooth manifold, the tangent map of
$f \circ g : G \to N$ satisfies the chain rule
$D(f \circ g)(x)
\;=\;
Df\big(g(x)\big)\circ Dg(x).$

We introduce two nondegenerate property for $f:M\rightarrow N$.
\begin{definition}[Immersion~\cite{hirsch2012differential}]
    $f$ is an immersion if the tangent map $Df:T_xM \rightarrow T_{f(x)}N$ is an injective function, or equivalently, $\operatorname{rank} T_xf = \operatorname{dim} M$. 
\end{definition}

\begin{definition}[Embedding~\cite{hirsch2012differential}]
    $f$ is an embedding if $f$ is an immersion and $f$ maps $M$ homeomorphically\footnote{A function $f$ is a homeomorphism if it is bijective, continuous, and its inverse $f^{-1}$ is also continuous.} onto its image. 
\end{definition}
For a manifold with boundaries, we can conduct our analysis in the collar coordinates:
\begin{theorem}[~\cite{hirsch2012differential}] For a manifold $M$ with boundary $S \subset M$, a submanifold of $M$ with co-dimension $1$, there exists a collar embedding such that $x$ specifies the position on the boundary and $t$ denotes the inward components of the collar that points to the \emph{interior} of $M$:
\begin{equation}
    \begin{aligned}
\kappa_S(x, t) &: S \times[0, \varepsilon) \rightarrow M\\
\end{aligned}
\end{equation}
with $\kappa_S(x, 0) =x, \forall x \in S$.
\label{thm:collar}
\end{theorem}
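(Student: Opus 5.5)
The statement to prove is the collar neighbourhood theorem (\Cref{thm:collar}). My plan is the classical argument: build an inward-pointing vector field near $S$ and use its flow to produce the collar.

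\textbf{Step 1: a local inward field.} Around each point $p\in S$, choose a boundary chart $\phi:U\to \mathbb{R}^{n-1}\times[0,\infty)$ sending $U\cap S$ into $\mathbb{R}^{n-1}\times\{0\}$. On $U$, set $X_U:=\phi^*(\partial/\partial x_n)$. This field is smooth and points strictly into the interior along $U\cap S$.

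\textbf{Step 2: globalize.} Cover $S$ by such charts and add $M\setminus S$ to the cover. Take a smooth partition of unity $\{\rho_U\}$ subordinate to this cover and define $X:=\sum_U \rho_U X_U$, with the interior member contributing zero. Inward-pointing vectors form a convex cone in each $T_pM$ for $p\in S$. Hence $X$ is a smooth field that is inward-pointing along all of $S$.

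\textbf{Step 3: flow and collar map.} Let $\Phi_t$ be the flow of $X$. Because $X$ points inward on $S$, forward trajectories starting at $x\in S$ exist for some positive time. First take $S$ compact; the noncompact case is handled below. Then there is a uniform $\varepsilon>0$ such that $\Phi_t(x)$ is defined for all $(x,t)\in S\times[0,\varepsilon)$. Define $\kappa_S(x,t):=\Phi_t(x)$, so that $\kappa_S(x,0)=x$. At $t=0$ we have $D\kappa_S(x,0)(v,\tau)=v+\tau X(x)$. This map is an isomorphism $T_xS\oplus\mathbb{R}\to T_xM$ because $X(x)\notin T_xS$. By the inverse function theorem for manifolds with boundary, $\kappa_S$ is a local diffeomorphism near $S\times\{0\}$.

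\textbf{Step 4: injectivity.} This is the main obstacle. Injectivity on $S\times\{0\}$ is immediate, and local injectivity comes from Step 3. Upgrading to injectivity on $S\times[0,\varepsilon)$ uses the standard lemma that a local homeomorphism which is injective on a compact set is injective on a neighbourhood of it. If $S$ is not compact, I would replace the constant $\varepsilon$ by a positive smooth function $\varepsilon(x)$. This function is obtained by applying the lemma on a compact exhaustion of $S$ and patching the local bounds with a partition of unity. One then reparametrizes $t\mapsto t\,\varepsilon(x)/\varepsilon_0$ to recover a product domain $S\times[0,\varepsilon_0)$. The result is an embedding $\kappa_S$ onto an open neighbourhood of $S$ with the stated properties.
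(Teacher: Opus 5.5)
The paper gives no proof of this statement; it quotes the result from Hirsch. Your Steps 1--3 are the standard argument behind that citation: an inward field glued by a partition of unity, its flow out of $S$, and the differential $(v,\tau)\mapsto v+\tau X(x)$ at $t=0$. For compact $S$, which is the only case the paper uses, your Step 4 is also sound. The lemma gives injectivity on some neighbourhood of $S\times\{0\}$, and the tube lemma lets you shrink $\varepsilon$ so that $S\times[0,\varepsilon)$ lies inside that neighbourhood.

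The noncompact extension, however, does not work as sketched. Injectivity is a two-point condition. Bounds obtained separately on the pieces $K_j$ of a compact exhaustion and then patched with a partition of unity do not exclude a collision $\kappa_S(x,t)=\kappa_S(y,s)$ with $x$ and $y$ over different pieces. The height $t<\varepsilon(x)$ allowed near $x$ can exceed the bound the lemma certifies on a larger compact set containing both points. Fixing this along your route would require extra separation conditions on $\varepsilon$.

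The cleaner route is to get injectivity from the flow itself. Forward orbits of a field that is inward-pointing along $S=\partial M$ stay in $\operatorname{Int}M$ for all positive times. At a first return time the normal coordinate would have nonpositive derivative, which contradicts inwardness. So if $\Phi_t(x)=\Phi_s(y)$ with $x,y\in S$ and $t\ge s$, uniqueness of integral curves gives $\Phi_{t-s}(x)=y\in S$. This forces $t=s$ and then $x=y$. Moreover, $D\kappa_S(x,t)(v,\tau)=D\Phi_t(x)\bigl(v+\tau X(x)\bigr)$ is invertible for every $t$. Hence $\kappa_S$ is an injective local diffeomorphism on its whole domain of definition. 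It then suffices to pick any smooth positive $\varepsilon(x)$ below the existence time and rescale as you propose, with no compactness lemma needed.
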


\subsection{Hybrid Systems}
We consider a geometric description of the hybrid system~\citep{clark2023invariant, simic2005towards} as shown in \Cref{fig:teaser}.
\begin{definition}[Hybrid System]
    A hybrid system is a 4-tuple $\mathcal{H}=(M, S, V, r)$ with the following components:
        
        (State Space) $M$ is an $n$-dim smooth manifold. 

        (Guard) $S \subset \partial M$ is a $(n-1)$-dim smooth submanifold. 

        (Dynamics) $V: M \rightarrow TM$ is a smooth vector field.
        
        (Reset) $r$: $S \rightarrow  r(S) \subset \partial M$ a diffeomorphism onto its image and $r(S)$ is an embedded submanifold.
\end{definition}
The dynamics of $\mathcal{H}$ can be described as an event ODE:
\begin{equation}
\label{eq:H-dyn}
\left\{
\begin{aligned}
\dot{x} &= V(x), & x \notin S,\\
x^{+} &= r\!\left(x^{-}\right), & x^{-}\in S.
\end{aligned}
\right.\tag{$\mathcal{H}$-Dynamics}
\end{equation}
We assume $\mathcal{H}$ has the following properties:
\begin{assumption}
\label{assumption:compact}
$M$ and $S$ are compact. 
\end{assumption}
\begin{assumption}
\label{assumption:traverse}
$V(x)$ points \emph{outward} along the interior of $x \in S$ and \emph{inward} along the interior of $x \in r(S)$.
\end{assumption}
\begin{assumption}
\label{assumption:instant}
    $S\cap r(S) = \emptyset$ and the intersection of their closure $\overline{S}\cap \overline{r(S)}$ has co-dimension at least 2.
\end{assumption}
\Cref{assumption:traverse} ensures the trajectories will traverse the boundary, and \Cref{assumption:instant} avoids repeated state resets from happening instantaneously.  

Though $S$ and $r(S)$ are different and possibly disconnected, $r(\cdot)$ is a diffeomorphism that defines an equivalence relationship that ``glues'' $S$ and $r(S)$ to reformulate the state space $M$ as a continuous manifold as indicated in \cref{fig:teaser}:
\begin{theorem}[Hybrifold ~\citep{simic2005towards}]
    Let $\sim$ be the equivalence relation by $x \sim r(x), \forall x \in S$. We collapse the equivalence class by $\sim$ to a point to obtain the piecewise smooth topological manifold, namely, the hybrifold:
\begin{equation}
\label{eq:hybrifold}
    M_{\mathcal{H}}=M / \sim, \tag{Hybrifold}    
\end{equation}
\end{theorem}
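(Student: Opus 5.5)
The plan is to show that the quotient $M_{\mathcal{H}} = M/\!\sim$ is a topological manifold of dimension $n$, without boundary along the glued region, and that it carries a smooth structure compatible with $M$ away from $S\cup r(S)$ (piecewise smooth across the seam). The statement is local. Away from $S\cup r(S)$ the quotient map $\pi: M\to M_{\mathcal{H}}$ is injective and open, so interior charts of $M$ descend unchanged. Boundary points of $M$ not in $\overline{S}\cup\overline{r(S)}$ stay boundary points, and we treat them as the usual boundary of $M_{\mathcal{H}}$. All the work is therefore near the seam $\pi(S)=\pi(r(S))$.

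First I would check that $M_{\mathcal{H}}$ is Hausdorff and second countable. Second countability passes to the quotient because $\pi$ is a closed map. To see closedness, note that $M$ is compact by \Cref{assumption:compact}, and the equivalence relation $R=\Delta\cup\operatorname{graph}(r)\cup\operatorname{graph}(r^{-1})$ is closed in $M\times M$ when $r$ extends continuously to $\overline S$. A closed equivalence relation on a compact Hausdorff space gives a Hausdorff quotient. \Cref{assumption:instant} ($S\cap r(S)=\emptyset$) makes the equivalence classes exactly $\{x,r(x)\}$ for $x\in S$, with no chains of length greater than two, so the classes are finite.

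Second, I would build charts near a seam point $p=\pi(x)=\pi(r(x))$ with $x\in S$ in the interior of the guard. Apply \Cref{thm:collar} to get collars $\kappa_S:S\times[0,\varepsilon)\to M$ and $\kappa_{r(S)}:r(S)\times[0,\varepsilon)\to M$, which we may take disjoint by \Cref{assumption:instant}. Define
\[
\phi(y,t)=\begin{cases}\kappa_{r(S)}(r(y),t), & t\ge 0,\\ \kappa_S(y,-t), & t<0,\end{cases}\qquad y\in S,\ t\in(-\varepsilon,\varepsilon).
\]
By construction $\phi$ respects $\sim$ at $t=0$, so it descends to a homeomorphism from $S\times(-\varepsilon,\varepsilon)$ onto an open neighborhood of the seam in $M_{\mathcal{H}}$. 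Composing with a chart of $S$ gives an $\mathbb{R}^n$ chart at $p$. Transition maps with interior charts of $M$ are smooth on each half $t\gtrless0$ and continuous across $t=0$, which is exactly what "piecewise smooth" requires.

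A more intrinsic choice of collars would use the flow of $V$: $\kappa_{r(S)}(z,t)=\Phi^V_t(z)$ and $\kappa_S(y,s)=\Phi^V_{-s}(y)$. These are valid collars by the transversality in \Cref{assumption:traverse}. With this choice the glued flow is continuous across the seam, which is the property used later.

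The main obstacle is the points of $\overline S\cap\overline{r(S)}$ and of $\partial S$, where the two collars meet or the guard ends. There the local picture is a corner or a higher-codimension stratum, and I am not sure a genuine manifold chart exists at all. I would first try to set these points aside. Since they have codimension at least $2$ by \Cref{assumption:instant}, I would prove the manifold structure on the complement, which is open and dense, and treat the hybrifold as a stratified space whose singular set has codimension $\ge2$. That is the sense in which \citep{simic2005towards} states the result. If a full chart is needed at such points, I would straighten the corner with the two collars to model a neighborhood on a quadrant glued along one edge, and check that the gluing yields a half-space. I expect this corner analysis to be the delicate part of the argument.
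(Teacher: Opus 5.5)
The paper gives no argument for this statement; it imports the result from \citep{simic2005towards}, so your chart construction has to carry the whole proof.

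\textbf{What works.} Your argument holds off the seam, and at $\pi(x)$ for $x$ in the manifold interior $S^\circ$ of $S$. On $S^\circ\times(-\varepsilon,\varepsilon)$ the map $\pi\circ\phi$ is a continuous bijection onto its image. For $O$ open, the saturation $\pi^{-1}(\pi(\phi(O)))$ is a union of collar images of relatively open subsets of $S^\circ\times[0,\varepsilon)$ and $r(S^\circ)\times[0,\varepsilon)$. It is therefore open, so $\pi\circ\phi$ is a chart.

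\textbf{The gap.} You give no chart at $\pi(x)$ for $x\in\partial S$, or at $\overline S\cap\overline{r(S)}$. You say you are unsure one exists, and fall back to proving the manifold property on an open dense set and calling $M_{\mathcal H}$ a stratified space. Whatever the phrasing in the cited source, that is strictly weaker than the statement, which asserts that $M_{\mathcal H}$ is a topological manifold. Your final sentence points the right way but is not carried out. Relatedly, $\pi\circ\phi$ on all of $S\times(-\varepsilon,\varepsilon)$ is not onto an open set once $\partial S\neq\emptyset$. Every neighborhood of $\pi(x)$, $x\in\partial S$, contains images of collar points over $\partial M\setminus S$.

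\textbf{Closing it.} Both point sets are handled quickly under the paper's hypotheses.
\begin{enumerate}
\item By \Cref{assumption:compact}, $S$ and $r(S)$ are compact, hence closed. So \Cref{assumption:instant} gives $\overline S\cap\overline{r(S)}=S\cap r(S)=\emptyset$, and the corner stratum you set aside is empty.
\item At $x\in\partial S$, use the collar $\kappa$ of $\partial M$ rather than flow collars. \Cref{assumption:traverse} only constrains $V$ on the interior of $S$, and nothing prevents $V$ from being tangent to $\partial M$ along $\partial S$.
\item Take coordinates $(u,w,t)\in\mathbb{R}^{n-2}\times\mathbb{R}\times[0,\infty)$ near $x$ with $S=\{w\le 0,\ t=0\}$. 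Transport them to $r(x)$ via $\kappa$ and a local diffeomorphism of $\partial M$ extending $r$, so that $r$ is the identity in $(u,w)$.
\item $S$ and $r(S)$ are closed and miss $r(x)$ and $x$ respectively. So small matched boxes around $x$ and $r(x)$ form a saturated open set, and a neighborhood of $\pi(x)$ is $\mathbb{R}^{n-2}\times Q$. Here $Q$ is two closed half-planes $\{t\ge 0\}$ glued along the ray $\{w\le 0,\ t=0\}$.
\item On the first copy take $(w,t)\mapsto(w+t,t)$ for $w\ge 0$ and $(w,t)\mapsto(t,t-w)$ for $w\le 0$. On the second copy take the mirror images $(w,t)\mapsto(-w-t,t)$ for $w\ge 0$ and $(w,t)\mapsto(-t,t-w)$ for $w\le 0$. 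These agree on the glued ray and assemble into a piecewise linear homeomorphism of $Q$ onto the half-plane $\{b\ge 0\}\subset\mathbb{R}^2$.
\end{enumerate}
Hence $\pi(x)$ is a boundary point of $M_{\mathcal H}$ with a piecewise smooth chart, and no stratified weakening is needed.

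\textbf{Two smaller repairs.} First, compactness of $\operatorname{graph}(r)$, not extendability of $r$ to $\overline S$, is what makes your $R$ closed. Second, closedness of $\pi$ alone does not give second countability: collapsing $\mathbb{Z}\subset\mathbb{R}$ is a closed map with a non-first-countable quotient. Use instead that the fibers are finite, or that a Hausdorff quotient of a compact metric space is metrizable.
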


\subsection{Generic Property and Transversal Maps}

Denote the $k$-time continuous functions from a finite-dimensional manifold $M$ to $N$ as $C^k(M, N)$. The main goal of this work is to explore if a certain property is held for the majority of elements in $C^k(M, N)$. Thus, we explore the ``generic property'', and informally, we have:
\begin{remark}[Generic property]
A property is generic if it holds in a ``large set''~\footnote{A residual set, i.e., the complement of a countable union of nowhere dense sets.} that contains ``most'' of the elements in a topological space. For example, the set of irrational numbers $\mathbb{R}\setminus \mathbb{Q}$ contains ``most'' of the elements in $\mathbb{R}$. 
\end{remark}
For more regularity conditions of $C^k(M, \mathbb{R}^m)$ space, please refer to \Cref{sec:Ck-map}, \ref{sec:banach}, and \ref{sec:baire}. 

To study the generic property of $f \in C^k(M, N)$, we leverage the tools related to the transversal map:
\begin{definition}[Transversality Condition~\cite{hirsch2012differential}]
Let $M,N$ be smooth finite-dimensional manifolds and $Z\subset N$ an embedded submanifold. A $C^k$($k\ge1$) map $f:M\to N$ is
\emph{transverse} to $Z$, written $f\pitchfork Z$, if for every $x\in f^{-1}(Z)$,
\begin{equation}
\label{eq:transverality}
\operatorname{Im}(df(x))+T_{f(x)}Z = T_{f(x)}N. \tag{Transversality}
\end{equation}
\end{definition}
\begin{theorem}[Preimage Theorem under Transversality~\cite{abraham1967transversal}]
\label{thm:preimage}
If $f\pitchfork Z$, then $f^{-1}(Z)$ is an embedded submanifold of $M$ and with dimension:
\begin{equation}
\dim f^{-1}(Z)=\dim M-\operatorname{codim}_N Z.
\end{equation}
\end{theorem}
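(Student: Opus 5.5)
The Preimage Theorem under Transversality (\Cref{thm:preimage}) is a local statement, so I would prove it chart by chart by reducing it to the regular value theorem (the implicit function theorem). Let $p=\dim M$, $q=\dim N$ and $k=\operatorname{codim}_N Z$. Fix $x_0\in f^{-1}(Z)$ and set $y_0=f(x_0)$. Since $Z$ is an embedded submanifold, there is a neighborhood $W$ of $y_0$ in $N$ and a $C^\infty$ submersion $\phi:W\to\mathbb{R}^{k}$ with $Z\cap W=\phi^{-1}(0)$ and $\ker d\phi(y)=T_yZ$ for $y\in Z\cap W$. One obtains $\phi$ from a slice chart of $Z$ by keeping the last $k$ coordinates.

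On the open set $U=f^{-1}(W)\ni x_0$, consider $g=\phi\circ f:U\to\mathbb{R}^k$. This map is $C^k$ with $k\ge 1$, and $f^{-1}(Z)\cap U=g^{-1}(0)$.

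The key step is to check that $0$ is a regular value of $g$. Take $x\in g^{-1}(0)$. By the chain rule, $dg(x)=d\phi(f(x))\circ df(x)$. The map $d\phi(f(x))$ is surjective onto $\mathbb{R}^k$ and vanishes on $T_{f(x)}Z$. Applying it to the transversality identity $\operatorname{Im}(df(x))+T_{f(x)}Z=T_{f(x)}N$ gives
\[
\mathbb{R}^k = d\phi(f(x))\bigl(T_{f(x)}N\bigr)=d\phi(f(x))\bigl(\operatorname{Im} df(x)\bigr)=\operatorname{Im} dg(x).
\]
So $dg(x)$ has rank $k$.

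Next, apply the implicit function theorem (the regular value theorem) to $g$ near $x_0$. Choose coordinates on $M$ around $x_0$ so that the last $k$ columns of the Jacobian of $g$ form an invertible block. Then $g^{-1}(0)$ is, locally, the graph of a $C^k$ function of the first $p-k$ coordinates. This yields a slice chart in which $f^{-1}(Z)$ is the coordinate subspace $\mathbb{R}^{p-k}\times\{0\}$. Hence $f^{-1}(Z)$ is locally an embedded submanifold of dimension $p-k=\dim M-\operatorname{codim}_N Z$. Since such slice charts exist around every point of $f^{-1}(Z)$, and the dimension is the same everywhere, $f^{-1}(Z)$ is an embedded submanifold of $M$ of that dimension.

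The only point needing care is the local defining submersion $\phi$ with $\ker d\phi=TZ$. This is the step that uses the embeddedness of $Z$: an immersed but non-embedded $Z$ would not admit such a $\phi$ near accumulation points. I would obtain $\phi$ directly from the slice-chart characterization of embedded submanifolds. If $\partial M\neq\emptyset$, as is the case for the hybrid state space, the same argument applies at interior points. At boundary points one would additionally need $f|_{\partial M}\pitchfork Z$; alternatively, as in the paper, one works in the collar coordinates of \Cref{thm:collar}, extends $f$ slightly past the boundary, and restricts back.
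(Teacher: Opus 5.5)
Your main argument is correct and is the standard one. The paper gives no proof of \Cref{thm:preimage}; it simply cites \cite{abraham1967transversal}, where the result is obtained as you do it: compose $f$ with a local defining submersion $\phi$ of $Z$, then apply the implicit function theorem to $\phi\circ f$. For $\partial M=\emptyset$ nothing is missing, apart from a notational clash: you use $k$ both for $\operatorname{codim}_N Z$ and for the differentiability class.

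Your closing remark on boundaries needs a correction. You are right that at points of $\partial M$ one needs the extra hypothesis $f|_{\partial M}\pitchfork Z$. However, the suggested alternative of extending $f$ past the boundary and restricting back does not remove that need. The paper also does not do this; its collars are used only to construct the encoder.

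A counterexample: take $M=\{(x,y)\in\mathbb{R}^2: y\ge 0\}$ and $f(x,y)=y+x^2$. Then $df=(2x,1)\neq 0$ everywhere, so $f\pitchfork\{0\}$. Yet $f^{-1}(0)=\{(0,0)\}$ is a point, not the predicted curve. Extending $f$ to $\mathbb{R}^2$ gives the parabola $y=-x^2$, whose intersection with $M$ is again that single point. So once $\partial M\neq\emptyset$ and $\dim M\ge\operatorname{codim}_N Z$, the dimension formula genuinely fails without the boundary condition.

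What covers every application in the paper is a case you left implicit. All uses have $\dim M<\operatorname{codim}_N Z$. In that case your own computation shows that $dg(x)$ would need rank $\operatorname{codim}_N Z>\dim M$ at any $x\in f^{-1}(Z)$, which is impossible, so $f^{-1}(Z)=\emptyset$. Since $T_xM$ is full-dimensional at boundary and corner points as well (e.g.\ on the products of collars $[0,\epsilon)$ in the proof of \Cref{thm:main}), this emptiness conclusion needs no boundary hypothesis. You should state it explicitly.
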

In our proof shown in the following sections, we construct the set we wish to avoid as the manifold $Z$, and we let $\dim f^{-1}(Z) <0$ to ensure the image of $f$ is empty (bad things never happen).  
\begin{theorem}[Parametric Transversality Theorem, Section 19.1 ~\cite{abraham1967transversal} ]
\label{thm:parametric}
Let $M, N$ be $C^k$ finite-dimensional manifolds, and let $Z \subset N$ be a closed $C^k$ submanifold. 
Let $\mathcal P$ be a $C^k$ Banach manifold (e.g., $C^k(M, N):M\rightarrow N$). Define the total map $\Gamma: M \times \mathcal P \to N$ and the associated slice $\Gamma(\cdot\ |\ p): M \rightarrow N, \Gamma(x\ |\ p) := \Gamma(x, p)$. 
Assume the following conditions hold:
\begin{enumerate}
    \item[(i)] The total map $\Gamma$ is of class $C^k$; 
    \item[(ii)] $\Gamma \pitchfork Z$ (the total map is transversal to $Z$);
    \item[(iii)] $r > \max(0, \dim M - \operatorname{codim}_N Z)$.
\end{enumerate}
Then the set of parameters $\mathcal P_{Z} = \{\, p \in \mathcal P : \Gamma(\cdot\ |\ p) \pitchfork Z \,\}$ is residual (and hence dense) in $\mathcal P$. 
\end{theorem}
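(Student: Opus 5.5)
The plan is the standard reduction to the Sard--Smale theorem: consider the ``universal'' zero set of the total map, project it onto the parameter space, and identify the parameters $p$ for which the slice is transverse with the regular values of this projection. Throughout, the integer $r$ in (iii) is read as the differentiability class $k$ of the data, so (iii) means $k>\max(0,\dim M-\operatorname{codim}_N Z)$.

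\textbf{Step 1: the universal preimage.} Set $W:=\Gamma^{-1}(Z)\subset M\times\mathcal P$.
\begin{itemize}
\item By (ii) and the Banach-manifold version of \Cref{thm:preimage}, $W$ is a $C^k$ submanifold of $M\times\mathcal P$. Splitting of the relevant subspaces is automatic, because $Z\subset N$ is a closed submanifold of a finite-dimensional manifold.
\item Its codimension is $\operatorname{codim}_N Z$. At $(x,p)\in W$, its tangent space is $T_{(x,p)}W=\{(\xi,\eta): D\Gamma(x,p)(\xi,\eta)\in T_{\Gamma(x,p)}Z\}$.
\item Let $\pi:W\to\mathcal P$ be the restriction of the projection $(x,p)\mapsto p$.
\end{itemize}

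\textbf{Step 2: the linear-algebra lemma.} The key claim is that $p$ is a regular value of $\pi$ if and only if $\Gamma(\cdot\,|\,p)\pitchfork Z$.

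Fix $(x,p)\in W$ and write $D\Gamma=(D_x\Gamma,D_p\Gamma)$. Let $Q:T_{\Gamma(x,p)}N\to T_{\Gamma(x,p)}N/T_{\Gamma(x,p)}Z$ be the quotient map, and set $A:=Q\circ D_x\Gamma$ and $B:=Q\circ D_p\Gamma$. Then:
\begin{itemize}
\item Hypothesis (ii) says $(A,B)$ is jointly onto.
\item $T W=\ker(A,B)$ and $D\pi(\xi,\eta)=\eta$.
\item $\ker D\pi\cong\ker A$ and $\operatorname{coker}D\pi\cong\operatorname{coker}A$. The second isomorphism comes from a diagram chase: given $\eta$, surjectivity of $(A,B)$ lets one solve $A\xi=-B\eta$ exactly when $B\eta\in\operatorname{Im}A$ modulo the relevant correction.
\end{itemize}
Consequently:
\begin{itemize}
\item $D\pi$ is onto iff $A$ is onto, and $A$ onto is precisely \eqref{eq:transverality} for the slice at $x$.
\item $\pi$ is Fredholm with index $\dim\ker A-\dim\operatorname{coker}A=\dim M-\operatorname{codim}_N Z$, since $A$ is a map between finite-dimensional spaces.
\end{itemize}

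\textbf{Step 3: apply Sard--Smale.} Sard--Smale states that for a $C^k$ Fredholm map of index $\ell$ with $k>\max(0,\ell)$, between separable (second countable) Banach manifolds, the regular values form a residual set.
\begin{itemize}
\item Hypothesis (iii) supplies exactly the required smoothness.
\item Second countability: for $\mathcal P=C^k(M,N)$ with $M$ compact, this holds because $C^k$ function spaces on compact manifolds are separable. $W$ inherits this property as a subspace of $M\times\mathcal P$.
\item Combining with Step 2, $\mathcal P_Z$ equals the set of regular values of $\pi$, hence it is residual. Density then follows from the Baire category theorem, since $\mathcal P$ is completely metrizable.
\end{itemize}

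\textbf{Main obstacle.} I expect the delicate part to be the hypotheses needed for Sard--Smale rather than the algebra.
\begin{itemize}
\item Sard--Smale is local, so one covers $W$ by countably many charts on which $\pi$ is Fredholm with a local normal form. One then shows that the critical values coming from each chart, restricted to closed pieces, are closed and nowhere dense, using properness of the Fredholm map on small closed neighborhoods. Here the closedness of $Z$ and the finite-dimensionality of $M$ matter.
\item If full Sard--Smale is unavailable, a fallback is a finite-dimensional reduction. Locally, choose a finite-dimensional family of perturbations $\{p+\sum_i s_i h_i\}$ that already achieves transversality of the total map. Apply classical Sard to the resulting finite-dimensional projection to get openness and density of good parameters near each $p$. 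Then take a countable intersection over a countable cover.
\end{itemize}
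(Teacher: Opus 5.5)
Your proposal is correct and is the standard proof of this cited result. The paper gives no argument of its own and simply quotes Abraham--Robbin, Section 19.1, whose proof is your reduction: form the universal preimage $W=\Gamma^{-1}(Z)$, show that $\pi:W\to\mathcal P$ is a $C^k$ Fredholm map of index $\dim M-\operatorname{codim}_N Z$ whose regular values are exactly the $p$ with $\Gamma(\cdot\,|\,p)\pitchfork Z$, and apply Sard--Smale, with second countability of $\mathcal P$ supplied by the paper's definition of a Banach manifold. Reading $r$ in (iii) as $k$ is the intended interpretation; the one quibble is that closedness of $Z$ is not used for residuality (it is needed only for openness of $\mathcal P_Z$ when $M$ is compact), so your closing remark overstates its role.
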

As a consequence of \Cref{thm:parametric}, one can show that the following embedding exists generically:
\begin{theorem}[Whitney Embedding Theorem (weak) ~\citep{hirsch2012differential}]
\label{theorem:whitney}
 Any $C^k$-manifold  $M$ $(k\ge 1)$ of dimension $n$ can be embedded into $\mathbb{R}^{m}$ if $m > 2n$. 
\end{theorem}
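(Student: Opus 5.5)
The plan is the standard transversality proof of the weak Whitney theorem: show that generic $f\in C^k(M,\mathbb{R}^m)$ is an injective immersion, then use compactness of $M$ to upgrade this to an embedding. Let $\mathcal P = C^k(M,\mathbb{R}^m)$ with the $C^k$ Banach-space topology (valid because $M$ is compact). For fixed $f$, the affine family $f_{A,b}(x)=f(x)+A\,\phi(x)+b$, with $\phi$ a fixed finite embedding built from coordinate charts and a partition of unity, gives enough parameter directions. So it suffices to verify the surjectivity required in \Cref{thm:parametric}(ii) using these directions, which span all of $\mathbb{R}^m$ pointwise and, through $A\,D\phi(x)$, all linear maps $T_xM\to\mathbb{R}^m$.

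\textbf{Immersion.} Work on the unit tangent (sphere) bundle $SM$, which has dimension $2n-1$, and define $\Gamma_1:SM\times\mathcal P\to\mathbb{R}^m$ by $\Gamma_1((x,v),f)=Df(x)v$. Take $Z_1=\{0\}\subset\mathbb{R}^m$, which is closed with codimension $m$. Varying $f$ by $A\phi$ changes $Df(x)v$ by $A\,D\phi(x)v$. Since $D\phi(x)v\neq 0$, this variation can be any vector in $\mathbb{R}^m$, so $\Gamma_1\pitchfork Z_1$. Also $\Gamma_1$ is $C^{k-1}$, which is acceptable after raising $k$ or working with smooth maps. By \Cref{thm:parametric}, for residual $f$ the slice $\Gamma_1(\cdot\,|\,f)$ is transverse to $Z_1$. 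By \Cref{thm:preimage}, the preimage then has dimension $2n-1-m<0$, so it is empty and $Df(x)$ is injective everywhere. This needs only $m\ge 2n$.

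\textbf{Injectivity.} Let $\Delta$ be the diagonal. Define $\Gamma_2:(M\times M\setminus\Delta)\times\mathcal P\to\mathbb{R}^m$ by $\Gamma_2((x,y),f)=f(x)-f(y)$, and take $Z_2=\{0\}$. For $x\neq y$, choose a bump function $\psi$ with $\psi(x)=1$ and $\psi(y)=0$. The variation $f\mapsto f+\psi c$ shifts $\Gamma_2$ by an arbitrary $c\in\mathbb{R}^m$, so $\Gamma_2\pitchfork Z_2$. Condition (iii) holds and the preimage dimension is $2n-m<0$ exactly when $m>2n$, so a residual set of $f$ is injective.

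\textbf{Conclusion.} A countable intersection of residual sets is residual, and $\mathcal P$ is Baire, so this set is nonempty (indeed dense). A continuous injection from a compact space into a Hausdorff space is a homeomorphism onto its image, so the injective immersion is an embedding. The main obstacle is that $M\times M\setminus\Delta$ is not compact, so \Cref{thm:parametric} must be applied carefully there. I would first cover it by countably many compact sets $K_j=\{(x,y):d(x,y)\ge 1/j\}$, and get an open dense good set for each $K_j$ by combining openness (compactness gives stability) with density from \Cref{thm:parametric}. Then I would intersect over $j$. Near the diagonal, the immersion property already gives local injectivity, so no extra condition is needed there.
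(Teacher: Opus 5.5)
\textbf{Verdict: genuine gap (scope).} Your argument is the standard transversality proof. That is the route the paper points to: it gives no proof of its own, only the citation to Hirsch and the remark that the result follows from \Cref{thm:parametric}. For compact $M$ with $k\ge 2$ your argument is correct. The problem is that the statement is for \emph{any} $C^k$ manifold, and you use compactness twice as if it were a hypothesis.

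\textbf{Where compactness enters.} You use it to make $C^k(M,\mathbb{R}^m)$ a Banach space, and in the last step to upgrade an injective immersion to an embedding. For noncompact $M$ both steps fail.
\begin{enumerate}
\item The norm $\|f\|_{C^k}$ need not be finite, and the strong Whitney topology is Baire but not Banach. So \Cref{thm:parametric} in its stated form does not apply directly.
\item More seriously, an injective immersion need not be an embedding. The map $t\mapsto(\sin 2t,\sin t)$ on $(-\pi,\pi)$ is an injective immersion onto a figure eight. It stays one after composing with any linear inclusion $\mathbb{R}^2\hookrightarrow\mathbb{R}^m$.
\end{enumerate}

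\textbf{The missing ingredient is properness.} Start from a proper map, e.g.\ one whose first coordinate is an exhaustion function. Proper maps form an open set in the strong topology. Show that injective immersions are dense in that open set, either by local perturbations over a countable locally finite cover or by multijet transversality. Then use that a proper injective immersion is a closed embedding. For the paper's actual use, embedding the compact guard $S$ under \Cref{assumption:compact}, your compact argument suffices. You should still state that restriction explicitly rather than assume it.

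\textbf{Smaller point: the case $k=1$.} For a $C^1$ manifold, the sphere bundle $SM$ is only a topological manifold and $\Gamma_1$ is only $C^0$. Conditions (i) and (iii) of \Cref{thm:parametric} still require at least $C^1$ here. ``Raising $k$'' is therefore not available on a $C^1$ manifold unless you first invoke Whitney's smoothing theorem: every $C^k$ structure with $k\ge 1$ contains a compatible $C^\infty$ structure (Hirsch, Ch.~2). A smooth embedding for that structure is then a $C^k$ embedding for the original one.

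\textbf{Smaller point: the $K_j$ step is unnecessary.} It is harmless but redundant. \Cref{thm:parametric} does not require a compact domain. Applying it directly on the open manifold $M\times M\setminus\Delta$ already gives a residual set of injective maps, and this covers near-diagonal pairs too. So your remark that the immersion property handles the diagonal is not needed. If you used only finitely many $K_j$, that remark would also require a uniform injectivity radius depending on $f$.
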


\section{Problem Formulation}
\begin{figure*}[t]
    \centering
    \includegraphics[width=.9\linewidth]{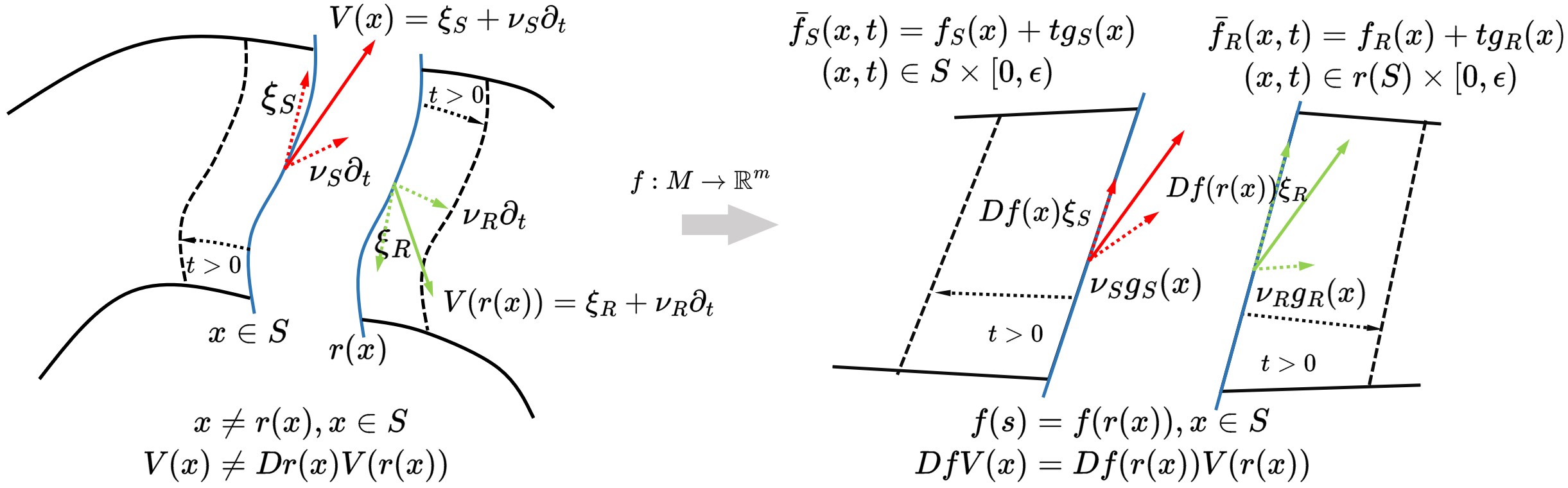}
    \caption{\textbf{Left: }the state and vector field at pre-impact state $x \in S$ and the post-impact state $r(x) \in r(S)$ are possibly mismatched as an \emph{intrinsic} property of $\mathcal{H}$. \textbf{Right: }by designing the embedding $f:M \rightarrow \mathbb{R}^m$, the \emph{extrinsic} representation satisfy \eqref{eq:value-compatible} and \eqref{eq:derivative-compatible}. The additional degree of freedom by $f$ is the key for the extrinsic representation to eliminate the discontinuity.}
    \label{fig:xspace2zspace}
\end{figure*}

We consider the problem of learning the hybrid system from time series data:
\begin{problem}
\label{prob:learn-H}
    Consider the flow of $\mathcal{H}$ recorded as dataset $\mathcal{X}:=\left\{ (t_0, x_0), (t_1, x_1), (t_2, x_2), \cdots, (t_T, x_T)\right\}$. Our goal is to recover the flow of $\mathcal{H}$ from $\mathcal{X}$.
\end{problem}
We note that due to the reset map $r$ on the guard surface $S$, the flow of $\mathcal{H}$ is discontinuous and extremely hard to differentiate. Although \Cref{theorem:whitney} suggests that \eqref{eq:hybrifold} employs a continuous representation to make the flow of $\mathcal{H}$ continuous in a learned latent space~\cite{teng2025chyll}, the vector field on \eqref{eq:hybrifold} is not guaranteed to be continuous. Thus, we ask: 
\begin{problem}
\label{prob:ct-H}
    Can we embed $\mathcal{H}$ in a latent space and characterize the flow of $\mathcal{H}$ by a continuous latent vector field?
\end{problem}
By doing so, we can convert $\mathcal{H}$ into a globally continuous structure, which will significantly improve its differentiability for machine learning. In this work, we prove that such a latent representation can be constructed by elements that \textbf{generically exist} in the space of functions.

\section{Main Result}

In this section, we leverage \Cref{thm:preimage} and \ref{thm:parametric} to show that whenever $m > 2n$, there generically exists a continuous function (encoder) $f: M \rightarrow \mathbb{R}^m$ to make the latent manifold and its induced vector field both continuous on the entire $M$. Formally, we have:
\begin{restatable}[Continuous Extrinsic Representation of Hybrid Systems]{theorem}{maintheorem}
\label{thm:main}
For a hybrid system $\mathcal{H}$ satisfying \Cref{assumption:compact}, \ref{assumption:traverse} and \ref{assumption:instant}, whenever $m > 2n$, there exists an $f \in C^k(M, \mathbb{R}^m)$ that satisfies:
\begin{equation}
\label{eq:value-compatible}
    f(x)=f(r(x)), \forall x\in S \tag{C-1}
\end{equation}
\begin{equation}
\label{eq:derivative-compatible}
    Df(x)V(x) = Df(r(x))V(r(x)), \forall x \in S \tag{C-2}
\end{equation}
\begin{equation}
\label{eq:f-embedding}
    \text{$f$ induces an embedding of $M_{\mathcal H}$ into $\mathbb{R}^m$}. \tag{C-3}
\end{equation}
Moreover, $f$ can be obtained by a generic choice within the admissible construction space.
\end{restatable}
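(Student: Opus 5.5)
The plan is to reduce the problem to a Whitney-type genericity argument inside an affine subspace of $C^k(M,\mathbb{R}^m)$ cut out by the linear constraints \eqref{eq:value-compatible} and \eqref{eq:derivative-compatible}. First I will show that this admissible space $\mathcal{A}=\{f\in C^k(M,\mathbb{R}^m): f \text{ satisfies } \eqref{eq:value-compatible},\eqref{eq:derivative-compatible}\}$ is nonempty and is a closed affine (hence Banach) submanifold. Nonemptiness comes from the collar coordinates of \Cref{thm:collar}. By \Cref{assumption:traverse}, $V$ is transverse to $S$ and $r(S)$, so the flow gives collars $S\times(-\varepsilon,0]\to M$ near $S$ and $r(S)\times[0,\varepsilon)\to M$ near $r(S)$. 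In these collars $V=\partial_t$. Identifying $(x,t)$ near $S$ with $(r(x),t)$ near $r(S)$ glues a neighborhood of $S$ to one of $r(S)$ into a $C^k$ two-sided collar $S\times(-\varepsilon,\varepsilon)$ of the hybrifold. Any $C^k$ function on this glued collar automatically satisfies (C-1), and it satisfies (C-2) because $Df\,V=\partial_t f$ is continuous across $t=0$. \Cref{assumption:instant} keeps the two collars disjoint away from a codimension-$\ge 2$ set, and compactness (\Cref{assumption:compact}) lets me shrink $\varepsilon$ uniformly. The upshot is that $M_{\mathcal H}$ carries a $C^k$ structure compatible with the flow, and $\mathcal{A}$ is exactly $C^k(M_{\mathcal H},\mathbb{R}^m)$ pulled back to $M$.

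Second, I will obtain \eqref{eq:f-embedding} generically in $\mathcal{A}$ by the standard route behind \Cref{theorem:whitney}, applying \Cref{thm:parametric} with parameter space $\mathcal P=\mathcal{A}$ in two steps.

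\emph{Immersion.} Take $\Gamma:(x,p)\mapsto j^1 p(x)$ and let $Z$ be the set of rank-deficient $1$-jets. This $Z$ has codimension $m-n+1$, which exceeds $n$ when $m\ge 2n$. Hence $\dim\Gamma(\cdot|p)^{-1}(Z)<0$, so generically $f$ is an immersion on the glued manifold.

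\emph{Injectivity.} Take $\Gamma:(x,y,p)\mapsto p(x)-p(y)$ on $(M_{\mathcal H}\times M_{\mathcal H})\setminus\Delta$ and $Z=\{0\}$, of codimension $m$. Since $m>2n$, the preimage is generically empty, so $f$ is injective. Compactness then upgrades the injective immersion to an embedding.

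For both steps, condition (ii) of \Cref{thm:parametric} (total-map transversality) is verified by perturbing $p$ with $p+\phi\,c$ and $p+\phi\,L$, where $\phi$ is a bump function and $c\in\mathbb{R}^m$, $L$ linear. The bump must be chosen in the glued collar chart, so that the perturbation stays in $\mathcal{A}$. Residual sets are intersected via Baire.

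The main obstacle is the gluing step: showing the glued collar chart is genuinely $C^k$ and that $\mathcal{A}$ is rich enough for local perturbations. The hard points are the corners $\overline S\cap\overline{r(S)}$ and $\partial S$, where collars from both sides meet or the flow-box degenerates. I would first handle the interiors of $S$ and $r(S)$, then use the codimension-$\ge 2$ hypothesis to argue that the bad set can be excluded. On that set I would either show that jet transversality still holds, or note that its dimension, at most $n-2$, leaves the dimension count intact. If smoothness across these corners fails, I would fall back to proving the embedding property on the complement and extending by continuity.
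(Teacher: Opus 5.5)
Your route is correct, and it differs genuinely from the paper's.

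\textbf{What you do.} You turn the hybrifold itself into a smooth manifold by gluing \emph{flow-box} collars: $(x,t)\mapsto\Phi_t(x)$ for $t\le 0$ near $S$, and $(x,t)\mapsto\Phi_t(r(x))$ for $t\ge 0$ near $r(S)$. Then $V=\partial_t$ on both sides of the seam. As a result, (C-1) and (C-2) hold automatically for every $f=F\circ\pi$ with $F$ of class $C^1$ on $M_{\mathcal H}$. Property (C-3) reduces to the classical Whitney argument (jet transversality plus the off-diagonal difference map) run on the compact $n$-manifold $M_{\mathcal H}$.

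\textbf{What the paper does.} It never puts a smooth structure on $M_{\mathcal H}$. It takes an arbitrary collar from the collar theorem and prescribes a boundary embedding $g$ together with first-order transverse data $g_S$. It then solves (C-2) for $g_R$ through the affine relation $g_R(r(x))=\frac{\nu_S(x)}{\nu_R(r(x))}g_S(x)+c(x)$. Next it proves immersion and injectivity on the collar generically in $g_S$; the mixed $S$--$R$ pairs need the sign $\nu_S/\nu_R<0$. Finally it extends to $M$ by perturbations $h$ vanishing on a neighbourhood $U_1$ of $S\cup r(S)$.

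\textbf{What each buys.} Your argument is shorter and reuses standard tools (gluing along boundaries, Whitney). It also proves more: the glued vector field is smooth, so the latent field $DF\,\tilde V$ is $C^{k-1}$ on the image rather than merely $C^0$, and mixed pairs need no special treatment. The paper's argument establishes genericity in a larger class, where only values and first-order velocities match across the seam and higher derivatives may jump. That class mirrors the constraints actually imposed in learning (the gluing and velocity losses) and makes the first-order constraint explicit. The price is the extra collar-injectivity and relative-perturbation bookkeeping.

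\textbf{Two corrections.}

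First, $\mathcal A$ is not equal to $\pi^*C^k(M_{\mathcal H},\mathbb R^m)$ when $k\ge 2$. (C-1) on all of $S$ matches values and tangential derivatives, and (C-2) matches $\partial_t$. So elements of $\mathcal A$ are only $C^1$ across the seam, and $\partial_t^2 f$ may jump. Taking $\mathcal P=\mathcal A$ therefore breaks the $C^1$ requirement on the total map $(x,p)\mapsto j^1p(x)$ over $M_{\mathcal H}$. There are two fixes:
\begin{enumerate}
\item Use the closed subspace $\pi^*C^k(M_{\mathcal H},\mathbb R^m)\subset\mathcal A$, with $k\ge2$, as the parameter space; genericity is then within that subspace.
\item Run the jet argument on $M$ instead, where immersivity of $f$ is equivalent to that of $F$.
\end{enumerate}

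Second, your corner worry does not arise under the hypotheses as stated. A compact embedded $(n-1)$-submanifold of the $(n-1)$-manifold $\partial M$ is open and closed in $\partial M$, hence a union of boundary components. Compactness together with $S\cap r(S)=\emptyset$ also gives $\overline S\cap\overline{r(S)}=\emptyset$. The paper's proof tacitly works in the same setting and defers corners to future work. In any case, your fallback of proving the embedding on the complement and ``extending by continuity'' would fail, since injectivity and immersivity are not inherited by continuous extension from a dense set.
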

\begin{proof}
    The full proof is in \Cref{appx:proof}, and we here briefly introduce the procedure. The proof is to show that a construction of $f$ exists \emph{generically} when $m > 2n$ leveraging \Cref{thm:parametric}.
    The construction of $f$ is done on the collar coordinates ensured by \Cref{thm:collar}. 
    
    Denote $R:=r(S)$. To enforce \eqref{eq:value-compatible}, we choose an embedding $g: S \rightarrow \mathbb{R}^m$ (guaranteed by \Cref{theorem:whitney}) to embed $S-$side as $f_S=g$ and $R-$side as $f_R=g\circ r^{-1}$ which ensures $f_R(r(x)) = f_S(x), \forall x \in S$. Then we apply the first order Taylor extension in the collar coordinate on $S-$side as $\bar{f}_S(x, t)=f_S(x)+tg_S(x)$ with $\bar{f}_S(x, t):S\times [\epsilon, 0) \rightarrow M$ defined on the collar coordinates with $f_S(x)$ the components on the boundary $S$ and $g_S:S \rightarrow \mathbb{R}^m $ the component inward $S$ (similarly on $R$-side). By choosing $g_S$ and enforcing \eqref{eq:derivative-compatible}, we can uniquely determine $g_R$ on $R$ side. Given this construction in the collar neighborhood $\operatorname{Img}(\kappa_S) \cup \operatorname{Img}(\kappa_R) $, we show that whenever $m > 2n$, a generic choice of $g$ and $g_S$ ensures the first-order extension is injective and without rank deficiency. Finally, we extend the local property to the entire $M$.
    
    The geometric interpretation of the theorem is illustrated in \Cref{fig:xspace2zspace} in the collar coordinates in $M$ and $f(M)$.  
\end{proof}
Based on \Cref{thm:main}, we can confirm that the answer to \Cref{prob:ct-H} is positive. 

\begin{corollary}[Differentiability of latent flow]
Let $z(t)=f(x(t))$ be the latent trajectory given a generic choice of $f$
and $x(t)$ the hybrid execution of $\mathcal H$.
By \Cref{thm:main}, the induced latent vector field $\dot z = Df(x)V(x)$
is $C^0$ on $f(M)$ and the the latent trajectory $z(t)$ is $C^1$ in time. 
\end{corollary}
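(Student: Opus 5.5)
The corollary follows almost directly from \Cref{thm:main}; the plan is to define the latent vector field pointwise on $f(M)$, check it is well defined and continuous, and then read off the $C^1$ regularity of $z(t)$.

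\textbf{Step 1: define the field.} On $f(M)$ set
\[
W(z) := Df(x)V(x), \qquad \text{for any } x \text{ with } f(x)=z.
\]
By \eqref{eq:f-embedding}, the induced map $\tilde f : M_{\mathcal H}\to\mathbb{R}^m$ is injective. Hence the only non-unique preimages of $z$ are pairs $\{x, r(x)\}$ with $x\in S$. For such a pair, \eqref{eq:derivative-compatible} gives the same value of $Df\,V$, so $W$ is well defined.

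\textbf{Step 2: continuity.} The map $x\mapsto Df(x)V(x)$ is continuous on $M$, since $f\in C^k$ with $k\ge1$ and $V$ is smooth. It is constant on the classes of $\sim$, so it descends to a continuous map $\tilde W$ on $M_{\mathcal H}$. Then $W=\tilde W\circ\tilde f^{-1}$, and $\tilde f^{-1}$ is continuous because $\tilde f$ is a homeomorphism onto its image. Hence $W\in C^0(f(M))$.

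This is the step I expect to be the main obstacle, for two reasons.
\begin{itemize}
\item Statement \eqref{eq:f-embedding} must supply that $\tilde f$ is a homeomorphism, not merely an injective immersion. If it is read only as injectivity, I would fall back on compactness: $M_{\mathcal H}$ is a quotient of a compact space (\Cref{assumption:compact}), hence compact, and $\mathbb{R}^m$ is Hausdorff, so a continuous injection is automatically a homeomorphism onto its image.
\item The points of $\overline S\cap\overline{r(S)}$ need care. By \Cref{assumption:instant} this set has codimension at least 2, and continuity there still follows from the quotient argument, since the only input used is continuity of $Df\,V$ on $M$ together with \eqref{eq:derivative-compatible} on $S$.
\end{itemize}

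\textbf{Step 3: regularity of $z(t)$.} Take a hybrid execution with reset times $t_1<t_2<\cdots$. By \Cref{assumption:traverse} and \Cref{assumption:instant} these times are isolated.
\begin{itemize}
\item On each open interval between resets, $x(t)$ is a $C^1$ integral curve of $V$. The chain rule then gives $\dot z(t)=Df(x(t))V(x(t))=W(z(t))$.
\item At a reset $t_i$, \eqref{eq:value-compatible} gives $z(t_i^-)=f(x^-)=f(r(x^-))=z(t_i^+)$, so $z$ is continuous.
\item The one-sided derivatives are $Df(x^-)V(x^-)$ and $Df(r(x^-))V(r(x^-))$. These coincide by \eqref{eq:derivative-compatible}.
\end{itemize}
A continuous function that is $C^1$ on each side of $t_i$, with matching one-sided derivative limits, is differentiable at $t_i$ with continuous derivative; this is a mean value argument. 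Therefore $z\in C^1$ in time and solves $\dot z=W(z)$ with $W$ continuous.
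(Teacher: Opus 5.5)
Your proposal is correct and follows the same route as the paper, whose proof is a one-line appeal to \eqref{eq:value-compatible} and \eqref{eq:derivative-compatible} at reset points and to \eqref{eq:f-embedding} elsewhere; you supply the details the paper omits, namely well-definedness of the pushed-forward field on the pairs $\{x,r(x)\}$, continuity of $\tilde f^{-1}$ via compactness of $M_{\mathcal H}$, and matching one-sided derivatives at reset times. One small refinement: isolation of reset times also needs \Cref{assumption:compact}, because compact disjoint $S$ and $r(S)$ lie at positive distance and $V$ is bounded; the same compactness gives $\overline S\cap\overline{r(S)}=S\cap r(S)=\emptyset$, so your bullet in Step 2 about those points is vacuous.
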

\begin{proof}
    This is an immediate result by \eqref{eq:value-compatible} and \eqref{eq:derivative-compatible} for the differentiability at any $x \in S \cup r(S)$, and \eqref{eq:f-embedding} for the differentiability elsewhere. 
\end{proof}

Though we have this \emph{extrinsic} continuous representation $f$, we note that the \emph{intrinsic} discontinuity of the hybrid system $\mathcal{H}$ is irrelevant to the representation. 
\begin{remark}[Intrinsic and Extrinsic Representation of $\mathcal{H}$]
    By \cite{simic2005towards}, the \emph{intrinsic} vector field of $\mathcal{H}$ 
    is not guaranteed to be continuous as it is possible that $\exists x \in S$
    $$Dr(x)V(x) \neq V(r(x)).$$ 
    On the contrary, $f:M\rightarrow \mathbb{R}^m$ is the \emph{extrinsic} representation that ensures \eqref{eq:value-compatible} and \eqref{eq:derivative-compatible} are satisfied on $S \cup r(S)$ by additional degree of freedom.
\end{remark}

In this work, we seek such an extrinsic representation that makes \Cref{prob:learn-H} well-posed for differentiable optimization, despite the intrinsic non-smooth structures. Thus, we have the following learnable components of our latent ODE:
\begin{remark}
\label{rmk:enc-dec-vec}
    Whenever $m > 2n$, a hybrid system $\mathcal{H}$ with hybrid dynamics \eqref{eq:H-dyn} can be embedded into a latent space by an encoder 
    \begin{equation}
        f_\theta:M\rightarrow \mathbb{R}^{m}, \tag{Encoder}
    \end{equation}
    that satisfies $f_\theta(x) = f_\theta(r(x)), \forall x \in S$, and the associated latent vector field
    \begin{equation}
        V_\theta: \operatorname{Img}(f_\theta) \rightarrow \mathbb{R}^m, \tag{Latent Vector Field}        
    \end{equation}
    that relates to $V(\cdot)$ by $V_\theta(f_\theta(x))=Df_\theta V(x), \forall x \in M$. To recover $x$ from $z \in \operatorname{Img}(f_\theta)$, we have the the decoder:
    \begin{equation}
        f^{-1}_\theta: \operatorname{Img}(f_\theta) \rightarrow M. \tag{Decoder}
    \end{equation}
    For machine learning implementations, we can optimize the parameter $\theta$ to obtain $f_\theta$, $V_\theta$, and $f^{-1}_\theta$ given $\mathcal{X}$, the time-series observation from the flow of $\mathcal{H}$.
\end{remark}

\section{Learning $\mathcal{H}$ by Latent Neural ODE}

Based on \Cref{rmk:enc-dec-vec}, we propose a latent ODE framework that recovers the flow of $\mathcal{H}$ from time series data observation $\mathcal{D}$ by designing the encoder $f_\theta:M\rightarrow \mathbb{R}^m$, latent vector field $V_\theta: \operatorname{Img}(f_\theta) \rightarrow \mathbb{R}^m$, and the decoder $f^{-1}_\theta:\operatorname{Img}(f_\theta) \rightarrow \mathbb{R}^m$. Each of these components can be represented by an MLP. 

The latent flow is obtained by integrating $V_\theta$ from the initial state $z_0 = f_\theta(x_0)$ by Neural ODE \cite{chen2018neural}:
\begin{equation}
\label{eq:latent-rollout}
    \hat{z}_{k} = \int_{t_0}^{t_k} V_\theta(z(t))dt + f_\theta(x_0).
\end{equation}
Then we enforce the consistency loss both in the state space:
\begin{equation}
\label{eq:xloss}
    \mathcal{L}_x = \operatorname{MSE}(f_\theta^{-1}(\hat{z}_k), x_k),
\end{equation}
and in the latent space
\begin{equation}
\label{eq:zloss}
    \mathcal{L}_z = \operatorname{MSE}(\hat{z}_k, f_\theta(x_k)).
\end{equation}
Though representing the latent vector field $V_\theta(x)$ as an MLP (with a finite Lipschitz constant~\cite{virmaux2018lipschitz}) automatically ensures the learned latent flow is unique~\cite{Khalil2008NonlinearST} and continuous, we consider additional inductive bias as in \cite{teng2025chyll} to explicitly enforce the conditions in \Cref{thm:main} and see if they can further improve the performance.

To enforce \eqref{eq:value-compatible}, we have the gluing loss~\cite{teng2025chyll}:
\begin{equation}
    \mathcal{L}_g = \operatorname{MSE}(f_\theta(x_k), f_\theta(x_{k+1})), \forall k \in \mathcal{I},
\end{equation}
with $\mathcal{I}$ the index set that labels all the pre- and post-reset states by thresholding the empirical Lipchitz constant of the data point in $\mathcal{D}$, i.e., $\|\frac{x_{k+1} - x_k}{t_{k+1} - t_k}\|$. 

Then we explore if we need to enforce \eqref{eq:derivative-compatible} that is not used in~\cite{teng2025chyll}. Thus, we design the velocity compatibility loss:
\begin{equation}
    \mathcal{L}_v = \operatorname{MSE}( \dot{z}^-_k, \dot{z}^+_{k}), \forall k \in \mathcal{I},
\end{equation}
with the pre-reset velocity approximated by $ \hat{\dot{z}}^-_k = \frac{f_\theta(x_{k}) - f_\theta(x_{k-1})}{t_{k} - t_{k-1}}$ the backward finite difference and the post-reset one by $ \hat{\dot{z}}^+_k = \frac{f_\theta(x_{k+1}) - f_\theta(x_{k})}{t_{k+1} - t_{k}}$ the forward version. 

Finally, to avoid the latent space from collapsing, we enforce the latent space to have positive covariance by
\begin{equation}
        \mathcal{L}_{c}  = \sum_{i=1}^m \operatorname{ReLU}(\Lambda - \operatorname{Cov}(f_\theta(x_k)_i)),
\end{equation}
with $\Lambda$ the threshold for the latent covariance~\cite{teng2025chyll}. Finally, we have the loss function as:
\begin{equation}
\label{eq:loss-total}
    \mathcal{L}(\theta)= w_x\mathcal{L}_x + w_z\mathcal{L}_z + w_g \mathcal{L}_g + w_v \mathcal{L}_v + w_c \mathcal{L}_c,
\end{equation}
with the weights $w_{(\cdot)}$. 
Finally, we consider a rollout curriculum to learn the trajectories from short to long segments, and we conclude our method in \Cref{alg:main}.
\begin{algorithm}[t] 
\caption{\method}
\label{alg:main}
\begin{algorithmic}[1]
\Require Trajectory dataset $\mathcal{D}$; gluing index set $\mathcal{I}$; curriculum $\{T_1<\dots<T_\ell\}$; steps per curriculum $S$.
\For{$\ell=1,\ldots,L$}
  \For{$\text{step}=1,\ldots,S$}
    \State $x_{0:T_\ell} \sim \mathcal{D}$ \Comment{Sample mini-trajectories}
    \State $\hat{z}_{0:T_\ell}\xleftarrow{\eqref{eq:latent-rollout}} x_{0}  $ \Comment{Encode and rollout}
    \State $\hat{x}_{0:T_\ell} \xleftarrow{f_\theta^{-1}(\cdot)} \hat{z}_{0:T_\ell}  $  \Comment{Decode}
    \State $\mathcal{L}(\theta) \xleftarrow{(\ref{eq:xloss}-\ref{eq:loss-total})} \{\hat{x}_k,\hat{z}_k\}^{T_l}_{k=0}$ \Comment{Compute loss}
    \State $\theta \gets \theta - \eta\,\nabla_\theta \mathcal{L}(\theta)$ \Comment{Update parameters}
  \EndFor
\EndFor

\Return $f_\theta, V_\theta$ and $f^{-1}_\theta$.
\end{algorithmic}
\end{algorithm}

\section{Numerical Experiments}
\begin{table*}[t]
\scriptsize
\centering
\caption{
We conduct the experiments five times for each case to compute the MSE (mean $\pm$ std.dev). 
For methods that exhibit numerical instability, divergence, or obvious failure in capturing the pattern, we report the qualitative results. The notation * indicates the results reported from \cite{teng2025chyll}. Though \cite{teng2025chyll} works for the first four cases, it overfits to one modality of the input signal for the last case. The plots for all the cases can be seen in \Cref{apx:exp-vis}. 
}
\label{table:MSE-baseline}
\resizebox{1\linewidth}{!}{%
\begin{tabular}{l c c c c c c}
\toprule  
 & Proposed (ReLu, $\mathcal{L}_{x,z,c}$) & \cite{teng2025chyll} & \cite{chen2018neural} & \cite{rubanova2019latent} & \cite{lusch2018deep} & \cite{chen2020learning} \\
\midrule
Bouncing Ball
& $\mathbf{0.158} \pm 0.0376$
& $0.237 \pm 0.0629^*$
& Large Penetration
& Diverge
& Diverge
& Ill-conditioned \\

Torus
& $\mathbf{0.00367} \pm 0.00166$
& $0.0164 \pm 0.00989^*$
& Pattern Incorrect
& Pattern Incorrect
& Diverge
& Ill-conditioned \\

Klein Bottle
& $\mathbf{0.00587} \pm 0.00461$
& $0.0220 \pm 0.00584^*$
& Pattern Incorrect
& Pattern Incorrect
& Diverge
& Ill-conditioned \\

Three-Link Walker
& $\mathbf{0.0952} \pm 0.00424$
& $0.234 \pm 0.0140^*$
& $0.275 \pm 0.00610^*$
& $0.253 \pm 0.0770^*$
& Diverge
& Ill-conditioned \\

3D Bouncing Ball
& $\mathbf{0.162} \pm 0.0150$ 
& Collapse to $z$-direction
& ${0.5244} \pm 0.0707$ 
& Diverge
& Diverge
& Ill-conditioned \\
\bottomrule
\end{tabular}}
\end{table*}

\subsection{Analytical Example}
We provide an analytical example to illustrate how an inherently discontinuous hybrid systems admits a continuous representation in the embedded space.

Consider a vector field on a $1$-dimensional disconnected manifold with different vector fields on each partition:
\begin{equation}
    V(x) = \begin{cases}
    1, x \in [0, 1) \\
    2, x \in [2, 3) \\
\end{cases} r(x) = \begin{cases}
    x + 1, x = 1 \\
    x - 3, x = 3 \\
\end{cases}
\end{equation}
We can verify that the two vector fields are not intrinsically continuous by the fact that the pre- and post-reset velocity $\left(D r(x^-) V(x^-), V(r(x^-)) \right)$ at $x^- = 1, 2$ are $(1, 2)$ and $(2, 1)$ respectively, which is not identical. 
As the $r(x)$ glues the state space as a circle, we construct $f$ by sinusoidal functions.
Thus we consider the equivalence relationship $1 \sim 2$ and $3 \sim 0$ and analytically construct $f$ as:
\begin{equation}
    f(x) = \begin{cases}
        A_1y_1(x), x \in (0, 1)  \\
        A_2y_2(x), x \in (2, 3)
    \end{cases},
\end{equation}
with $y_1(x) = [\cos(\pi x), \sin(\pi x)]^\top$, $y_2(x) = [\cos(\pi(x-1)), \sin(\pi(x - 1))]$ and $A_i \in \mathbb{R}^{2\times 2}$ the linear coefficient matrices. Then we enforce \eqref{eq:value-compatible} by letting $ f(1) = f(2), f(3) = f(0)$, and \eqref{eq:derivative-compatible} by
    $Df(x^-)\circ V(x^-)=Df(r(x^-)) \circ V(r(x^-))$
with $x^- = 1, 3$. Thus, we have
\begin{equation}
    A_1[-1, 0]^\top = A_2[-1, 0]^\top,\ A_1[1, 0]^\top = A_2[1, 0]^\top,
\end{equation}
for \eqref{eq:value-compatible} and the following equations for \eqref{eq:derivative-compatible}: 
\begin{equation}
    A_1 [0, 1]^\top\pi = A_2 [0, 1]^\top 2\pi,\ A_1 [0, 1]^\top2\pi = A_2[0, 1]^\top \pi.
\end{equation}
Thus, we choose a solution that also makes $f$ an embedding:
\begin{equation*}
    A_1= \begin{bmatrix}
        0 & 1 \\
        1 & 0 \\
    \end{bmatrix}, A_2 = \begin{bmatrix}
        0 & -0.5 \\
        1 & 0
    \end{bmatrix}.
\end{equation*}
Finally, we have the time evolutions of $f$ and $\dot{f}$ shown in \Cref{fig:1d-traj}, which is a 1D continuous manifold embedded in 2D space Euclidean space equipped with a continuous vector field on it. Then, for each segment of the manifold, we can design a decoder by $\operatorname{atan2}(\cdot, \cdot)$ with proper scaling of $f_1$ and $f_2$ as the inverse of the encoder $f$. 
\begin{figure}
    \centering
    \includegraphics[width=1\linewidth]{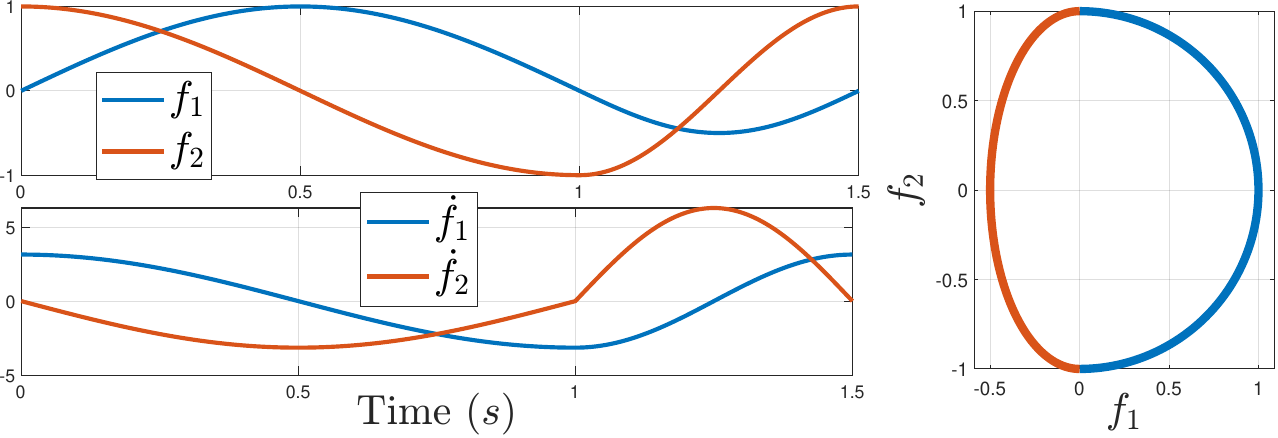}
    \caption{The 2-D embedding of a 1-D hybrid system. We find that the embedded manifold is continuous and admits a global $C^0$ extrinsic vector field $\dot{f}$. On the right-hand side, we see that the manifold is glued by hemispheres from two different ellipses.}
    \label{fig:1d-traj}
\end{figure}

\subsection{Learning $\mathcal{H}$ from Time-Series Data}
In this section, we consider three 2-dimensional examples with varying geometry and two 6-dimensional physical systems. In the 2D experiments, we have dynamics evolve on a manifold with boundary $M = [0, 1]^2$ 
with linear dynamics $\dot{x} = Cx, \forall x \in [0, 1]^2$.
    
We consider the reset map that gives the quotient manifold $M_\mathcal{H}$ a complicated topology by gluing two neighboring boundaries of $M$, i.e., $S= S_1 \cup S_2$ with $ S_1=\{ x|x_2 \in [0,1], x_1 = 1\}$ and $S_2 = \{ x| x_1 \in [0, 1], x_2 = 1 \}$ to their opposite sides. We consider the following reset function that makes $M_\mathcal{H}$ diffeomorphic to \emph{\textbf{Torus}} and \emph{\textbf{Klein Bottle}} by $r_{\mathrm{t}}(x) = \begin{cases} 
    (0, x_2) & x \in S_1 \\ 
    (x_1, 0) & x \in S_2 \\
    \end{cases}$ and $r_{\mathrm{k}}(x) = \begin{cases} 
    (0, x_2) & x \in S_1 \\ 
    (1 - x_1, 0) & x \in S_2 \\
    \end{cases}.$
    
The \emph{\textbf{Bouncing Ball}} system satisfies the linear dynamics with gravity $g$ by $\dot{x}_1=x_2, \dot{x_2} = -g.$
The guard surface $S_{\mathrm{b}} :=\{ (x_1, x_2) | x_1 = 0, x_2\le 0\}$ indicates the ground where elastic collisions happen and is represented by the reset map $r_{\mathrm{b}}(x) = [x_1, -\alpha x_2]^\top, [x_1, x_2]^\top \in S_\mathrm{b}$.

The first 6-dimensional system is the \emph{\textbf{Three-Link Walker}} governed by the rigid body dynamics in the continuous-time $    D(q)\ddot{q} + C(q, \dot{q})\dot{q}+G(q) = \pi(q, \dot{q})
$
with $q, \dot{q} \in \mathbb{R}^3$, $D(q)$ the mass matrix, $C(q, \dot{q})$ the Coriolis matrix, $G(q)$ the gravity vector and $\pi(q, \dot{q})$ an nonlinear feedback controller to enforce a periodic gait. The state reset happens when changing the stance and swing foot by the contact dynamics $\Delta$~\cite{featherstone2010body} and the relabling of stance and swing foot by a permutation matrix $P$~\cite{westervelt2003hybrid} $\dot{q}^+ = \Delta(q^-, \dot{q}^+), q^+ = Pq^-.$

Then we consider a \emph{\textbf{3D-Bouncing Ball}} thrown into a sink. The dynamics in the vertical direction are identical to the Bouncing Ball case, while the horizontal dynamics are not subject to any external force when in the air: $\dot{x}=v_x, \dot{v_x}=0, \dot{y}=v_y, \dot{v}_y = 0$, but subject to the impact $r([x, v_x]^{\top}) = [x-, -\alpha v_x^- ]^{\top}$, $\forall [x, v_x]^{\top} \in S_{\mathbb{b}1} \cup S_{\mathbb{b}2}$, 
when reaching the wall defined by $S_{\mathrm{b1}}:=\{[x, v_x]:x=0.8, v_x > 0 \}$ and $ S_{\mathrm{b2}}:= \{[x, v_x]:x=-0.8, v_x < 0 \}$. The dynamics in $y$-direction are identical to $x$.

\subsection{Comparison with Existing Methods}
\begin{figure*}[t]
    \centering
    \includegraphics[width=1\linewidth]{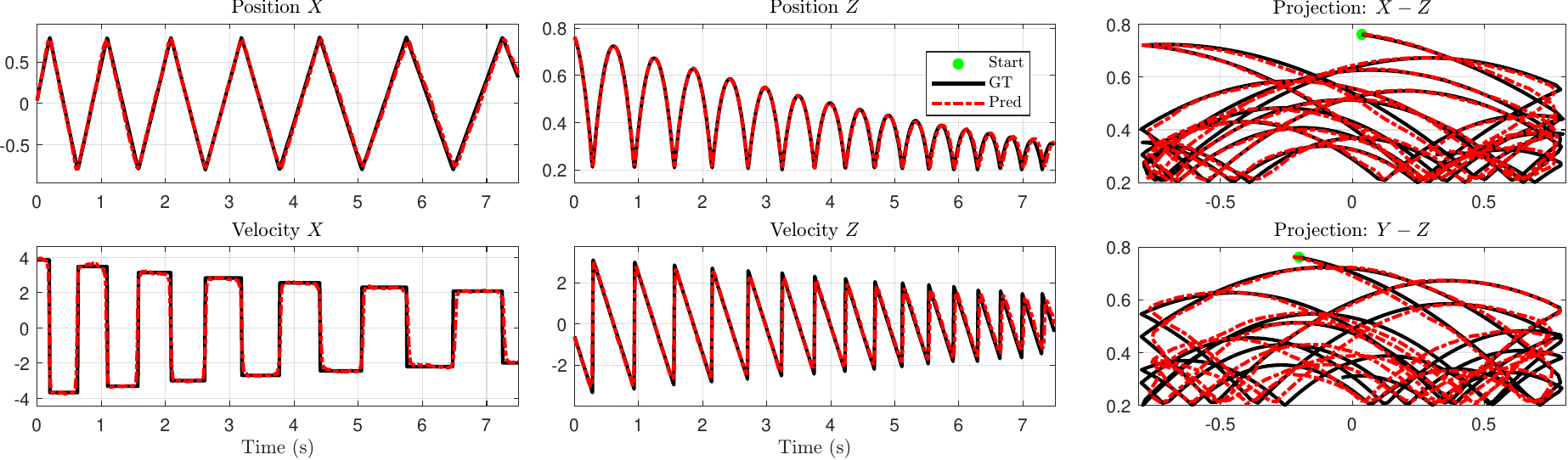}
    \caption{Trajectories of the 3D Bouncing Ball in a sink. The maximal training horizon in the curriculum is $2s$ while the testing set is $7.5s$. We see that the ball is colliding with the walls and the floor, but without obvious penetrations. The square wave in the horizontal velocity is extremely challenging for differentiable optimization that makes the baselines fail. }
    \label{fig:ball-traj}
\end{figure*}
We compare \Cref{alg:main} with the existing methods for learning dynamical systems. The most relevant method is \cite{teng2025chyll} inspired by the Whitney Embedding Theorem to bias the latent space towards a continuous high-dimensional manifold by the gluing loss and latent consistency loss. We note that \cite{teng2025chyll} admits a two-stage training strategy that learns the encoder and vector first and then the decoder. The second baseline is the classical Neural ODE \cite{chen2018neural} in the state space. The third baseline is the latent Neural ODE with RNN encoders~\cite{rubanova2019latent}. We also consider the deep Koopman operator~\cite{lusch2018deep} that is designed for general dynamical systems. We note that all the baselines do not admit mode selections that are combinatorially hard. 

We conduct the experiments five times for each example to compute the MSE (mean $\pm$ std.dev). We train the system for at most 200 steps in Neural ODE and rollout for more than 500 steps. For the Bouncing Ball, Klein Bottle, Torus, and Three-Link Walker, we consider the identical experimental settings in~\cite{teng2025chyll}. For the 3D Bouncing Ball, we consider a larger neural network. For fairness, we consider that all activations in the hidden layers of the proposed method are ReLU, though in the ablations, we find that the $\sin$ activation function lead to better results. The details of the experiments are shown in \Cref{apx:exp-setup}. The statistics of the experiments is shown in \Cref{table:MSE-baseline}. We find that the proposed method greatly outperforms~\cite{teng2025chyll} in all the cases, while the other methods without the geometry-based inductive bias have different kinds of failures. 

We plotted the trajectories predicted by \Cref{alg:main} of the 3D Bouncing Ball in \Cref{fig:ball-traj} and \Cref{fig:z_traj}. As the dynamics are decoupled in three directions, we present the trajectories in only $x-$ and $z-$directions. More visualizations are presented in \Cref{apx:exp-vis}. 

\begin{figure*}
    \centering
    \includegraphics[width=1\linewidth]{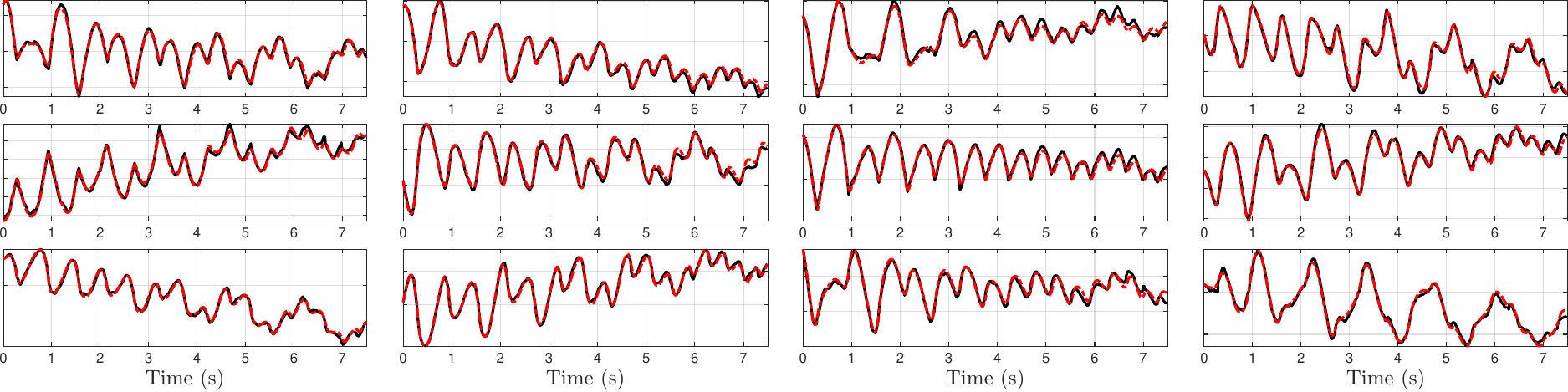}
    \caption{The continuous latent trajectories ($m=12$) of the 3D Bouncing Ball.}
    \label{fig:z_traj}
\end{figure*}


\subsection{Ablation Studies}
\begin{table}[t]
\centering
\caption{Ablation on the effect of the activation functions. We consider the activation function in the last layer as $\sin$ or ReLU.}
\label{table:activation}
\resizebox{1\linewidth}{!}{%
\begin{tabular}{l
                r@{$\;\pm\;$}l
                r@{$\;\pm\;$}l
                r@{$\;\pm\;$}l
                r@{$\;\pm\;$}l}
\toprule
 & \multicolumn{2}{c}{$\sin,\ \mathcal{L}_{x, z}$}
 & \multicolumn{2}{c}{$\sin,\ \mathcal{L}_{x, z, c}$}
 & \multicolumn{2}{c}{$\operatorname{ReLU},\ \mathcal{L}_{x, z}$}
 & \multicolumn{2}{c}{$\operatorname{ReLU},\ \mathcal{L}_{x, z, c}$} \\
\midrule
Bouncing Ball
& $\mathbf{0.106}$ & $\mathbf{0.0164}$
& $0.117$ & $0.0326$
& $0.253$ & $0.172$
& $0.173$ & $0.0405$ \\

Torus
& $0.0163$ & $0.0305$
& $\mathbf{0.00304}$ & $0.00185$
& $0.0411$ & $0.0400$
& $0.00363$ & $\mathbf{0.000989}$ \\

Klein
& $0.00954$ & $0.00807$
& $\mathbf{0.00454}$ & $\mathbf{0.00075}$
& $0.0499$ & $0.0263$
& $0.0128$ & $0.0155$ \\

Three-Link Walker
& $0.0731$ & $\mathbf{0.00506}$
& $0.0888$ & $0.00794$
& $\mathbf{0.0722}$ & $0.00870$
& $0.0963$ & $0.00685$ \\

3D Bouncing Ball
& $0.175$ & $\mathbf{0.00949}$
& $0.169$ & $0.0106$
& $\mathbf{0.157}$ & $0.0111$
& $0.162$ & $0.0150$ \\
\bottomrule
\end{tabular}}
\end{table}

\begin{table*}[t]
\scriptsize
\centering
\caption{Ablation study on the effect on combination of loss function $\mathcal{L}_{(\cdot)}$ on the reconstruction MSE (mean $\pm$ std.dev). We find that $\mathcal{L}_v$ does not improve the performance. Without latent consistency loss $\mathcal{L}_z$, the performance degrades a lot. $\mathcal{L}_g$ and $\mathcal{L}_c$ together have an improvement. The first three columns do not show a significant difference, while $\mathcal{L}_z$ is the dominant factor in the success.}
\label{table:loss}
\resizebox{1\linewidth}{!}{%
\begin{tabular}{l
                r@{$\;\pm\;$}l
                r@{$\;\pm\;$}l
                r@{$\;\pm\;$}l
                r@{$\;\pm\;$}l
                r@{$\;\pm\;$}l
                r@{$\;\pm\;$}l
                r@{$\;\pm\;$}l}
\toprule
 & \multicolumn{2}{c}{$\mathcal{L}_{x,z}$}
 & \multicolumn{2}{c}{$\mathcal{L}_{x,z,c}$}
 & \multicolumn{2}{c}{$\mathcal{L}_{x,z,c,g}$}
 & \multicolumn{2}{c}{$\mathcal{L}_{x,z,g,v}$}
 & \multicolumn{2}{c}{$\mathcal{L}_{x,z,c,g,v}$}
 & \multicolumn{2}{c}{$\mathcal{L}_{x}$}
 & \multicolumn{2}{c}{$\mathcal{L}_{x,c,g}$} \\
\midrule
Bouncing Ball
& $0.106$ & $\mathbf{0.0164}$
& $0.117$ & $0.0330$
& $\mathbf{0.0996}$ & $0.0255$
& $0.118$ & $0.0251$
& $0.122$ & $0.0506$
& $0.738$ & $0.118$
& $0.945$ & $0.518$ \\

Torus
& $0.0163$ & $0.0305$
& $0.00304$ & $0.00185$
& $\mathbf{0.00280}$ & $0.00065$
& $0.0196$ & $0.0276$
& $0.00336$ & $0.00080$
& $0.0742$ & $0.0131$
& $0.105$ & $0.0799$ \\

Klein
& $0.00954$ & $0.00807$
& $\mathbf{0.00454}$ & $0.000750$
& $0.00464$ & $\mathbf{0.00074}$
& $0.0380$ & $0.0315$
& ${0.00516}$ & $0.00106$
& $0.0737$ & $0.0111$
& $0.0706$ & $0.0145$ \\

Three-Link Walker
& $\mathbf{0.0731}$ & $\mathbf{0.00506}$
& $0.0888$ & $0.00794$
& $0.0913$ & $0.00944$
& $0.118$  & $0.00774$
& $0.131$  & $0.0221$
& $0.236$ & $0.161$
& $0.126$ & $0.0115$ \\

3D Bouncing Ball
& $0.175$ & $\mathbf{0.00949}$
& $0.169$ & $0.0106$
& $\mathbf{0.160}$ & $0.0205$
& $0.254$ & $0.0271$
& $0.197$ & $0.0173$
& $0.416$ & $0.0905$
& $0.455$ & $0.101$ \\
\bottomrule
\end{tabular}}
\end{table*}

\begin{table}[t]
\scriptsize
\centering
\caption{Ablation on the effect of the latent dimensions on the reconstruction MSE (mean $\pm$ std.dev).}
\label{table:latent-dim}
\resizebox{1\linewidth}{!}{%
\begin{tabular}{l
                r@{$\;\pm\;$}l
                r@{$\;\pm\;$}l
                r@{$\;\pm\;$}l}
\toprule
 & \multicolumn{2}{c}{$m=n+k$}
 & \multicolumn{2}{c}{$m=2n$}
 & \multicolumn{2}{c}{$m=2n+2$} \\
\midrule
Bouncing Ball
& $0.114$ & $\mathbf{0.0243}$
& $\mathbf{0.0996}$ & $0.0255$
& $0.119$ & $0.0275$ \\

Torus
& $0.00537$ & $0.00354$
& $0.00280$ & $0.00065$
& $\mathbf{0.00216}$ & $\mathbf{0.000187}$ \\

Klein
& $0.0194$ & $0.00595$
& $0.00464$ & $\mathbf{0.00074}$
& $\mathbf{0.00457}$ & $0.00079$ \\

Three-Link Walker
& $0.110$ & $\mathbf{0.00499}$
& $0.0913$ & $0.00944$
& $\mathbf{0.0891}$ & $0.00841$ \\

3D Bouncing Ball
& $0.481$ & $0.145$
& $\mathbf{0.160}$ & $0.0205$
& $0.167$ & $\mathbf{0.0137}$ \\
\bottomrule
\end{tabular}}
\end{table}

We conduct ablation studies to explore the key factors that contribute to the success of \Cref{alg:main}. We denote the loss function in \Cref{table:loss} with different terms as $\mathcal{L}_{(\cdot, \cdot, \cdots)}$. We summarize the experiment setup in \Cref{apx:ablation}. 

{\textbf{Activation functions: }}As we see in the analytical examples, the glued space $M_\mathcal{H}$ may admit non-trivial topology that creates caveats that do not exhibit in the state space $M$, such as our analytical example. Thus, we design the first experiments to see if a $\sin(\cdot)$ activation function~\cite{sitzmann2020implicit} at the last hidden layer will improve the performance. We consider the loss function $\mathcal{L}_{x, z}$ or $\mathcal{L}_{x, z, c}$ with $\sin$ or ReLU activations. As shown in \Cref{table:activation}, we find that for the Torus or Klein Bottle case that has complex topologies, the gluing loss has significantly better performance then ReLU.

{\textbf{Loss functions: }}Then, we explore the dominant terms in the loss function \eqref{eq:loss-total} and summarize in \Cref{table:loss}. As shown in the last two columns, we find deactivating $\mathcal{L}_z$ results in significant performance degradation compared to its counterparts in the first and third columns. The fourth and fifth columns suggests that the $\mathcal{L}_v$ results in significant performance degradation for the Walker and 3D Bouncing Ball. For the first three columns, we conclude that activating $\mathcal{L}_g$ and the $\mathcal{L}_c$ simultaneously leads to better performance. 

Therefore, we conclude that $\mathcal{L}_z$ is the key factor that ensures accurate reconstructions. As for the geometry-based loss, $\mathcal{L}_v$ does not help improve the performance, while $\mathcal{L}_c$ together $\mathcal{L}_g$ provides \emph{marginal} improvements. 

{\textbf{Latent dimensions: }}Finally, we analyze the effect of the latent dimensions. We consider the $m=n+k$, $m=2n$, and $m=2n+2$, where $k < n$ depends on the knowledge of the underlying system topology. For the Bouncing Ball and the 3D version, we consider $k=0$ as their dynamics in each DOF can be embedded into flows on a 2D plane~\cite{simic2005towards}. For the Torus, we consider $k=1$ as a Torus can be embedded into a $\mathbb{R}^3$. For the Klein Bottle, we also consider $k=1$ though it cannot be embedded into $\mathbb{R}^3$. 

As illustrated in \Cref{table:latent-dim}, we find that lowering the dimensions leads to significant performance degradation, while increasing the dimensions generally leads to improved performance. For the 1D Bouncing Ball case, we note that lower-dimension does not lead to significant performance degradation, as the underlying system is can be embedded into a two-dimensional space. For the Klein Bottle, as the underlying manifold can not be embedded into $\mathbb{R}^3$, the performance degradation is significant. 

\vspace{-2mm}
\section{Discussions and Limitations}
\textbf{Why generic property: }Our proof shows that satisfying \eqref{eq:value-compatible} and \eqref{eq:derivative-compatible} is not a rare case, but a property that holds for the majority of the admissible constructions. Thus, learning these functions does not fit a singular case, but searching within a broad class of well-behaved embeddings. This can be done by simply letting $m > 2n$.

\textbf{Dimension bound:} In this work, we proved the dimension bound $m > 2n$ through the transversality theorem. For $m=2n$, the \Cref{thm:preimage} and \ref{thm:parametric} suggest the singular case has dimension $0$, i.e., countably or finitely many (compact state space) discrete points. To explore \Cref{thm:main} when $m=2n$, we need to eliminate these discrete points with additional techniques~\cite{hirsch2012differential}. 


\textbf{Corner of the guard surface: }The main proof assumes the guard surface is smooth. In case that $S$ is piecewise smooth, we need to discuss the transversality condition on the intersection of different pieces. As the corners may have degeneracies (local linear dependencies) that require careful discussion, we consider this an interesting future extension. 

\textbf{Significance of loss design:}
The ablation study indicates that the $\mathcal{L}_z$ plays a dominant role in performance, while $\mathcal{L}_v$ does not yield additional gains, and $\mathcal{L}_c$ and $\mathcal{L}_g$ provide only marginal improvements.
Importantly, this observation does not contradict our theoretical formulation.
The velocity compatibility.~\eqref{eq:derivative-compatible} can be implicitly enforced by the finite Lipschitz continuity of the neural vector field, a property commonly satisfied by standard network parameterizations without requiring an explicit loss term~\cite{virmaux2018lipschitz}.
As a result, $\mathcal{L}_v$ can be redundant in practice. 


\textbf{Discontinuity of the decoder: }We note that the embedding $f$ glues two distinct points in the state space as a single point in the latent space. Like in \cite{teng2025chyll}, we leave the discontinuity to the decoder. To encourage a strict discontinuous decoder, we may adopt the atlas learning that applies a single network to each chart to make the decoder truly discontinuous, or adopt step functions in the decoder networks to recover a sharper discontinuity. The proposed method is promising to broader robotics applications~\citep{teng2022input, teng2024convex, 10301632, teng2021toward, liu2025discrete, teng2024generalized, Teng-RSS-23, ghaffari2022progress, teng2022lie, teng2022error, teng2021legged, yu2023fully, he2024legged, teng2024gmkf, he2024legged,chang2026survey,teng2026max,liu2025ego, he2025invariant, iwasaki2025learning, liu2026mepoly, dong2025online, li2026stein}.
\vspace{-2mm}
\section{Conclusions}
\vspace{-2mm}
In this work, we theoretically proved that an $n-$dimensional hybrid system can be embedded into $m$-dimensional Euclidean space with a continuous vector field in its embedded image whenever $m>2n$. Building on this existence theorem, we show that the proposed latent Neural ODE is well-posed to learn hybrid systems from time-series data with superior accuracy. The ablation study suggested that the consistency loss in both the state and the latent space is the key to successful implementations. 

\section*{Acknowledgment}
This work is supported in part by The Robotics and AI Institute. 

\section*{Impact Statement}
This paper presents work whose goal is to advance the field of machine learning. There are many potential societal consequences of our work, none of which we feel must be specifically highlighted here.


\bibliography{reference}
\bibliographystyle{icml2025}

\newpage
\appendix
\onecolumn
\section{Topology of Function Space}
\subsection{Differential Geometry}
\begin{definition}[Chart and Atlas~\cite{hirsch2012differential}]
Let $M$ be a topological $n$-manifold. 
A \emph{chart} on $M$ is a pair $(U, \phi)$, where $U \subseteq M$ is an open set and $\phi: U \to \hat{U} \subseteq \mathbb{R}^n$ is a homeomorphism onto an open subset $\hat{U}$ of $\mathbb{R}^n$. An \emph{atlas} $\mathcal{A}$ for $M$ is a collection of charts $\{(U_\alpha, \phi_\alpha)\}_{\alpha \in \mathcal{A}}$ whose domains cover $M$, i.e., $    M = \bigcup_{\alpha \in \mathcal{A}} U_\alpha$.

Two charts $(U, \phi)$ and $(V, \psi)$ are said to be \emph{smoothly compatible} if either $U \cap V = \emptyset$, or the transition map $\psi \circ \phi^{-1}: \phi(U \cap V) \to \psi(U \cap V)$
is a diffeomorphism between open subsets of $\mathbb{R}^n$. An atlas $\mathcal{A}$ is called a \emph{smooth atlas} if any two charts in $\mathcal{A}$ are smoothly compatible.
\end{definition}

\begin{definition}[Bump function~\cite{hirsch2012differential}]
Let $M$ be a smooth manifold.
A \emph{bump function} on $M$ is a smooth function
$\rho:M\to\mathbb R$ with compact support, i.e.\ $\rho\in C_c^\infty(M)$.

In particular, on $\mathbb R^n$ one may construct a bump function as follows.
Define
\begin{equation}
   \eta(t)=
\begin{cases}
\exp\!\left(-\frac{1}{1-t}\right), & t<1,\\
0, & t\ge 1,
\end{cases}
\qquad
\psi(t)=\dfrac{\eta(t)}{\eta(t)+\eta(1-t)}.
\end{equation}
Then $\psi\in C^\infty(\mathbb R)$ satisfies $\psi(t)=1$ for $t\le 0$ and
$\psi(t)=0$ for $t\ge 1$.
Setting $\varphi(x)=\psi(\|x\|^2)$ yields a smooth function
$\rho\in C_c^\infty(\mathbb R^n)$ supported in the unit ball and identically
equal to $1$ in a neighborhood of the origin.
Via local coordinate charts, such bump functions exist on any smooth manifold.
\end{definition}

\subsection{The Space of $C^k$ Maps} 
\label{sec:Ck-map} 
\begin{definition}[Whitney $C^k$ topology~\cite{abraham1967transversal}]
\label{def:whitney-topology}
Let $M$ be a compact smooth manifold and $N$ a smooth manifold.
Fix a finite atlas $\{(U_i,\phi_i)\}$ of $M$ and charts $\{(V_i,\psi_i)\}$ of $N$
such that $f(U_i)\subset V_i$.
For $\epsilon>0$, define the basic neighborhood $B_\epsilon(f)$ to be the set
of all $g\in C^k(M,N)$ satisfying $g(U_i)\subset V_i$ and
\[
\bigl\|
D^\alpha(\psi_i\circ g\circ\phi_i^{-1})(\phi_i(x))
-
D^\alpha(\psi_i\circ f\circ\phi_i^{-1})(\phi_i(x))
\bigr\| < \epsilon
\]
for all $i$, all $x\in U_i$, and all multi-indices $|\alpha|\le k$.
The Whitney $C^k$ topology on $C^k(M,N)$ is the topology generated by these neighborhoods. When $N=\mathbb R^m$, this topology is induced by the norm
\[
\|f\|_{C^k} := \max_{|\alpha|\le k}\sup_{x\in M}\|D^\alpha f(x)\|,
\]
hence $C^k(M,\mathbb R^m)$ is a Banach space.
For general $N$, $C^k(M,N)$ carries a natural structure of a Banach manifold.
Here $\|\cdot\|$ denotes any norm on the relevant finite-dimensional vector space;
different choices yield the same topology.
\end{definition}

\subsection{Banach Manifolds}
\label{sec:banach}

\begin{definition}[Banach manifold~\cite{abraham1967transversal}]
A Banach manifold $\mathcal P$ is a Hausdorff, second-countable topological space
equipped with an atlas $\{(U_i,\phi_i)\}$ such that $\{U_i\}$ is an open cover of $\mathcal P$,
each chart $\phi_i:U_i\to E_i$ is a homeomorphism onto an open subset of a Banach space $E_i$,
and for any indices $i,j$ with $U_i\cap U_j\neq\varnothing$, the transition map
$\phi_j\circ\phi_i^{-1}:\phi_i(U_i\cap U_j)\to\phi_j(U_i\cap U_j)$
is $C^k$ in the sense of Fréchet differentiability.
\end{definition}

\begin{remark}
The parameter space $\mathcal P$ will typically be taken to be a $C^k$ Banach manifold.
In particular, if $M$ is compact, the mapping space $C^k(M,N)$ admits a natural
$C^\infty$ Banach manifold structure constructed using local charts of $N$
and the $C^k$ topology on $M$.
When $N=\mathbb R^m$, this structure is modeled on the Banach space $C^k(M,\mathbb R^m)$
with norm
\[
\|f\|_{C^k} := \max_{|\alpha|\le k}\sup_{x\in M}\|D^\alpha f(x)\|.
\]
For general target manifolds $N$, the resulting Banach manifold structure is canonical
up to smooth equivalence.
This framework allows transversality results to be applied not only to finite-dimensional
parameter families but also to infinite-dimensional spaces of perturbations of a given map; see~\cite{abraham1967transversal}.
\end{remark}



\subsection{Baire Space and Residual Sets}

\label{sec:baire}
\begin{definition}[Baire Space~\cite{abraham1967transversal}]
    A topological space $X$ is a Baire space if for every countable collection of open dense subsets $\{\mathcal{V}_n\}_{n=1}^\infty$ in $X$, their intersection $\bigcap_{n=1}^\infty \mathcal{V}_n$ is dense in $X$. 
\end{definition}

\begin{definition}[Residual Set~\cite{abraham1967transversal}]
    A subset $R \subseteq X$ is called a residual set (or a generic set) if it contains a countable intersection of open dense sets.  By the Baire Category Theorem, every complete metric space (and thus every Banach manifold) is a Baire space.  In this paper, we say a property of the embedding is generic if the set of functions $h \in \mathcal{P}$ satisfying the property is a residual set in the Whitney $C^k$ topology. 
\end{definition}

\begin{example}
    Let $X = \mathbb{R}$ with the standard Euclidean topology. We consider the following subsets to illustrate the hierarchy of density and genericity:
    \begin{itemize}
        \item \textbf{Dense set:} The set of rational numbers $\mathbb{Q}$ is dense in $\mathbb{R}$ because every non-empty open interval $(a, b)$ contains a rational number. However, it is not a Baire space in its relative topology, nor is it open or residual.
        
        \item \textbf{Open dense set:} Let $S = \mathbb{R} \setminus \mathbb{Z}$ (the reals minus the integers). This set is open (the union of intervals $(n, n+1)$) and dense in $\mathbb{R}$. The open and dense set is also residual. 
        
        \item \textbf{Residual set (Generic set):} The set of irrational numbers $\mathbb{R} \setminus \mathbb{Q}$ is a residual set. Since $\mathbb{Q} = \{q_1, q_2, \dots\}$ is countable, we can write $\mathbb{R} \setminus \mathbb{Q} = \bigcap_{n=1}^\infty (\mathbb{R} \setminus \{q_n\})$. Each $V_n = \mathbb{R} \setminus \{q_n\}$ is open and dense, making their intersection residual. Note that while $\mathbb{Q}$ and $\mathbb{R} \setminus \mathbb{Q}$ are both dense, only the latter is residual.
        
        \item \textbf{Nowhere dense set:} The standard Cantor set $\mathcal{C}$ or the set of integers $\mathbb{Z}$ are nowhere dense. Their closures have an empty interior; they contain no open intervals, meaning they are "full of holes" at every scale.
    \end{itemize}
\end{example}

\begin{remark}[Countable intersection]
    The countable intersection of a residual set is also residual. Thus, we can leverage this property to find residual sets that satisfy different conditions, and their non-empty intersection is residual and satisfies all the conditions.
\end{remark}

\subsection{Transversality Conditions}


Though the transversality condition usually requires checking both the differential and the tangent tangent space of $Z$, we note that if the tangent map spans the entire space of the image, the transversality condition is satisfied automatically:
\begin{proposition}[Sufficient Condition for Transversality]
\label{prop:parametric-sufficient}
    Let $\Gamma: M \times P \to N$ be the total map as defined in \Cref{thm:parametric}. 
    The transversality condition $\Gamma \pitchfork Z$ is satisfied if, for every $(x, p) \in \Gamma^{-1}(Z)$, the total differential
    \begin{equation}
        d\Gamma_{(x, p)}: T_x M \times T_p P \to T_{\Gamma(x, p)} N
    \end{equation}
    is surjective. 
\end{proposition}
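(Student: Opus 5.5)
We prove \Cref{prop:parametric-sufficient} directly from the definition of transversality, \eqref{eq:transverality}, applied to the total map $\Gamma$ viewed as a $C^k$ map from the product Banach manifold $M\times P$ into the finite-dimensional manifold $N$. The idea is that \eqref{eq:transverality} only requires the image of the differential, together with the tangent space of $Z$, to fill $T_{\Gamma(x,p)}N$. If the image of $d\Gamma_{(x,p)}$ alone already equals $T_{\Gamma(x,p)}N$, then adding $T_{\Gamma(x,p)}Z$ cannot make the sum any smaller.

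The argument has three steps.

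\textbf{Step 1 (tangent spaces of the product).} We identify the tangent space of $M\times P$ at $(x,p)$ with $T_xM\times T_pP$. In a product chart this is a product of a finite-dimensional space with a Banach space. The differential $d\Gamma_{(x,p)}$ is then the bounded linear map $(\xi,\eta)\mapsto D_x\Gamma(\xi)+D_p\Gamma(\eta)$. It is well defined and continuous because $\Gamma$ is $C^k$ with $k\ge 1$, which is condition (i) of \Cref{thm:parametric}.

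\textbf{Step 2 (reduction to points of $\Gamma^{-1}(Z)$).} Transversality $\Gamma\pitchfork Z$ is a condition only at points $(x,p)\in\Gamma^{-1}(Z)$. If this set is empty, the condition holds vacuously. So we fix an arbitrary $(x,p)\in\Gamma^{-1}(Z)$ and set $y=\Gamma(x,p)\in Z$.

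\textbf{Step 3 (the inclusion argument).} Both $\operatorname{Im}(d\Gamma_{(x,p)})$ and $T_yZ$ are linear subspaces of $T_yN$, so their sum is also a subspace of $T_yN$. This gives
\[
T_yN=\operatorname{Im}(d\Gamma_{(x,p)})\subseteq \operatorname{Im}(d\Gamma_{(x,p)})+T_yZ\subseteq T_yN.
\]
The first equality is the surjectivity hypothesis. The middle inclusion holds because $0\in T_yZ$. The last inclusion holds because both summands lie in $T_yN$. Hence equality holds throughout, which is exactly \eqref{eq:transverality} at $(x,p)$. Since $(x,p)$ was arbitrary, $\Gamma\pitchfork Z$.

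The only real subtlety is Step 1: transversality must be understood for a map whose domain is infinite-dimensional. The definition of transversality was stated for finite-dimensional manifolds. In the Banach setting of \cite{abraham1967transversal}, the definition also asks that the preimage $(d\Gamma_{(x,p)})^{-1}(T_yZ)$ split, meaning it admits a closed complement in $T_xM\times T_pP$.

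This extra condition is automatic here. The codomain $T_yN$ is finite-dimensional, so the surjection $d\Gamma_{(x,p)}$ has a closed kernel of finite codimension, and any subspace of finite codimension has a closed complement. The preimage $(d\Gamma_{(x,p)})^{-1}(T_yZ)$ contains this kernel and so also has finite codimension. It is also closed, since it is the preimage of a closed subspace under a continuous map. Therefore it splits.

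With that observation, the linear-algebra inclusion in Step 3 completes the proof.
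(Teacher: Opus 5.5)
Your proof is correct and is essentially the paper's argument: surjectivity of $d\Gamma_{(x,p)}$ already gives $\operatorname{Im}(d\Gamma_{(x,p)}) = T_{\Gamma(x,p)}N$, so adding $T_{\Gamma(x,p)}Z$ cannot shrink the sum. Your additional check of the Banach-manifold splitting requirement is a valid refinement that the paper's proof omits: $(d\Gamma_{(x,p)})^{-1}(T_{\Gamma(x,p)}Z)$ is automatically complemented because it is closed and of finite codimension, $N$ being finite-dimensional.
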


\begin{proof}
    By definition, the map $\Gamma$ is transversal to $Z$ if the following condition holds for all $(x, p) \in \Gamma^{-1}(Z)$:
    \begin{equation}
        \operatorname{Im}(d\Gamma_{(x, p)}) + T_{\Gamma(x, p)}Z = T_{\Gamma(x, p)}N.
    \end{equation}
    If the total differential $d\Gamma_{(x, p)}$ is surjective, its image is the entire tangent space, i.e., $\operatorname{Im}(d\Gamma_{(x, p)}) = T_{\Gamma(x, p)}N$. Consequently, the sum of $\operatorname{Im}(d\Gamma_{(x, p)})$ and any subspace of $T_{\Gamma(x, p)}N$ (including $T_{\Gamma(x, p)}Z$) trivially recovers the entire tangent space $T_{\Gamma(x, p)}N$. 
    
\end{proof}

\clearpage
\section{Proof of \Cref{thm:main}}
\label{appx:proof}
Consider the main existence theorem of this work:
\maintheorem*
We apply four steps to show the existence of $f$: (1) Design $f$ on the guard surface $S$ and its image after the reset $r(S)$; (2) Decide when $f$ is injective and surjective; and (3) Extend the property from the neighborhood of $S \cup r(S)$ to entire $M$. The geometry of the construction is illustrated in \Cref{fig:xspace2zspace}. 

For simplification, we denote $R:=r(S)$ as the image of $r(x), \forall x \in S$.
In this work, we use the Whitney Topology as shown in \Cref{def:whitney-topology} for all $C^k$ functions in the Baire space. 
\subsection{Determine the Embedding on the Guard Surface}

In this section, we show that we can enforce \eqref{eq:value-compatible} by an embedding $g:S \rightarrow \mathbb{R}^m$ to specify the value on $S$ and enforce \eqref{eq:derivative-compatible} by  $g_S:S\rightarrow \mathbb{R}^m$ to determine the derivative and value inward $S$.
\begin{lemma}[Value compatibility]
\label{lemma:value-compat}
    Consider the embedding on the boundary: $f_S:S\rightarrow \mathbb{R}^m$ and $f_R: R\rightarrow \mathbb{R}^m$. When $m > 2(n-1)$, there exist a smooth embedding $g:S\to\mathbb{R}^m$ such that $f_S(x)=g(x)$ and $f_R(x)=g\circ r^{-1}(x)$ that satisfy $f_S(x) = f_R(r(x)), \forall x \in S$. 
\end{lemma}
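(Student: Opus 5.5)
The plan is to obtain $g$ directly from the weak Whitney Embedding Theorem (\Cref{theorem:whitney}) applied to the guard $S$, and then to define $f_R$ by transporting $g$ through the reset map. By the definition of $\mathcal{H}$, $S$ is a smooth $(n-1)$-dimensional submanifold of $\partial M$, and it is compact by \Cref{assumption:compact}. Hence, whenever $m>2(n-1)$, \Cref{theorem:whitney} provides a smooth embedding $g:S\to\mathbb{R}^m$. Because $m>2n$ in \Cref{thm:main}, this hypothesis holds automatically there. We then set $f_S:=g$.

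Next, since $r:S\to R$ is a diffeomorphism onto the embedded submanifold $R=r(S)$, the inverse $r^{-1}:R\to S$ is smooth. We define $f_R:=g\circ r^{-1}$. The compatibility identity is then immediate: for every $x\in S$,
\begin{equation*}
f_R(r(x)) = g(r^{-1}(r(x))) = g(x) = f_S(x),
\end{equation*}
which is exactly \eqref{eq:value-compatible} restricted to the boundary pieces.

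It remains to check that $f_R$ is itself an embedding of $R$. By the chain rule, $Df_R(y)=Dg(r^{-1}(y))\circ Dr^{-1}(y)$ is a composition of an injective linear map with an isomorphism, so it is injective. Moreover, $f_R$ is a composition of two homeomorphisms onto their images, so it is a homeomorphism onto $f_R(R)=g(S)$. Consequently $f_S$ and $f_R$ have the same image $g(S)$. Under the identification $x\sim r(x)$, they therefore induce a single well-defined embedding of the glued boundary $S/\!\sim$ into $\mathbb{R}^m$.

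The only delicate point is where the closures $\overline{S}$ and $\overline{R}$ meet (\Cref{assumption:instant}), because there the two definitions of $f$ could conflict. Since $S\cap R=\emptyset$, the definitions of $f_S$ and $f_R$ never overlap on the open pieces. For compatibility at limit points, I would first try to extend $g$ continuously to $\overline{S}$ and to use the codimension-$\geq 2$ hypothesis to argue that this set is negligible. Alternatively, I would defer this issue to the later extension step, since the present lemma concerns only $S$ and $R$ themselves.

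Beyond this point, no genericity argument is needed. If a generic version is desired, the embeddings $g$ form an open dense subset of $C^k(S,\mathbb{R}^m)$ by the same transversality proof that underlies \Cref{theorem:whitney}, and the construction above preserves that genericity.
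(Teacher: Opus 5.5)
Your proposal is correct and matches the paper's proof: apply the weak Whitney theorem to the $(n-1)$-dimensional guard $S$ to get an embedding $g$, set $f_S=g$ and $f_R=g\circ r^{-1}$, and check that $f_R(r(x))=g(x)=f_S(x)$. Your extra check that $f_R$ is an embedding is a harmless addition the paper omits, and the closure issue you flag does not actually arise: compactness of $S$ (and hence of $R=r(S)$) makes both sets closed in $M$, so $\overline S\cap\overline R=S\cap R=\emptyset$.
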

\begin{proof}
    Since $S$ is a smooth $(n-1)$-dimensional embedded submanifold, \Cref{theorem:whitney} ensures the set of a smooth embedding $g:S\to\mathbb{R}^m$ is residual \cite{hirsch2012differential, abraham1967transversal} when $m>2(n-1)$. Then we choose such a $g$ to determine the value of $f$ on $S\cup R$:
\begin{equation}
f_S(x)=g(x), \forall x\in S,
\end{equation}
\begin{equation}
f_R(y)=g\circ r^{-1}(y), \forall y\in R.
\end{equation}
Then for every $x\in S$ we can verify that \eqref{eq:value-compatible} is satisfied on $S\cup R$ by checking:
\begin{equation}
f_R(r(x))=g\circ r^{-1}(r(x))=g(x)=f_S(x).
\end{equation}
\end{proof}

Given $g$, we can enforce \eqref{eq:value-compatible} of $f$ on the $S$- and $R$-side in the collar coordinates. Then, we introduce the first-order collar extension to extend the value of $f$ into the interior of $M$.
\begin{remark}[First-order collar extension]
    Given boundary embedding $f_S: S \rightarrow \mathbb{R}^m$, the collar embedding $\kappa_S: S \times [0, \epsilon) \rightarrow M$, and a prescribed smooth function $g_S:S\rightarrow \mathbb{R}^m$, the map $f_S(x,t): \kappa_S \rightarrow M$
\begin{equation}
\label{eq:1st-collar}
    \bar f_S(x,t)=f_S(x)+t\,g_S(x)
\end{equation}
defines the first-order Taylor extension of $f$ along the collar
direction $t$. This construction enforces the boundary value and the derivative inward $\partial M$
at $t=0$ while leaving higher-order behavior unspecified.
\end{remark}

Then we show that once the derivative on the $S$-side is determined by the first-order extension $\bar f_S$, the collar extension on $R$-side, i.e., $\bar f_R$ is uniquely determined if \eqref{eq:derivative-compatible} is enforced. 
\begin{lemma}[Derivative compatibility]
\label{lemma:1st-compat}
    Under \Cref{assumption:traverse}, for any smooth function $g_S: S \rightarrow \mathbb{R}^m$, there exists a unique $g_R: R \rightarrow \mathbb{R}^m$ such that the first-order collar extension of $\bar{f}_S(x)$ and $\bar{f}_R(x)$ satisfy the \eqref{eq:derivative-compatible} for $\forall x \in S$.
\end{lemma}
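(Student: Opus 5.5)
We work in collar coordinates. At a point $x\in S$, decompose $V(x)$ in the collar chart $\kappa_S$ (\Cref{thm:collar}) as a tangential part $V_S^\parallel(x)\in T_xS$ plus a normal part $-\alpha_S(x)\,\partial_t$. By \Cref{assumption:traverse}, $V$ points outward on $S$, so $\alpha_S(x)>0$; because $S$ is compact (\Cref{assumption:compact}), $\alpha_S$ is bounded away from zero. At $y=r(x)\in R$ we decompose in the same way, $V(y)=V_R^\parallel(y)+\alpha_R(y)\,\partial_t$. Here $\alpha_R(y)>0$ because $V$ points inward on $R$, and again $\alpha_R$ is bounded below by compactness of $R=r(S)$.

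Next we differentiate the first-order extensions at $t=0$. From \eqref{eq:1st-collar} we get
\[
D\bar f_S(x,0)\,V(x)=Dg(x)\,V_S^\parallel(x)-\alpha_S(x)\,g_S(x).
\]
On the $R$-side, $f_R=g\circ r^{-1}$, which gives
\[
D\bar f_R(y,0)\,V(y)=Dg(x)\,Dr^{-1}(y)\,V_R^\parallel(y)+\alpha_R(y)\,g_R(y).
\]
Here $Dr^{-1}(y)$ is well defined on $T_yR$ because $r$ is a diffeomorphism onto the embedded submanifold $R$.

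Imposing \eqref{eq:derivative-compatible} at $y=r(x)$ leads to a pointwise linear equation in which $g_R(y)$ appears with the scalar coefficient $\alpha_R(y)\neq 0$. Its unique solution is
\[
g_R(y)=\frac{1}{\alpha_R(y)}\Bigl(Dg(x)\,V_S^\parallel(x)-\alpha_S(x)\,g_S(x)-Dg(x)\,Dr^{-1}(y)\,V_R^\parallel(y)\Bigr),\qquad x=r^{-1}(y).
\]
Existence and uniqueness follow together from the fact that $g_R$ enters linearly with an invertible scalar coefficient. Since $\alpha_R$, $r^{-1}$, $g$, $g_S$ and the collar decomposition are all smooth, $g_R$ is smooth (or $C^k$ at the appropriate regularity).

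The main obstacle I anticipate is making the normal/tangential splitting rigorous. The decomposition depends on the chosen collar, so $\partial_t$ has to be defined as $D\kappa_S(x,0)\partial_t$. We then need to check that $\alpha_S$ and $\alpha_R$ are smooth and nonvanishing. A nonzero inward or outward normal component is exactly the statement of \Cref{assumption:traverse}, read in the chart $T_xM=T_xS\oplus\mathrm{span}\{\partial_t\}$. We also need to ensure that the $C^k$ regularity of $g_R$ is not lost when differentiability of the collar is only $C^k$.
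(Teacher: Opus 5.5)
Your proposal is correct and follows essentially the same route as the paper. You split the (pulled-back) vector field in collar coordinates as $T_xS\oplus\mathrm{span}\{\partial_t\}$, use that $D\bar f_S(x,0)$ acts as $Df_S$ on the tangential part and as $g_S$ on $\partial_t$, and solve the resulting pointwise linear equation for $g_R$ using that the normal coefficient on $R$ is nonzero by \Cref{assumption:traverse}. Your formula coincides with the paper's $g_R(r(x))=\frac{\nu_S(x)}{\nu_R(r(x))}g_S(x)+c(x)$ under $\nu_S=-\alpha_S$, $\nu_R=\alpha_R$. The compactness/bounded-below remark is unnecessary here, since pointwise nonvanishing of $\alpha_R$ already gives smoothness of $1/\alpha_R$, but it does no harm.
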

\begin{proof}
As $S\in \partial M$ is an $(n-1)$-dimensional submanifold on the boundary, we study the property of $f$ on $S\cup R$ on the collar coordinate for convenience. 

With an sufficiently small $\epsilon > 0$, we define the collar embedding on $S$ and $R$ as  $$\kappa_S:S\times[0,\epsilon)\to M \text{ and } \kappa_R:R\times[0,\epsilon)\to M.$$ Thus, we have $\kappa_R$ is an immersion and we can define the pullbacks of $V$ in collar coordinates: 
\begin{equation}
 \overline{V}_S(x,t)=(D\kappa_S(x,t))^{-1}V(\kappa_S(x,t)) 
\end{equation}
By the direct sum decomposition of $T_xM, x \in S$, into the tangential component on $T_xS$ and the direction into the interior of $M$, i.e.,
$T_{(x,0)}(S\times[0,\epsilon))\cong T_xS\oplus \mathrm{span}\{\partial_t\}$, we decompose the vector field $\overline{V}_{(\cdot)}$ uniquely by:
\begin{equation}
\label{eq:S-decompose}
\overline V_S(x,0)=\xi_S(x)+\nu_S(x)\partial_t,
\end{equation}
with $\xi_S \in T_xS$ and $\nu_S \in \mathbb{R}$ (and similarly on $R$).


Choose a $g_S:S\to\mathbb{R}^m$ and $g_R:R\rightarrow \mathbb{R}^m$, we construct the first-order collar extensions of $f_S$ and $f_R$:
\begin{equation}
\label{eq:collar-1st}
\begin{aligned}
    \bar f_S(x,t)&=f_S(x)+t g_S(x), \forall x \in S, \\
    \bar f_R(y,t)&=f_R(y)+t g_R(y), \forall y \in R.
\end{aligned}
\end{equation}
We can see that \eqref{eq:collar-1st} satisfy the boundary value conditions by checking that whenever $y = r(x)$:
\begin{equation}
\bar f_S(x,0)=f_S(x) = (f_S \circ r^{-1}) \circ r(x) = f_R(r(x)) = \bar f_R(r(x),0)
\end{equation}
Morever, the prescribed derivative into the interior of $S$ and $R$ can be determined by
\begin{equation}
\partial_t\bar f_S(x,0)=g_S(x), \partial_t\bar f_R(x,0)=g_R(x).
\end{equation}
By the direct sum decomposition of the tangent map on the collar coordinate and the decomposition of $\overline{V}_S$ in \eqref{eq:S-decompose} for $\xi_S \in T_xS$ and $\nu_S\in \mathbb{R}$, we have the unique decomposition:
\begin{equation}
\label{eq:S-vel}
D\bar f_S(x,0)\overline V_S(x,0)=Df_S(x)\xi_S(x)+\nu_S(x)g_S(x),
\end{equation}
and similarly for the $R$-side:
\begin{equation}
\label{eq:R-vel}
\begin{aligned}
    D\bar f_R(r(x),0)\overline V_R(r(x),0) = Df_R(r(x))\xi_R(r(x))+\nu_R(r(x))g_R(r(x)).
\end{aligned}
\end{equation}
To impose \eqref{eq:derivative-compatible} for $\forall x \in S$, we enforce the require \eqref{eq:S-vel} and \eqref{eq:R-vel} to equal:
\begin{equation}
D\bar f_S(x,0)\overline V_S(x,0)=D\bar f_R(r(x),0)\overline V_R(r(x),0).
\end{equation}
Thus, we yield the equation to decide $g_R$ from $g_S$
\begin{equation}
\label{eq:lin-eq-gR}
\begin{aligned}
    \nu_R(r(x))g_R(r(x)) = Df_S(x)\xi_S(x)+ \nu_S(x)g_S(x)-Df_R(r(x))\xi_R(r(x)).
\end{aligned}
\end{equation}
Assumption~\ref{assumption:traverse} implies the flow transverses $S$ by pointing outward and transverse $R$ by pointing inward, hence
\begin{equation}
\label{eq:nu-sign}
\nu_S(x) < 0,\forall x\in S, \text{and }\nu_R(y) > 0,\forall y\in R.
\end{equation}
Thus, solving \eqref{eq:lin-eq-gR} yield a \emph{unique} solution
\begin{equation}\label{eq:gRaffine}
g_R(r(x))=\frac{\nu_S(x)}{\nu_R(r(x))}\,g_S(x)+c(x),
\end{equation}
with a fixed smooth term $c(x)$ independent of the choice of $g_S$:
\begin{equation}
\label{eq:cx}
c(x) = \frac{Df_S(x)\xi_S(x)-Df_R(r(x))\xi_R(r(x))}{\nu_R(r(x))}.
\end{equation} 
\end{proof}

All terms on the RHS of \eqref{eq:gRaffine} depend smoothly on $x$, so $g_R$ is smooth on $R$ and uniquely determined by $g_S$. Thus, we can control $g_S$ to enforce \eqref{eq:derivative-compatible} on $S\cup R$. 

\subsection{Immersion and Injectivity on the Collar Neighborhood}
\subsubsection{Immersion on collar neighborhood}
Now we derive the condition on which the first-order collar extension \eqref{eq:1st-collar} is an immersion (full rank) in the neighborhood of $S \cup U$. Define collar neighborhood $ U := U_S \cup U_R$ with
\begin{equation}
U_S=\kappa_S(S\times[0,\epsilon)), U_R=\kappa_R(R\times[0,\epsilon)).
\end{equation}
For fixed $g$ and $g_S$, by \Cref{lemma:value-compat} and \ref{lemma:1st-compat}, we have the candidate $f$ defined on $U$ induced by the collar embedding:
\begin{equation}
\label{eq:f-collar}
   f:U\rightarrow \mathbb{R}^m, \quad  f(x) = \begin{cases}
    \bar f_S(\kappa_{S}^{-1}(x)), x \in U_S \\
    \bar f_R(\kappa_{R}^{-1}(x)), x \in U_R
\end{cases}. 
\end{equation}
Now we proceed to prove that $f$ is an immersion on $U$ for a generic choice of $g_S$. 
\begin{lemma}[Generic immersion on the collar]
\label{lem:imm-collar}
    Assume $m>2n-2$.  
For a fixed embedding $g: S \rightarrow \mathbb{R}^m$, there exists a residual subset
$\mathcal{F}_{\mathrm{imm}}\subset C^k(S,\mathbb{R}^m)$ such that for all
$g_S\in\mathcal{F}_{\mathrm{imm}}$, the map $f$ defined by
\eqref{eq:f-collar} is an immersion on $U$.
\end{lemma}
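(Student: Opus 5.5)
The plan is to reduce immersivity of $f$ on $U$ to a pointwise condition along $S$ at $t=0$. That condition is a transversality (avoidance) condition on the $1$-jet-free data $g_S$, which \Cref{thm:parametric} makes generic. Openness and compactness then carry it to a thin collar. I would work in collar coordinates, since $\kappa_S,\kappa_R$ are embeddings and immersivity of $f$ on $U_S$ is equivalent to immersivity of $\bar f_S$ on $S\times[0,\epsilon)$. For $(\xi,\tau)\in T_xS\oplus\mathbb{R}\partial_t$,
\begin{equation*}
D\bar f_S(x,t)(\xi,\tau)=Dg(x)\xi+t\,Dg_S(x)\xi+\tau\,g_S(x).
\end{equation*}
At $t=0$ this is $Dg(x)\xi+\tau g_S(x)$. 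Because $g$ is an embedding (\Cref{lemma:value-compat}), $Dg(x)$ is injective, so $D\bar f_S(x,0)$ is injective if and only if
\begin{equation*}
g_S(x)\notin \operatorname{Im}Dg(x).
\end{equation*}
On the $R$-side, $Df_R(r(x))=Dg(x)\circ Dr^{-1}(r(x))$ has the same image $\operatorname{Im}Dg(x)$. By \eqref{eq:gRaffine} the corresponding condition becomes
\begin{equation*}
\tfrac{\nu_S(x)}{\nu_R(r(x))}g_S(x)+c(x)\notin\operatorname{Im}Dg(x).
\end{equation*}
Since $\nu_S/\nu_R\neq 0$ by \eqref{eq:nu-sign}, this is equivalent to $g_S(x)\notin \operatorname{Im}Dg(x)-\tfrac{\nu_R(r(x))}{\nu_S(x)}c(x)$.

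Next I set up both bad sets as submanifolds of $N:=S\times\mathbb{R}^m$:
\begin{equation*}
Z_S=\{(x,v): v\in \operatorname{Im}Dg(x)\},\qquad
Z_R=\{(x,v): v+\tfrac{\nu_R(r(x))}{\nu_S(x)}c(x)\in \operatorname{Im}Dg(x)\}.
\end{equation*}
Each is a smooth affine subbundle with fibres of dimension $n-1$. It is closed because $S$ is compact and $x\mapsto\operatorname{Im}Dg(x)$ is a smooth map into the Grassmannian. Its codimension in $N$ is $m-(n-1)$. Take $\mathcal P=C^k(S,\mathbb{R}^m)$ and the total map $\Gamma(x,g_S)=(x,g_S(x))$, which is $C^k$. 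Its differential in the parameter direction, $h\mapsto(0,h(x))$, already fills the $\mathbb{R}^m$ factor (constant perturbations $h$ suffice), and the $x$-direction fills $T_xS$. So $d\Gamma$ is surjective, and \Cref{prop:parametric-sufficient} gives $\Gamma\pitchfork Z_S$ and $\Gamma\pitchfork Z_R$.

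\Cref{thm:parametric} then yields residual sets of $g_S$ whose graph is transverse to $Z_S$ and to $Z_R$. By \Cref{thm:preimage}, the preimage has dimension $(n-1)-(m-n+1)=2n-2-m<0$ when $m>2n-2$, so it is empty. Intersecting the two residual sets gives a residual $\mathcal F_0$ on which both $\bar f_S$ and $\bar f_R$ are immersions at $t=0$ for every $x$.

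The step I expect to be the main obstacle is passing from $t=0$ to the whole collar $[0,\epsilon)$. The $t\,Dg_S(x)\xi$ term can destroy injectivity for larger $t$, and the admissible $\epsilon$ depends on $g_S$. To handle this, I would use that injectivity of a linear map is an open condition and that $S$ and $R$ are compact. By continuity of $(x,t)\mapsto D\bar f_S(x,t)$, injectivity at $t=0$ extends to $S\times[0,\epsilon')$ for some $\epsilon'(g_S)>0$, and likewise on the $R$-side. I would then shrink the collar, and hence $U$, to $\epsilon'$; this is harmless because the statement only concerns some collar neighborhood $U$. To keep residuality in the Whitney $C^k$ topology, I would note that the set of $g_S$ for which the $t=0$ condition holds is in fact open, by compactness and the continuous dependence of $g_S(x)$ on $g_S$ in $C^0\subset C^k$. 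It is also dense by the above, so $\mathcal F_{\mathrm{imm}}:=\mathcal F_0$ works, with the collar width chosen per $g_S$.
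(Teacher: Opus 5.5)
Your proposal is correct and follows essentially the same route as the paper. You reduce immersivity at $t=0$ to the avoidance conditions $g_S(x)\notin\operatorname{Im}Dg(x)$ and its affine $R$-side counterpart, realize the bad sets as codimension-$(m-n+1)$ subbundles of $S\times\mathbb{R}^m$, apply parametric transversality to the evaluation map with the preimage dimension count, and extend to a collar of $g_S$-dependent width by openness of full rank and compactness. Your refinements (constant perturbations instead of bump functions, explicit openness of the good set) are fine, and your identity $\operatorname{Im}Df_R(r(x))=\operatorname{Im}Dg(x)$ together with \eqref{eq:cx} gives $c(x)\in\operatorname{Im}Dg(x)$, so $Z_R=Z_S$ and the $R$-side condition is equivalent to the $S$-side one.
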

\begin{proof}
    Recall the first-order collar extension \eqref{eq:collar-1st}, where $f_S=g$ is a smooth embedding of the
$(n-1)$-manifold $S$ and $f_R=g\circ r^{-1}$, thus we have $Df_S: T_xS \rightarrow \mathbb{R}^m$ and $ Df_R: T_xR\rightarrow \mathbb{R}^m$ to satisfy the following rank condition:
\begin{equation}
\operatorname{rank}(Df_S(x))=\operatorname{rank}(Df_R(y))=n-1.
\end{equation}
Therefore $D\bar f_S(x,0)$ and $D\bar f_R(y,0)$ have rank $n$ if and only if $g_S(x)$ and $g_R(y)$ satisfy
\begin{equation}\label{eq:immS}
g_S(x)\notin \operatorname{Im}(Df_S(x)),
\end{equation}
\begin{equation}
\label{eq:immR}
g_R(y)\notin \operatorname{Im}(Df_R(y)) \Rightarrow g_S(x) \notin A_S(x).
\end{equation}
with $A_S(x)\subset\mathbb{R}^m$ the affine subspace obtained by pulling back
$\operatorname{Im}(Df_R(r(x)))$ under \eqref{eq:gRaffine}. We note that both $\operatorname{Im}(Df_S)$ and $A_S(x)$ are independent of $g_S$. 

For fixed $g$, define the subsets of $S\times\mathbb{R}^m$ that violate \eqref{eq:immS} and \eqref{eq:immR}:
\begin{equation}
\begin{aligned}
B_S &:= \{(x,v): v\in \operatorname{Im}(Df_S(x))\},\\
B_R &:= \{(x,v): v\in A_S(x)\}.
\end{aligned}
\end{equation}
Since $Df_S$ has constant rank $n-1$, its image 
$\{\operatorname{Im}(Df_S(x))\}_{x \in S}$ forms a smooth vector subbundle 
$E_S$ of the trivial bundle $S \times \mathbb{R}^m$. By the constant rank theorem, the family of subspaces $\{ \operatorname{Im}(Df_S(x))\}_{x\in S}$ fits together to form a smooth vector subbundle. The total space $B_S = \{(x, v) \in S \times \mathbb{R}^m : v \in \operatorname{Im}(Df_S(x))\}$ 
is therefore an embedded submanifold. 
Similarly, for $B_R$, as each $A_S(x)$ is obtained via a smooth affine pullback 
of the constant-rank image of $Df_R$, the collection $\{A_S(x)\}_{x \in S}$ 
defines a smooth affine subbundle. Its total space $B_R$ is likewise an 
embedded submanifold of $S \times \mathbb{R}^m$. 

Thereofore, both $B_S$ and $B_R$ have dimension
\begin{equation}
\dim B_S=\dim B_R=(n-1)+(n-1),
\end{equation}
while the ambient space $S\times \mathbb{R}^m$ has dimension $(n-1)+m$. Hence we have
\begin{equation}
\label{eq:codim-Bs-BR}
\operatorname{codim}(B_S)=\operatorname{codim}(B_R)=m-(n-1).
\end{equation}
Then we apply \Cref{thm:parametric}. We first define the following total map: 
\begin{equation}\label{eq:ev-map}
\Gamma:S\times C^k(S,\mathbb{R}^m)\to S\times\mathbb{R}^m, (x,g)\mapsto (x,g(x)).
\end{equation}
For fixed $g_S\in C^k(S,\mathbb{R}^m)$ we have the slice map:
\begin{equation}
\Gamma(\cdot\ | \ g_S):S\to S\times\mathbb{R}^m, x \rightarrow (x,g_S(x)).
\end{equation}
We claim that $\Gamma$ is a submersion, which implies it is transverse to any submanifold of $S \times \mathbb{R}^m$, including $B_S$ and $B_R$. 
At any point $(x, g)$, the differential $d\Gamma_{(x, g)}: T_x S \times C^k(S, \mathbb{R}^m) \to T_x S \times \mathbb{R}^m$ is given by:
\begin{equation}
\label{eq:gammma-differential}
d\Gamma_{(x, g)}(u, h) = \bigl( u, \, Dg(x)u + h(x) \bigr),
\end{equation}
with $u \in T_xS$ and $h \in C^k(S, \mathbb{R}^m)$. To show surjectivity of $\Gamma$, let $(u, w) \in T_x S \times \mathbb{R}^m$ be an arbitrary vector. By choosing $h(y) = \rho(y)(w - Dg(x)u)$, where $\rho \in C_c^\infty(S)$ is a local bump function supported near $x$ with $\rho(x) = 1$. Thus, we can see that $h(x)$ can reach any value specified by $u$, and thus $d\Gamma_{(x, g)}$ is surjective at every point. 
Therefore $\Gamma\pitchfork B_S$ and $\Gamma\pitchfork B_R$ on $S\times C^k(S,\mathbb{R}^m)$.
By \Cref{thm:parametric}, the sets 
\begin{align}
    \mathcal{F}_{1} =  \{g_S\in C^k(S,\mathbb{R}^m):\ \Gamma(\cdot\ | \ g_S)\pitchfork B_S\} \text{ and } \\
    \mathcal{F}_{2}=\{g_S\in C^k(S,\mathbb{R}^m):\ \Gamma(\cdot\ | \ g_S)\pitchfork B_R\},
\end{align}
are residual in $C^k(S,\mathbb{R}^m)$, so as their intersection 
\begin{equation}
    \mathcal{F}_{\operatorname{imm}} := \mathcal{F}_1\cap \mathcal{F}_2.
\end{equation}
Thus, for $\forall g_S \in \mathcal{F}_{\operatorname{imm}}$, $\Gamma(\cdot\ | \ g_S)\pitchfork B_S
$ and $\Gamma(\cdot\ | \ g_S)\pitchfork B_R$.
By \Cref{thm:preimage}, $\Gamma^{-1}(B_S \ | \ g_S)$ is a submanifold of $S$ with the dimension of the preimage being 
\begin{equation}
\begin{aligned}
\dim \Gamma^{-1}(B_S \ | \ g_S)=\dim S-\operatorname{codim}(B_S),
\end{aligned}
\end{equation}
By $\dim S=n-1$ and \eqref{eq:codim-Bs-BR} for $\operatorname{codim}(B_S)=m-(n-1)$, we have
\begin{equation}
\dim \Gamma^{-1}(B_S \ | \ g_S)=(n-1)-(m-(n-1)) = 2(n-1) - m.
\end{equation}
If $m>2n-2$, the dimension of the preimage is negative, hence $\Gamma^{-1}(B_S \ | \ g_S)=\emptyset$.  Similarly, $m>2n-2 \Rightarrow \Gamma^{-1}(B_R \ | \ g_S)= \emptyset$. Equivalently, \eqref{eq:immS} and \eqref{eq:immR} are satisfied for $\forall x\in S$ when $m > 2n - 2$, which ensures a generic choice of $g_S \in \mathcal{F}_{\mathrm{imm}}$ makes $f$ has full rank when evaluating on $S$.

Since having full rank is an open condition in the space of linear maps,
and since $(x,t)\mapsto D\bar f_S(x,t)$ is continuous, the fact that
$D\bar f_S(x,0)$ has full rank for all $x\in S$ implies that for each
$x\in S$ there exist a neighborhood $U_x\subset S$ and $\epsilon_x>0$
such that $D\bar f_S$ has full rank on $U_x\times[0,\epsilon_x)$.
By the compactness of $S$, we may extract a finite subcover and obtain
a uniform $\epsilon>0$ such that $D\bar f_S$ has full rank on
$S\times[0,\epsilon)$.
\end{proof}



\subsubsection{Injectivity on the collar neighborhood}
We derive the condition on which $x \neq y \Rightarrow f(x)\neq f(y), \forall x, y \in U /\sim $. 

\begin{lemma}[Injectivity on the collar]\label{lem:collar-injectivity} 
Assume $m>2n$. For a fixed boundary embedding $g$, there exists a residual subset
$\mathcal{F}_{\mathrm{inj}}\subset C^k(S,\mathbb{R}^m)$ such that for all
$g_S\in\mathcal{F}_{\mathrm{inj}}$, the map $f$ is injective on $U/\!\sim$.
\end{lemma}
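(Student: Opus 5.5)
Following \Cref{lem:imm-collar}, we use the parametric transversality route: set up a pair-evaluation map on pairs of collar points, identify the collision set as a submanifold, and count dimensions. Points of $U/\!\sim$ are represented in collar coordinates as $(x,t)$ with $x\in S$ on the $S$-side, or $(y,t)=(r(x),t)$ on the $R$-side. Using \eqref{eq:gRaffine}, the $R$-side map is written in terms of $g_S$:
\[
\bar f_R(r(x),t)=g(x)+t\Bigl(\tfrac{\nu_S(x)}{\nu_R(r(x))}g_S(x)+c(x)\Bigr).
\]
Thus both sides are affine in $g_S$ and have the form $g(x)+t\,(a_\sigma(x)g_S(x)+b_\sigma(x))$, where $\sigma\in\{S,R\}$ labels the side, $a_S\equiv1$, $b_S\equiv0$, $a_R=\nu_S/\nu_R\circ r<0$ by \eqref{eq:nu-sign}, and $b_R=c$. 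Injectivity fails iff two distinct classes have equal images. The case split is by side pair and by whether the base points coincide.

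\textbf{Step 1: distinct base points, $x_1\neq x_2$.} Work on the open manifold $D=(S\times S\setminus\Delta)\times(0,\epsilon)^2$, one copy for each side pair $(\sigma_1,\sigma_2)$. Define the total map
\[
\Phi:D\times C^k(S,\mathbb{R}^m)\to\mathbb{R}^m,\qquad
\Phi(x_1,x_2,t_1,t_2,g_S)=\bigl[g(x_1)+t_1(a_{\sigma_1}g_S(x_1)+b_{\sigma_1})\bigr]-\bigl[g(x_2)+t_2(a_{\sigma_2}g_S(x_2)+b_{\sigma_2})\bigr].
\]
Take $Z=\{0\}$, which has codimension $m$. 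To show $\Phi$ is a submersion, perturb $g_S$ by $h=\rho\,w$, where $\rho$ is a bump function equal to $1$ near $x_1$ and vanishing near $x_2$. The differential in that direction is $t_1a_{\sigma_1}(x_1)w$, which spans $\mathbb{R}^m$ because $t_1>0$ and $a_{\sigma_1}\neq0$. By \Cref{thm:parametric} and \Cref{thm:preimage}, for residual $g_S$ the bad set has dimension $2(n-1)+2-m=2n-m<0$, so it is empty when $m>2n$.

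\textbf{Step 2: boundary and equal-base-point cases.} These are handled directly rather than by transversality.
\begin{itemize}
\item If $t_1=t_2=0$, injectivity is exactly the injectivity of $g$ modulo $\sim$, which \Cref{lemma:value-compat} already provides.
\item If exactly one $t_i=0$, restrict $D$ to that face, a manifold of dimension $2n-1$. The same submersion argument applies using the $t>0$ slot, giving dimension $2n-1-m<0$.
\item If $x_1=x_2$ and the sides agree, then $t_1\neq t_2$ and $(t_1-t_2)(a_\sigma g_S+b_\sigma)(x)=0$. This is ruled out because the transverse vector $a_\sigma g_S+b_\sigma$ is nonzero, which is part of the immersion condition \eqref{eq:immS}--\eqref{eq:immR}.
\item If $x_1=x_2$ and the sides differ, we need $t_1 g_S(x)=t_2(a_R g_S(x)+c(x))$. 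Parametrize by $(x,t_1,t_2)\in S\times(0,\epsilon)^2$ and evaluate only at the single point $x$. Perturbing $g_S(x)$ by $w$ changes the residual by $(t_1-t_2a_R)w$, and since $a_R<0$ this coefficient is $t_1-t_2a_R>0$. The map is therefore a submersion, and the bad set has dimension $n+1-m<0$.
\end{itemize}

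\textbf{Step 3: conclude.} Intersect the finitely many residual sets from Steps 1 and 2 to get $\mathcal{F}_{\mathrm{inj}}$. This yields injectivity on $U/\!\sim$ for collars of width $\epsilon$. One subtlety is that $\epsilon$ must be fixed before $g_S$ is chosen. Since it is fixed, apply the argument on the closed collar $[0,\epsilon]$, which is compact, and shrink afterward if needed.

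The main obstacle is the diagonal of Step 1. Near $x_1=x_2$ the bump-function argument degenerates, and the bad set is not closed in $D$, so residuality needs care. I would handle this with a standard blow-up estimate. Because $f$ is an immersion on the compact collar (\Cref{lem:imm-collar}), it is locally injective on a uniform neighborhood of the diagonal. It therefore suffices to apply Step 1 on the complement of a small closed tubular neighborhood of $\Delta$, which gives a compact domain and yields an open dense set of good $g_S$.
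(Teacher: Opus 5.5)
Your proposal is correct and is essentially the paper's argument: the same pair-difference total maps parametrized by $g_S$, bump-function surjectivity in the $g_S$-direction, the $S$/$R$/mixed split via \eqref{eq:gRaffine}, the positivity $t-s\alpha>0$ at coinciding base points, and the count $2n-m<0$. Your separate treatment of the $t=0$ faces and of coinciding base points is, if anything, more careful than the paper's. You should add the one face you omitted, namely the mixed coinciding-base case with exactly one $t_i=0$; it is immediate from $g_S(x)\neq 0$ and $g_R(r(x))\neq 0$, which follow from \eqref{eq:immS}--\eqref{eq:immR}.

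Drop your closing paragraph, though. The diagonal is not an obstacle: the residual conclusion of \Cref{thm:parametric} needs only pointwise surjectivity on the open, non-compact manifold $D$, and your bump gives that at every $x_1\neq x_2$. The proposed patch is also flawed as stated. A neighborhood of $\{x_1=x_2\}$ contains pairs with $|t_1-t_2|$ bounded below, which local injectivity of an immersion does not control, and the width of that neighborhood would depend on $g_S$.
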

\begin{proof}
    As $f$ is defined on $U=U_S\cup U_R$ that has $S$ and $R$ sides, we split the proof to three conditions (1) $x, y \in U_S$, (2) $x, y \in U_R$, and (3) $x \in U_S, y \in U_R$.

\textbf{\emph{Injectivity on $S-$ or $R-$ Side:}} 

Consider
    $\Delta_{S}:=\{(a, b) | a = b, (a, b) \in U_S\times U_S \}$
and we  define $\Omega_{S}:= \bigl( U_S\times U_S\bigr) \setminus \Delta_{S}$,
so $(x,t)\neq (y,s)$ for every $((x,t),(y,s))\in\Omega_S$.
Then we consider the total map:
\begin{equation}\label{eq:Theta_SS_def}
    \Gamma_{S}:\Omega_{S}\times C^k(S,\mathbb{R}^m)\to\mathbb{R}^m
\end{equation}
defined by the difference of $\bar{f}_S$ between two points
\begin{equation} 
\Gamma_{S}(((x,t),(y,s)),g_S):=\bigl(f_S(x)+t g_S(x)\bigr)- \bigl(f_S(y)+s g_S(y)\bigr).
\end{equation}
If $t=s=0$, then $\Gamma_S=f_S(x)-f_S(y)=0$, if and only if $x = y$, as $f_S$ is chosen to be an embedding. 


Then we consider the condition $(t, s) \neq (0, 0)$. We derive the differential of $\Gamma_S$ in the parameter direction:
\begin{equation}\label{eq:DgTheta_SS}
D_{g_S}\Gamma_S[h]=t\,h(x)-s\,h(y), h\in C^k(S,\mathbb{R}^m).
\end{equation}
Similar to the proof of surjectivity for \eqref{eq:gammma-differential}, we can choose the bump function to ensure the linear map $h\mapsto t\,h(x)-s\,h(y)$ can achive any value on
$\mathbb{R}^m$ whenever $(t,s)\neq(0,0)$. Thus, we can make the total map $\Gamma_S$ surjective everywhere by varying the parametric direction, which ensures the condition in \Cref{prop:parametric-sufficient} is satisfied. Hence $\Gamma_S$ is transverse to any manifold and particularly
$\Gamma_S\pitchfork\{0\}$.

By \Cref{thm:parametric}, $\Gamma_S\pitchfork\{0\}$ ensures
\begin{equation}
    \mathcal{F}_3=\{ g_S\in C^k(S, \mathbb{R}^m) : \Gamma_S(\cdot \ | \ g_S) \pitchfork \{0\} \},
\end{equation}
is residual. By \Cref{thm:preimage}, $\Gamma_S^{-1}(0 \ |\  g_S)$ is of dimension
\begin{equation}
\dim \Gamma_S^{-1}(0 \ |\  g_S) = \dim\Omega_S-m = 2n-m.
\end{equation}
If $m>2n$, this dimension is negative, hence the $\Gamma_S^{-1}(0 \ |\  g_S)$ is empty. Equivalently, for $\forall g_S \in \mathcal{F}_3$, $f$ is injective on $\Omega_S$. 

Similarly, consider $\Omega_{R}:=\bigl(U_R \times U_R\bigr)\setminus\Delta_R$
with $\Delta_{R}:=\{(a, b) | a = b, (a, b) \in U_R\times U_R \}$.
For $y_1,y_2\in R$, $t,s\in[0,\epsilon)$ define the total map
\begin{equation}
\label{eq:R-side-total}
\begin{aligned}
 \Gamma_{R}(((y_1,t),(y_2,s)),g_S)
:=\bigl(f_R(y_1)+t\,g_R(y_1)\bigr)-\bigl(f_R(y_2)+s\,g_R(y_2)\bigr)
\end{aligned}
\end{equation}
By \eqref{eq:gRaffine}, we have $g_R$ depends smoothly on $g_S$. By parameterizing $y_1 = r(x_1)$ and $y_2 = r(x_2)$ as $r$ is an diffeomorphism, we convert \eqref{eq:R-side-total} to:
\begin{equation}
\begin{aligned}
 &\Gamma_{R}(((y_1,t),(y_2,s)),g_S) =\bigl(f_R(r(x_1))+\frac{t\,\nu_S(x_1)}{\nu_R(r(x_1))}  \,g_S(x_1)+tc(x_1)\bigr)-\bigl(f_R(r(x_2))+\frac{s\,\nu_S(x_2)}{\nu_R(r(x_2))}\,g_S(x_2)+s\,c(x_2)
\bigr)    
\end{aligned}
\end{equation}

We note that by \Cref{assumption:traverse}, we have $\frac{\nu_S(x)}{\nu_R(r(x))} < 0$. Thus when $(t, s) \neq (0, 0)$, the value is dependent on $g_S$. 

Similar to the proof in the $S$-side, one checks that the total map
$\Gamma_{R}:\Omega_{R}\times C^k(S,\mathbb{R}^m)\to\mathbb{R}^m$ is transverse to $\{0\}$.
Therefore, we have the residual set
\begin{equation}
\mathcal{F}_4=\{g_S\in C^k(S,\mathbb{R}^m): \Gamma_R(\cdot \ | \ g_S) \pitchfork \{0\} \},
\end{equation}
and when $m > 2n$, $\forall g_S \in \mathcal{F}_4$ makes $f$ injective on $\Omega_R$. 


\textbf{\emph{Injectivity on mixed $SR$-side:}} 

Finally, consider the mixed part modulo the equivalent point defined by $x \sim r(x)$:
\begin{equation}
\Omega_{SR}:=\bigl(U_S\times U_R\bigr) \setminus \Bigl\{((x,0),(r(x),0)):x\in S\Bigr\}.
\end{equation}
For $x\in S$, $y\in r(S)$ and $t,s\in[0,\epsilon )$, define the total map
\begin{equation}
\begin{aligned}
\Gamma_{SR}(((x,t),(y,s)),g_S)
:=\bigl(f_S(x)+t\,g_S(x)\bigr) -\bigl(f_R(y)+s\,g_R(y)\bigr).
\end{aligned}
\end{equation}
Using $f_R(r(x))=f_S(x)$ and \eqref{eq:gRaffine}, we introduce $\tilde{x}:=r^{-1}(x)$ and write for $y=r(\tilde x)$:
\begin{equation}
\begin{aligned}
\Gamma_{SR}((x,t),(r(\tilde x),s),g_S) 
=
\bigl(f_S(x)+t\,g_S(x)\bigr)-\bigl(f_S(\tilde x)+s(\alpha(\tilde x)g_S(\tilde x)+c(\tilde x))\bigr)
\end{aligned}
\end{equation}
with $\alpha(\tilde x) = \frac{\nu_S(\tilde x)}{\nu_R(r(\tilde x))} < 0$ by \eqref{eq:nu-sign} and \Cref{assumption:traverse}. 

When $(t, s) = (0, 0)$ and $x \neq \tilde{x} $, as $f_S(\cdot) = g(\cdot)$ is an embedding, $\Gamma_{SR}(((x, 0), (r(\tilde{x}), 0)), g_S) \neq 0$. 

When we apply differential on the parametric $g_S$ to the total map, for $ h \in C^k(S, \mathbb{R}^m)$ we have:
\begin{equation}
\label{eq:TTheta}
    D_{g_S}\Gamma_{SR}[h] = th(x)-s\alpha(\tilde{x})h(\tilde{x}), h \in C^k(S, \mathbb{R}^m).
\end{equation}
When $(t, s) \neq (0, 0)$ and $x \neq \tilde{x} $, \eqref{eq:TTheta} can achieve any value by varying $h$ thus is immersion that ensures $\Gamma_{SR} \pitchfork \{0\}$. 

When $x = \tilde{x}$, we note that \Cref{assumption:traverse} implies $t - s\alpha(x) > 0$ by the $\alpha(x) < 0$ and $t, s > 0$. Thus we have $h \mapsto (t - s\alpha(x)) h(x)$ can achieve any value in $\mathbb{R}^{m}$, which conforms that \eqref{eq:TTheta} is surjectiv everywhere and thus $\Gamma_{SR} \pitchfork \{0\}$. 

Finally, we have the residual set
\begin{equation}
    \mathcal{F}_5 = \{ g_S \in C^k(S, \mathbb{R}^m): \Gamma_{SR}(\cdot \ |\ g_S) \pitchfork \{0\}  \},
\end{equation}
such that $\forall g_S \in \mathcal{F}_5$ makes $f$ injective on $\Omega_R$. 
Thus, by \Cref{thm:preimage}, we have the dimension of the manifold $\Gamma^{-1}_{SR}(\cdot\ |\ g_S)$:
\begin{equation}
   \dim \Gamma^{-1}_{SR}(\cdot\ | \ g_S) = \dim \Omega_{SR} - m = 2n - m.
\end{equation}
When $m > 2n$, $\Gamma_{SR}^{-1}(\cdot\ |\ g_S) = \emptyset$ for $\forall g_S \in \mathcal{F}_5$, which ensures $f$ is injective on $\Omega_{SR}$. 

Finally, we have the residual set $$\mathcal{F}_{\mathrm{inj}}:=\mathcal{F}_3 \cap \mathcal{F}_4 \cap \mathcal{F}_5$$ where $\forall g_S \in \mathcal{F}_{\mathrm{inj}}$ the $f$ is injective on the collar neighborhood $U$.
\end{proof}

\subsubsection{Summary}
Finally, we conclude the result of the local property of $f$ on $U$:
\begin{remark}
    By \Cref{lem:imm-collar} and \Cref{lem:collar-injectivity}, we have that $\mathcal{F}_{\mathrm{imm}} \cap \mathcal{F}_{\mathrm{inj}} $ is a residual set, and thus a generic choice of $g_S$ guarantees that $f$ defined in \eqref{eq:f-collar} is both an immersion and is injective on $U$ whenever $m > 2n$.
\end{remark}

\subsection{Global Extension by Relative Perturbation}
In the last subsection, we have proved that whenever $m > 2n$, the function $f$ is generically an immersion and an injective function on $U:=U_S\cup U_R$. Now we show that this property can be extended to the entire state space. 

{\color{black} Let \(K:=S\cup R\). Choose an open set \(U_0\Subset U\) such that $K\Subset U_0\Subset U$.} Choose a smooth cutoff $\rho:M\to[0,1]$ with $\rho\equiv 1$ on $\overline{U}_0$ and $\operatorname{supp}(\rho)\subset U$,
and fix any $C^k$ map $\phi:M\to\mathbb{R}^m$. Define $f_0:M\to\mathbb{R}^m$:
\begin{equation}\label{eq:F0-def}
f_0(x):=
\begin{cases}
\rho(x)\,f(x)+(1-\rho(x))\,\phi(x), & x\in U,\\
\phi(x), & x\notin U.
\end{cases}
\end{equation}
Then $f_0\in C^k(M,\mathbb{R}^m)$ and $f_0\equiv f$ on $\overline{U}_0$. 

{\color{black}
We choose the buffer neighborhood \(U_1\) as a smooth domain with boundary. 
Specifically, since \(K\Subset U_0\), choose a smooth cutoff function
\(\chi:M\to[0,1]\) such that
$\chi\equiv 1 \quad \text{on a neighborhood of } K,
\qquad
\operatorname{supp}(\chi)\Subset U_0 .
$
By Sard's theorem, choose a regular value \(c\in(0,1)\) of \(\chi\), and define $U_1:=\{x\in M:\chi(x)>c\}.
$
Then $K\subset U_1\Subset U_0\Subset U,
$
and $\partial U_1=\chi^{-1}(c)
$
is a smooth embedded hypersurface. Equivalently, $U_1=\{\chi>c\},\
\overline U_1=\{\chi\ge c\},\
\partial U_1=\{\chi=c\}.
$
Thus \(U_1\) is a smooth domain with boundary.
}

We restrict perturbations of $f_0$ to vanish on $U_1$. 
{\color{black} Since the perturbations are continuous, this also implies that they vanish on \(\overline U_1\).}
Thus all transversality arguments are carried out away from $\partial U_0$; the only additional boundary stratum to consider is $\partial U_1$.


Define the linear subspace of perturbations vanishing on ${U}_1$,
\begin{equation}\label{eq:P-def}
\mathcal{P}:=\{h\in C^k(M,\mathbb{R}^m): h|_{{U}_1}=0\},
\end{equation}
and consider the family
\begin{equation}\label{eq:Fh-def}
f_h:=f_0+h,\qquad h\in \mathcal{P}.
\end{equation}
Then $f_h\equiv f_0$ on $\overline{U}_1$ {\color{black} because \(h|_{U_1}=0\) and \(h\) is continuous}. Hence the immersion on $\overline{U}_1$ and injectivity on $\overline{U}_1/\!\sim$ are preserved \emph{exactly} for $f_h$. Then we proceed to show that a generic choice of $h$ ensures the property of $f$ on $\overline{U}_1$ can be extended to the entire $M$. 

\subsubsection{Relative global immersion}
\begin{lemma}[Relative global immersion]\label{lem:global-immersion}
Assume $m>2n-1$.  
There exists a residual subset $\mathcal{P}_{\mathrm{imm}}\subset \mathcal{P}$ such that for all $h\in \mathcal{P}_{\mathrm{imm}}$,
the map $f_h=f_0+h$ is an immersion on $M$ and satisfies
$f_h\equiv f$ on $\overline U_1\Subset U$.
\end{lemma}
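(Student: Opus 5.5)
The plan is to reuse the jet-transversality argument of \Cref{lem:imm-collar}, now on the complement $M\setminus U_1$ and with the perturbation space $\mathcal{P}$ in place of $C^k(S,\mathbb{R}^m)$. On $\overline U_1$ there is nothing to prove. Since $h|_{U_1}=0$, we have $f_h\equiv f_0\equiv f$ on $\overline U_1$, and $f$ is an immersion on $U$ (hence on $\overline U_1$) by \Cref{lem:imm-collar}, provided $\epsilon$ was chosen small enough. So it suffices to make $f_h$ an immersion on $W:=M\setminus U_1$. This set is a compact $n$-manifold with boundary $\partial U_1\cup\partial M$ (intersected appropriately). I will treat its interior and the stratum $\partial U_1$ separately, as the paper indicates.

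First, consider the interior $W^\circ=M\setminus\overline U_1$. Locally, in a chart, the condition that $Df_h(x)$ is not injective means that the first jet lies in $\Sigma:=\{(x,A): A\in \mathbb{R}^{m\times n},\ \operatorname{rank}A<n\}$. This is a finite union of the strata $\Sigma_j=\{\operatorname{rank}A=n-j\}$, $j\ge1$, which are submanifolds of codimension $j(m-n+j)$. Globally, I use the $1$-jet bundle, or equivalently $\operatorname{Hom}(TM,\mathbb{R}^m)$, and define
\[
\Gamma: W^\circ\times\mathcal{P}\to \operatorname{Hom}(TM,\mathbb{R}^m),\qquad (x,h)\mapsto Df_0(x)+Dh(x).
\]
The key check is that $\Gamma$ is a submersion. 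Given $x\in W^\circ$ and a target matrix $B$, pick a bump $\rho$ supported in a chart around $x$ that avoids $\overline U_1$ (possible because $x\notin\overline U_1$). Then take $h(y)=\rho(y)B(y-x)$ in coordinates. This $h$ lies in $\mathcal{P}$ and satisfies $Dh(x)=B$, so \Cref{prop:parametric-sufficient} applies, and $\Gamma\pitchfork\Sigma_j$ for every $j$. By \Cref{thm:parametric}, for a residual set of $h$ we get $\dim\Gamma(\cdot|h)^{-1}(\Sigma_j)=n-j(m-n+j)\le n-(m-n+1)=2n-1-m<0$ when $m>2n-1$. Hence these preimages are empty and $f_h$ is an immersion on $W^\circ$. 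The finite intersection over $j$ is still residual.

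Second, consider points of $\partial U_1$ (and of $\partial M\setminus U_1$). Here the bump function trick fails, since every neighborhood meets $U_1$ where $h$ must vanish. This is the main obstacle. My first approach is to avoid perturbing there at all. The buffer was built so that $\partial U_1\subset U_0\subset U$, and $f_0=f$ is already an immersion on $\overline U_0$. Because being an immersion is an open condition and $\partial U_1$ is compact, $f_h$ stays an immersion on a fixed neighborhood $N$ of $\partial U_1$ for all $h$ in an open $C^1$-neighborhood of $0$. I would therefore replace $\mathcal{P}$ locally by perturbations that are small near $\partial U_1$. Concretely, I would run the interior argument on $M\setminus \overline{N'}$ for a slightly smaller $N'\Subset N$, which yields a residual set there, and intersect it with the open set of $h$ that keep $f_h$ immersive on $\overline{N}$. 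An open set intersected with a residual set is residual in that open subset. I would then argue, by scaling or translating within the linear space $\mathcal{P}$, that this gives a residual set in $\mathcal{P}$. The alternative, if this step resists, is to state the conclusion for an open neighborhood of $0$ in $\mathcal{P}$, which is all that is needed downstream.

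For $\partial M\setminus U$, the boundary of the manifold itself, I will use collar coordinates (\Cref{thm:collar}) and apply the same jet argument to $f_h|_{\partial M}$ and to $f_h$ on the one-sided collar. Bump functions supported in half-balls still realize arbitrary first jets there.

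Finally, I take $\mathcal{P}_{\mathrm{imm}}$ to be the intersection of the residual sets obtained above. The property $f_h\equiv f$ on $\overline U_1$ holds for all of them by construction.
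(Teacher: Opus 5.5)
Your treatment of $\overline U_1$ and of the open set $M\setminus\overline U_1$ is the paper's proof: bump functions supported away from $\overline U_1$ realize arbitrary first jets, and the corank-one stratum of codimension $m-n+1$ leaves a preimage of dimension $2n-1-m<0$; since these two sets cover $M$ they already prove the lemma, and your half-ball charts at points of $\partial M$ are a fair refinement of the paper's interior-chart sketch. The $\partial U_1$ ``obstacle'' is illusory, because every $h\in\mathcal P$ is $C^1$ and vanishes on the open set $U_1$, hence $h$ and $Dh$ vanish on $\overline U_1\supset\partial U_1$ and $j^1f_h=j^1f$ there for every $h$; delete that paragraph, which is fortunate since its rescaling/translation step would not yield a residual subset of $\mathcal P$ (immersivity of $f_0+h$ near $\partial U_1$ is not preserved under $h\mapsto\lambda h$ or $h\mapsto h+h_0$).
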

\begin{proof}
Let $J^1(M,\mathbb{R}^m)$ be the $1$-jet bundle and write
\begin{equation}
    j^1f_h(x)=(x,f_h(x),Df_h(x)).
\end{equation}
For $k\le n-1$, let $\Sigma_k\subset J^1(M,\mathbb{R}^m)$ denote the rank-$k$ stratum of $1$-jets, i.e., those with $\operatorname{rank}(Df_h(x))=k$:
\begin{equation}
    \Sigma_k:=\left\{(x, y, A) \in J^1\left(M, \mathbb{R}^m\right): \operatorname{rank}(A)=k\right\}.
\end{equation}

Since the projection $\pi_A:J^1(M,\mathbb{R}^m)\to\mathbb{R}^{m\times n}$, $\pi_A(x,y,A)=A$, is a smooth submersion with
$d\pi_A(d x,d y,dA)=d A$ surjective at every point, and the rank-$k$ matrix set
$R_k:=\{A\in\mathbb{R}^{m\times n}:\operatorname{rank}(A)=k\}$ is a smooth embedded submanifold of $\mathbb{R}^{m\times n}$,
it follows by the preimage theorem that
$\Sigma_k=\pi_A^{-1}(R_k)$ is a smooth embedded submanifold of $J^1(M,\mathbb{R}^m)$.
Moreover, $\dim R_k=k(m+n-k)$, hence $\operatorname{codim}\Sigma_k=\operatorname{codim}R_k=mn-k(m+n-k)$.

The immersion-failure set is the determinantal set
$\Sigma_{<n}:=\bigcup_{k\le n-1}\Sigma_k$, where the smallest codimension occurs on the rank-$(n-1)$ stratum $\Sigma_{n-1}$, which is $\operatorname{codim}\Sigma_{n-1}= m-n+1.$

Define the total $1$-jet map: 
\begin{equation}\label{eq:J-total}
\begin{aligned}
    &J:(M\setminus \overline{U}_1)\times \mathcal{P}\to J^1(M,\mathbb{R}^m) \\
    &J(x,h):=j^1f_h(x)
\end{aligned}.
\end{equation}
We claim that $J\pitchfork \Sigma_{n-1}$ on $  (M\setminus \overline{U}_1)\times \mathcal{P}.$

Indeed, fix $(x,h)$ with $x\in M\setminus \overline{U}_1$ and choose a coordinate chart
$\varphi:B\to\mathbb{R}^n$ with $p\in B\subset M\setminus \overline{U}_1$ and $\varphi(x)=0$.
For any $(a,A)\in \mathbb{R}^m\times \mathbb{R}^{m\times n}$, pick $\rho\in C_c^\infty(B)$ with
$\rho(x)=1$ and $D\rho(x)=0$, and define $h'\in\mathcal P$ by
\begin{equation}\label{eq:bump-jet}
h'(q):=\rho(q)\,\bigl(a + A\,\varphi(q)\bigr),\qquad q\in M,
\end{equation}
extended by $0$ outside $B$. Then $h'|_{U_1}=0$ and we can verify that $h'$ reaches value
\begin{equation}\label{eq:bump-jet-vals}
h'(x)=a,\qquad Dh'(x)=A\circ D\varphi(x).
\end{equation}
Since $J(x,h)=j^1 f_h(x)=(x,f_h(x),Df_h(x))$, in order to verify transversality it is
sufficient to consider variations in the parameter $h$ while keeping $x$ fixed.
For the curve $t\mapsto (x,h+th')$, we have
\begin{equation}
\left.\frac{d}{dt}\right|_{t=0} J(x,h+th')
=
\left(0,\ h'(x),\ Dh'(x)\right)
\;\in\;
T_{(x,f_h(x),Df_h(x))}J^1(M,\mathbb R^m).
\end{equation}
By the above construction, the pair $(h'(x),Dh'(x))$ can be chosen arbitrarily in
$\mathbb R^m\times \mathbb R^{m\times n}$.
Since the rank constraint defining $\Sigma_{n-1}$ involves only the third component,
and the $(x,y)$-directions are unconstrained, these parameter variations already
suffice to establish $J\pitchfork \Sigma_{n-1}$.

Therefore $D_hJ_{(x,h)}$ is surjective onto the $(f,Df)$-components at $x$.
Since the rank constraint defining $\Sigma_{n-1}$ imposes no restriction on the
$(x,y)$-components, we have the first two components surjective by the derivative of $x$. By the fact that $J^1\left(M, \mathbb{R}^m\right) \cong M \times \mathbb{R}^m \times \mathbb{R}^{m \times n}, \quad \Sigma_{n-1}=M \times \mathbb{R}^m \times R_{n-1}$, we have:
\begin{equation}
    T_{(x,y,A)}\Sigma_{n-1} = T_xM \oplus \mathbb R^m \oplus T_A R_{n-1},
\end{equation}
Thus, for all $(x,h)\in J^{-1}(\Sigma_{n-1})$,
\begin{equation}
\operatorname{Im}(dJ_{(x,h)})+T_{J(x,h)}\Sigma_{n-1} = T_{J(x,h)}J^1(M,\mathbb{R}^m),
\end{equation}
so $J\pitchfork \Sigma_{n-1}$ by \Cref{prop:parametric-sufficient}.

By \Cref{thm:parametric}, for each $k\le n-1$ the set
\begin{equation}\label{eq:residual-immersion-k}
\mathcal{P}_{k}:=\{h\in \mathcal{P}:\ j^1f_h \pitchfork \Sigma_{k}
\text{ on } M\setminus \overline{U}_1\}
\end{equation}
is residual in $\mathcal{P}$.
For such $h$ the preimage $(j^1f_h)^{-1}(\Sigma_{k})$
is an embedded submanifold of $M\setminus \overline{U}_1$ and, by
\Cref{thm:preimage},
\begin{equation}\label{eq:dim-immersion-bad}
\dim(M\setminus \overline{U}_1)-\operatorname{codim}\Sigma_{n-1}
=
n-(m-n+1)
=
2n-m-1.
\end{equation}
If $m>2n-1$ (in particular if $m>2n$), this dimension is negative, hence
$(j^1f_h)^{-1}(\Sigma_{n-1})=\emptyset$.
Since each lower-rank stratum $\Sigma_k$ ($k\le n-2$) has strictly larger
codimension than $\Sigma_{n-1}$, the same dimension count also rules out
$(j^1f_h)^{-1}(\Sigma_k)$ when $m>2n-1$.

Finally, we conclude that there exists the residual set
\begin{equation}
\mathcal{P}_\mathrm{imm}:= \bigcap_{k=0}^{n-1}\mathcal{P}_k
\end{equation}
such that for all $f_h\in \mathcal{P}_\mathrm{imm}$ the map $f_h$ is an immersion
on $M\setminus \overline{U}_1$.
Together with $f_h\equiv f$ on ${U}_1$ and the immersion property already
established on ${U}_1 \subset U$, we conclude that $f_h$ is an immersion on all of $M$.
\end{proof}

\subsubsection{Relative global injectivity}

\begin{lemma}[Relative global injectivity]
\label{lem:no-new-collisions}
Assume $m>2n$.
There exists a residual subset $\mathcal{P}_{\mathrm{inj}}\subset \mathcal{P}$ such that for all
$h\in \mathcal{P}_{\mathrm{inj}}$, the perturbed map $f_h=f_0+h$ introduces no new identifications,
i.e.,
\[
f_h(x)=f_h(y)
\quad\Longrightarrow\quad
x\sim y, \quad \forall x,y\in M/\sim.
\]
\end{lemma}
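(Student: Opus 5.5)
We follow the pattern of \Cref{lem:collar-injectivity}, but now on the double space of the whole manifold with perturbations taken from $\mathcal P$. Write $W:=M\setminus\overline U_1$. A pair $(x,y)$ with $x\not\sim y$ and $f_h(x)=f_h(y)$ must fall into one of three cases. In case (a) both $x,y\in\overline U_1$. Then $f_h\equiv f$ there, and injectivity on $U/\!\sim$ from \Cref{lem:collar-injectivity} already excludes such a collision. In case (b) $x\in W$ and $y\in\overline U_1$, possibly with the roles swapped. In case (c) $x,y\in W$ with $x\neq y$. Case (c) needs no quotient bookkeeping, because the identification $x\sim r(x)$ only involves points of $K\subset U_1$.

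For case (c), set $\Omega_{WW}:=(W\times W)\setminus\Delta$. This is an open subset of $M\times M$ of dimension $2n$. Define the total map
\[
\Gamma_{WW}:\Omega_{WW}\times\mathcal P\to\mathbb R^m,\qquad ((x,y),h)\mapsto f_h(x)-f_h(y).
\]
Its parameter derivative is $D_h\Gamma_{WW}[h']=h'(x)-h'(y)$. Since $x\neq y$ and both lie in the open set $W$, we choose disjoint coordinate balls $B_x,B_y\subset W$. We then take $h'=\rho\,a$ with a bump $\rho$ supported in $B_x$ and $\rho(x)=1$. This $h'$ vanishes on $U_1$, so $h'\in\mathcal P$, and it gives $D_h\Gamma_{WW}[h']=a$ for arbitrary $a\in\mathbb R^m$. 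Hence $\Gamma_{WW}$ is a submersion and, by \Cref{prop:parametric-sufficient}, $\Gamma_{WW}\pitchfork\{0\}$. \Cref{thm:parametric} then gives a residual set of $h$ for which $\Gamma_{WW}(\cdot\,|\,h)\pitchfork\{0\}$. By \Cref{thm:preimage} the bad set has dimension $2n-m<0$, so it is empty.

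For case (b), set $\Omega_{W1}:=W\times\overline U_1$. Here $\overline U_1$ is a compact manifold with smooth boundary $\partial U_1$, which we arranged by the regular-value choice. We treat its interior $U_1$ and its boundary $\partial U_1$ as two separate strata. For each stratum $Y$, consider
\[
\Gamma_{W Y}((x,y),h):=f_h(x)-f_0(y),
\]
using $f_h\equiv f_0$ on $\overline U_1$. Now $D_h\Gamma_{WY}[h']=h'(x)$ with $x\in W$, and a bump supported near $x$ inside $W$ shows this is onto $\mathbb R^m$. So the total map is again a submersion, and the parametric transversality argument yields residual sets. The preimage dimensions are $2n-m<0$ on the interior stratum and $2n-1-m<0$ on the boundary stratum, so both bad sets are empty.

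We take $\mathcal P_{\mathrm{inj}}$ to be the finite intersection of these residual sets. Because $\mathcal P$ is a closed linear subspace of the Banach space $C^k(M,\mathbb R^m)$, it is itself Banach and hence Baire, so the intersection is still residual.

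The step I expect to be hardest is the interface between $W$ and $\overline U_1$. The domains $W$ and $\Omega_{W1}$ are not compact, and \Cref{thm:parametric} as stated wants manifolds without troublesome boundary. I plan to handle this by covering each domain with countably many compact pieces, for example $\{x:\chi(x)\le c-1/j\}\times\overline U_1$. On each piece we have transversality and emptiness of the preimage, and the countable intersection of the resulting residual sets is still residual. A second subtlety is that points $x\in W$ close to $\partial U_1$ need bump functions supported in $W$. This is automatic since $W$ is open, but it is exactly why collisions with $x$ lying on $\partial U_1$ itself are assigned to case (a) rather than (b).
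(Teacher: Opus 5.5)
Your proposal is correct and follows the paper's proof essentially step for step. You use the same three-way split: both points in $\overline U_1$, handled by the collar injectivity lemma since $f_h\equiv f$ there; one point in $M\setminus\overline U_1$ with the other in $U_1$ or on $\partial U_1$; and both points in $M\setminus\overline U_1$. Each case is closed by the same bump-function surjectivity in the $h$-direction, parametric transversality, and a negative dimension count. Your additions (that $\mathcal P$ is a closed subspace and hence Baire, the compact exhaustion of the non-compact domains, and the sharper $2n-1-m$ count on the $\partial U_1$ stratum) are refinements the paper leaves implicit, not a different route.
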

\begin{proof}
As $f_h$ is already fixed on $\overline{U}_1$, we identify the violation of injectivity for the case with at least one point outside on $M\setminus \overline{U}_1$. Let $\Delta = \{(x, y) | x = y, x, y \in M\} \subset M \times M$, define the open sets for two or one points outside $\overline{U}_1$ as:
\begin{equation}\label{eq:Omega-oo-io}
\begin{aligned}
\Omega_{\mathrm{oo}}&:=\bigl((M\setminus \overline{U}_1)\times(M\setminus \overline{U}_1)\bigr)\setminus\Delta,\\
\Omega_{\mathrm{io}}&:={U}_1\times(M\setminus \overline{U}_1) 
\end{aligned}
\end{equation}
and the case with one point on the boundary of 
\begin{equation}
    \Omega_{\mathrm{\partial o}}:=\partial{U}_1\times(M\setminus \overline{U}_1)
\end{equation}

Define the total difference maps for $*=oo,io$ as
\begin{equation}\label{eq:Theta-oo}
\Gamma_{(\cdot)}:\Omega_{(\cdot)}\times \mathcal{P}\to\mathbb{R}^m,
\Gamma_{(\cdot)}((x,y),h):=f_h(x)-f_h(y),
\end{equation}

The we show that $\Gamma_{(\cdot)} \pitchfork \{0\}$ by \Cref{prop:parametric-sufficient}. 

For $\Gamma_{\mathrm{oo}}$, when $x\neq y$ and $x,y\in M\setminus \overline{U}_1$, the differential of the parametric direction is
\begin{equation}\label{eq:DTheta-oo}
D_h\Gamma_{\mathrm{oo}}[h']=h'(x)-h'(y).
\end{equation}
By choosing $h'\in\mathcal P$ supported in two small disjoint coordinate balls
centered at $x$ and $y$ (both contained in $M\setminus\overline U_1$), and using
standard bump functions in each chart, the values $h'(x)$ and $h'(y)$ can be
prescribed independently and arbitrarily in $\mathbb R^m$. Hence
\eqref{eq:DTheta-oo} is surjective onto $\mathbb R^m$.
Thus, there exist a residual set
\begin{equation}
    \mathcal{P}_{\mathrm{oo}}=\{h \in \mathcal{P} :  \Gamma_{\mathrm{oo}}(\cdot\ |\ h) \pitchfork \{0\} \}
\end{equation}
For $\Gamma_{\mathrm{io}}$, at any zero with $(x,y)\in {U}_1\times(M\setminus \overline{U}_1)$ we have $h'(x)=0$ for all
$h'\in \mathcal{P}$, hence
\begin{equation}\label{eq:DTheta-io}
D_h\Gamma_{\mathrm{io}}[h']=-h'(y),
\end{equation}
which is surjective by choosing $h'$ supported near $y\in M\setminus \overline{U}_1$. Thus, we have the residual set
\begin{equation}
    \mathcal{P}_{\mathrm{io}}=\{h \in \mathcal{P} :  \Gamma_{\mathrm{io}}(\cdot\ |\ h) \pitchfork \{0\} \}.
\end{equation}
For $\Gamma_{\mathrm{\partial o}}$, we need to verify the transversality condition on $(x,y) \in \partial U_1 \times (M \setminus \overline{U}_1$. Similar to the $\Gamma_{\mathrm{io}}$ case, we have $h'(x)=0$ on $x\in \partial \overline{U}_1$. Thus, similar to $D_h\Gamma_{\mathrm{io}}$, we have 
$D_h\Gamma_{\mathrm{\partial o}}[h']=-h'(y),$
which is surjective guaranteed by the $h\in \mathcal{P}$. We note that the surjectivity in the parameter is sufficient for transversality that ensures the total map for the version of \Cref{thm:parametric} with boundary is also transversal. Thus, we have the residual set
\begin{equation}
    \mathcal{P}_{\mathrm{\partial o}} = \{ h \in \mathcal{P}: \Gamma_{\mathrm{io}}(\cdot | h) \pitchfork \{0\} \}.
\end{equation}

Applying \Cref{thm:preimage}, there exsits a residual set:
\begin{equation}
    \mathcal{P}_{\mathrm{inj}}:=  \mathcal{P}_{\mathrm{oo}}\cap\mathcal{P}_{\mathrm{io}} \cap \mathcal{P}_{\mathrm{\partial o}},
\end{equation}

such that $\forall h \in \mathcal{P}_{\mathrm{inj}}$, each slice $\Gamma_{(\cdot)}(\cdot,h)\pitchfork \{0\}$, and $\Gamma^{-1}_{(\cdot)}(0 \ |\ h)$ set is a submanifold of dimension:
\begin{equation}\label{eq:dim-collision}
\dim \Gamma^{-1}_{(\cdot)}(0 \ |\ h) = \dim\Omega_{(\cdot)}-m=2n-m.
\end{equation}
If $m>2n$, all these dimensions are negative, hence $\Gamma_{(\cdot)}^{-1}(0 \ | \ h)=\emptyset$.
Therefore, such a generic $h$ will not introduce new collisions.
\end{proof}

\subsubsection{Summary}
Finally, we conclude the results of the subsection by
\begin{remark}
    By \Cref{lem:global-immersion} and \ref{lem:no-new-collisions}, we conclude that there is a residual set:
\begin{equation}
   \mathcal{P}':= \mathcal{P}_{\mathrm{imm}} \cap \mathcal{P}_{\mathrm{inj}} \subset \mathcal{P},
\end{equation}
such that $\forall h \in \mathcal{P}'$, $f_h:=f_0 + h$ is an immersion and injective on $M/\!\sim$. 
\end{remark}

\subsection{Induced Embedding}
We note show that the $f:M\rightarrow \mathbb{R}^m$ constructed by the first-order collar extension and the relative perturbation induces an embedding on $M_\mathcal{H}$. 
\begin{lemma}[Induced embedding on \eqref{eq:hybrifold}]
\label{lem:induced-embedding}
Let $f:M\to\mathbb R^m$ satisfy \eqref{eq:value-compatible} and \eqref{eq:derivative-compatible}, and suppose that
$f$ is injective and an immersion on $M/\!\sim$.
Let $\pi:M\to M_{\mathcal H}=M/\!\sim$ be the quotient map induced by the relation
$x\sim r(x)$ for all $x\in S$.
Then there exists a unique continuous map
\[
\bar f:M_{\mathcal H}\to\mathbb R^m
\]
such that $\bar f\circ\pi=f$.
Moreover, $\bar f$ is injective, and if $M_{\mathcal H}$ is compact, then $\bar f$ is a topological embedding.
\end{lemma}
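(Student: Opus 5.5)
The plan is to use the universal property of the quotient topology, then check injectivity from the hypothesis, and finally use the compact-to-Hausdorff lemma.

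\emph{Existence and uniqueness of $\bar f$.} The relation $\sim$ is generated by $x\sim r(x)$ for $x\in S$. By \Cref{assumption:instant}, $S\cap r(S)=\emptyset$, so each equivalence class is either a singleton or a pair $\{x,r(x)\}$; no longer chains arise. \eqref{eq:value-compatible} says that $f$ is constant on every class. Hence $\bar f([x]):=f(x)$ is well defined, and $\bar f\circ\pi=f$ holds by construction. Uniqueness follows because $\pi$ is surjective. Continuity follows from the defining property of the quotient topology: for $W\subset\mathbb{R}^m$ open, $\pi^{-1}(\bar f^{-1}(W))=f^{-1}(W)$ is open in $M$ since $f$ is continuous, so $\bar f^{-1}(W)$ is open in $M_{\mathcal H}$.

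\emph{Injectivity.} The hypothesis that $f$ is injective on $M/\!\sim$ means exactly what \Cref{lem:no-new-collisions} (together with \Cref{lem:collar-injectivity} on $U$) provides: $f(x)=f(y)\Rightarrow x\sim y$. So if $\bar f([x])=\bar f([y])$, then $f(x)=f(y)$, hence $x\sim y$ and $[x]=[y]$.

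\emph{Topological embedding.} Assume $M_{\mathcal H}$ is compact. This is automatic from \Cref{assumption:compact}, since it is the continuous image $\pi(M)$ of a compact space. Then $\bar f$ is a continuous bijection from a compact space onto $\bar f(M_{\mathcal H})\subset\mathbb{R}^m$, which is Hausdorff. Any closed $C\subset M_{\mathcal H}$ is compact, so $\bar f(C)$ is compact and therefore closed in the image. Thus $\bar f$ is a closed map onto its image, its inverse is continuous, and $\bar f$ is a homeomorphism onto its image.

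The only delicate point is the first step. One must confirm that the equivalence classes are really just the pairs $\{x,r(x)\}$, using $S\cap r(S)=\emptyset$ from \Cref{assumption:instant}; otherwise transitivity could merge further points. Points of $\overline S\cap\overline{r(S)}$ lie outside $S\cup r(S)$ and so are singletons. No Hausdorffness of $M_{\mathcal H}$ is needed, because compactness of the domain together with Hausdorffness of $\mathbb{R}^m$ suffices. The immersion and \eqref{eq:derivative-compatible} hypotheses play no role in the topological statement. They matter only for the subsequent remark that $\bar f(M_{\mathcal H})$ carries the continuous vector field $Df\,V$.
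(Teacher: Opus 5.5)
Your proof is correct and follows the same route as the paper's. Both arguments get well-definedness of $\bar f$ from \eqref{eq:value-compatible}, uniqueness from surjectivity of $\pi$, continuity from the quotient topology, injectivity directly from the hypothesis $f(x)=f(y)\Rightarrow x\sim y$, and the embedding from the compact-to-Hausdorff lemma. Your extra check that the classes are exactly the pairs $\{x,r(x)\}$ is harmless but not needed: constancy of $f$ on the generating pairs already forces constancy on the generated equivalence classes, so the step you flag as delicate is not.
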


\begin{proof}
By \eqref{eq:value-compatible}, $f$ is constant on equivalence classes of $\sim$.
Hence the map
\[
\bar f([x]) := f(x), \qquad [x]\in M_{\mathcal H},
\]
is well-defined and satisfies $\bar f\circ\pi=f$.
Uniqueness follows from the surjectivity of $\pi$.
Since $\pi$ is a quotient map and $f$ is continuous, $\bar f$ is continuous.

If $\bar f([x])=\bar f([y])$, then $f(x)=f(y)$, which implies $[x]=[y]$ by injectivity of $f$ on $M$.
Thus $\bar f$ is injective.
If $M_{\mathcal H}$ is compact, then a continuous injective map into the Hausdorff space
$\mathbb R^m$ is a topological embedding, completing the proof.
\end{proof}

\subsection{Summary}
Finally, we conclude that there exist a generic construction of $f \in C^k(M, \mathbb{R}^m)$ that satisfies \eqref{eq:value-compatible}, \eqref{eq:derivative-compatible}, and \eqref{eq:f-embedding} whenever $m > 2n$:
\begin{remark}[Genericity of the construction]
\label{rem:generic-construction}
Assume $m>2n$. The map $f$ in \Cref{thm:main} is obtained by first constructing $f$ on a collar neighborhood from
parameters $(g,g_S)$ and then extending it to a global map $f_0$ which is perturbed as $f_h:=f_0+h$.
The desired properties \eqref{eq:value-compatible}, \eqref{eq:derivative-compatible}, and \eqref{eq:f-embedding}
hold for a \emph{generic} choice of the construction parameters, in the following sense:
\begin{enumerate}[leftmargin=*,nosep]
\item (\emph{Generic choice of $g$}) Since $\dim S=n-1$ and $m>2n$ (hence $m>2\dim S$), one may choose
$g\in C^k(S,\mathbb{R}^m)$ generically (i.e., from a residual subset) so that $g:S\to\mathbb{R}^m$ is an embedding.
\item (\emph{Generic collar embedding}) Choose $g_S$ from the residual set
$\mathcal{F}_{\mathrm{imm}}\cap \mathcal{F}_{\mathrm{inj}}\subset C^k(S,\mathbb{R}^m)$ (with $g:S\to\mathbb{R}^m$
chosen as an embedding). Then the collar construction yields a map $f$ satisfying
\eqref{eq:value-compatible}--\eqref{eq:derivative-compatible} and is an immersion and injective on a neighborhood of $S$.
\item (\emph{Generic relative perturbation}) With $U_0\Subset U$ fixed and
$\mathcal{P}:=\{h\in C^k(M,\mathbb{R}^m): h|_{U_1}=0\}$, choose $h$ from the residual set
$\mathcal{P}'\subset \mathcal{P}$. Then $f_h=f_0+h$ is a global immersion and introduces no new identifications,
so that the induced map on the quotient is an embedding, i.e., \eqref{eq:f-embedding} holds.
\end{enumerate}
In particular, selecting $(g,g_S,h)$ from these residual sets (thus a generic construction) produces a map $f$ satisfying
\eqref{eq:value-compatible}, \eqref{eq:derivative-compatible}, and \eqref{eq:f-embedding}.
\end{remark}

\clearpage

\section{Comparison with Existing Methods}
In this appendix, we summarize the experiment setups for all the baselines and the proposed method. 
\subsection{Experiment Setups}
\label{apx:exp-setup}
The experiment setup for the Torus, Klein Bottle, and Bouncing Ball examples.
\begin{table}[H]
\scriptsize
\centering
\caption{Model parameters.}
\label{table:model-param}
\begin{tabular}{lccccc}
\toprule
 & Proposed & \cite{teng2025chyll} & \cite{chen2018neural} & \cite{lusch2018deep} \\
\midrule
Vector Field      &[64]$\times$2  & [64]$\times$2  & [128]$\times$2 & Linear        \\
Encoder           &[64]$\times$3  & [64]$\times$3  & N/A            & [64,128,256]  \\
Decoder           &[128]$\times$3 & [128]$\times$8 & N/A            & [256,128,64]  \\
Vector Field Dim. &$2n$               & $2n$       & $n$            & 256           \\
\bottomrule
\end{tabular}
\end{table}

The experiment setup for the Three-Link Walker Examples
\begin{table}[H]
\scriptsize
\centering
\caption{Model parameters.}
\label{table:model-param}
\begin{tabular}{lccccc}
\toprule
 & Proposed & \cite{teng2025chyll} & \cite{chen2018neural} & \cite{lusch2018deep} \\
\midrule
Vector Field      &[64]$\times$3  & [64]$\times$3  & [128]$\times$2 & Linear        \\
Encoder           &[64]$\times$3  & [64]$\times$3  & N/A            & [64,128,256]  \\
Decoder           &[128]$\times$3 & [128]$\times$8 & N/A            & [256,128,64]  \\
Vector Field Dim. &$2n$               & $2n$       & $n$            & 256           \\
\bottomrule
\end{tabular}
\end{table}

The experiment setup for the 3D Bouncing Ball Examples
\begin{table}[H]
\scriptsize
\centering
\caption{Model parameters.}
\label{table:model-param}
\begin{tabular}{lccccc}
\toprule
 & Proposed & \cite{teng2025chyll} & \cite{chen2018neural} & \cite{lusch2018deep} \\
\midrule
Vector Field      &[256]$\times$2  & [64]$\times$2  & [128]$\times$2 & Linear        \\
Encoder           &[256]$\times$3  & [64]$\times$3  & N/A            & [64,128,256]  \\
Decoder           &[256]$\times$3 & [128]$\times$8 & N/A            & [256,128,64]  \\
Vector Field Dim. &$2n$               & $2n$       & $n$            & 256           \\
\bottomrule
\end{tabular}
\end{table}

The setup for the Event Neural ODE \cite{chen2018neural} and Neural ODE with RNN encoder~\cite{rubanova2019latent}

\begin{table}[h]
\scriptsize
\centering
\caption{Hidden layers for each component of the event Neural ODE~\cite{chen2018neural}.}
\label{table:event-ode-param}
\begin{tabular}{lcccc}
\toprule
 & Vector Field & Guard & Reset Function & Vector Field Dim.\\
\midrule
Event ODE & [256]$\times$2 & [256]$\times$3 & [256]$\times$3 & $n$\\
\bottomrule
\end{tabular}
\end{table}

\begin{table}[h]
\scriptsize
\centering
\caption{Hidden layers for each component of the Neural ODE (RNN)~\cite{rubanova2019latent}. }
\label{table:rnn-ode}
\begin{tabular}{lcccc}
\toprule
 & Vector Field & Encoder & Decoder & Vector Field Dim.\\
\midrule
ODE (RNN) & [100]$\times$2 & GRU(100) & Linear$\times$3 & 20\\
\bottomrule
\end{tabular}
\end{table}

The we consider a $w_z = w_x = 10$ and $w_c = 1000$ as the weights for the proposed method. The threshold for the latent covariance is $\Lambda = 0.09$. 

The batch size for all the experiments is 2048. The curriculum for our method is rolling out [32, 64, 128, 200] time steps ahead for [2000, 2000, 2000, 4000] steps of gradient descent. For 3D Bouncing Ball, we do [3000, 3000, 3000, 6000] steps of gradient descent.

\subsection{Visualizations}
\label{apx:exp-vis}
We provide the visualizations for the proposed methods and the baselines. We plot the trajectories of each DOF with respect to time and the latent trajectories for \cite{teng2025chyll} as it is closely related to our method with a differential topology-based latent design. We note that Chyll indicates \cite{teng2025chyll}, Neural ODE indicates \cite{chen2018neural}, Latent ODE (RNN) indicates \cite{rubanova2019latent} and Koopman indicates \cite{lusch2018deep}. 
\subsubsection{Bouncing Ball}
\begin{figure}[H]
    \centering
    \includegraphics[width=1\linewidth]{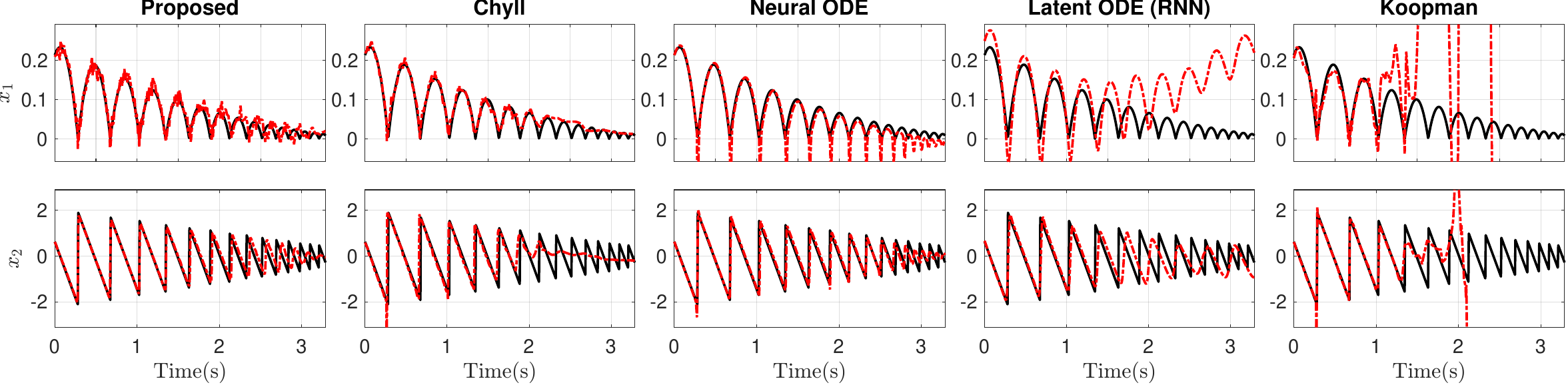}
    \caption{Prediction of $x$ trajectories of Bouncing Ball.}
    \label{fig:bb-xtraj}
\end{figure}

\begin{figure}[H]
    \centering
    \includegraphics[width=1\linewidth]{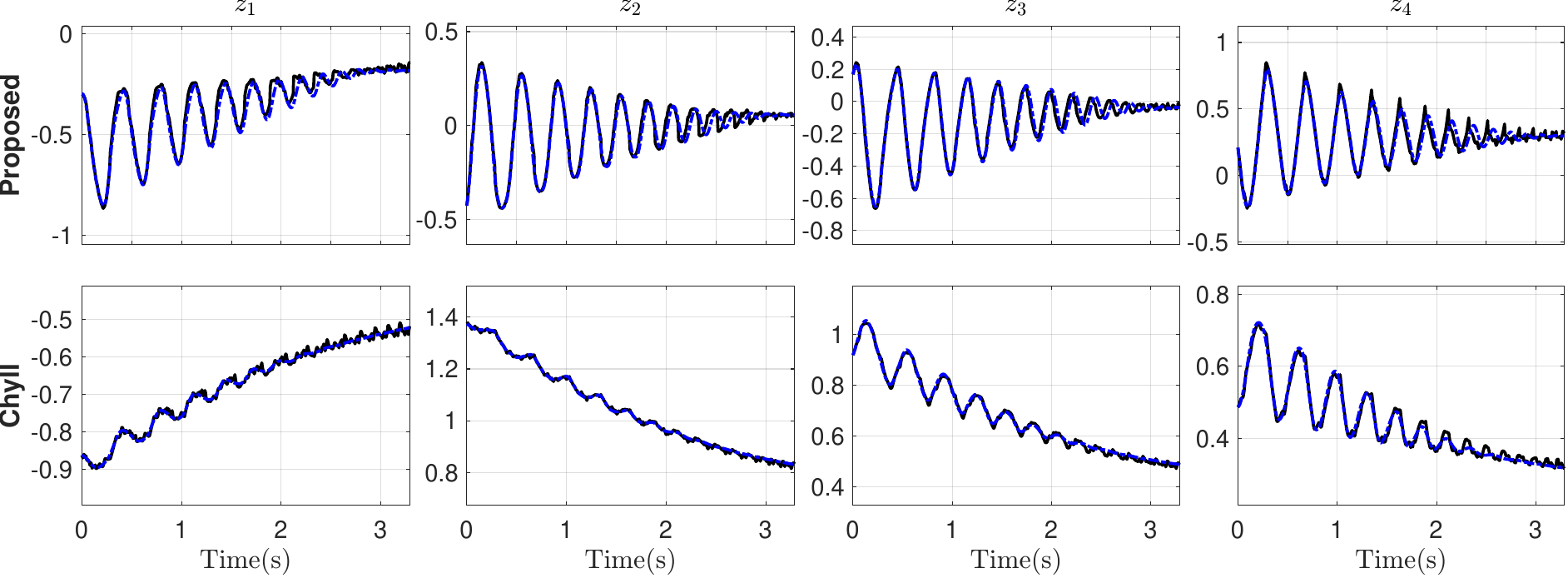}
    \caption{Prediction of latent trajectories of Bouncing Ball.}
    \label{fig:bb-ztraj}
\end{figure}

\subsubsection{Torus}

\begin{figure}[H]
    \centering
    \includegraphics[width=1\linewidth]{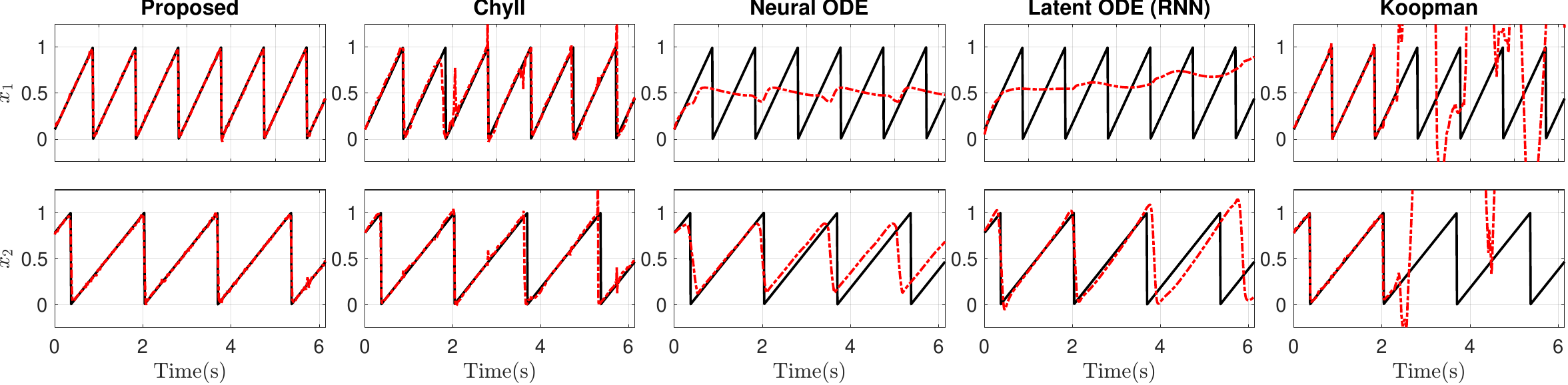}
    \caption{Prediction of $x$ trajectories of Torus.}
    \label{fig:torus-xtraj}
\end{figure}
\begin{figure}
    \centering
    \includegraphics[width=1\linewidth]{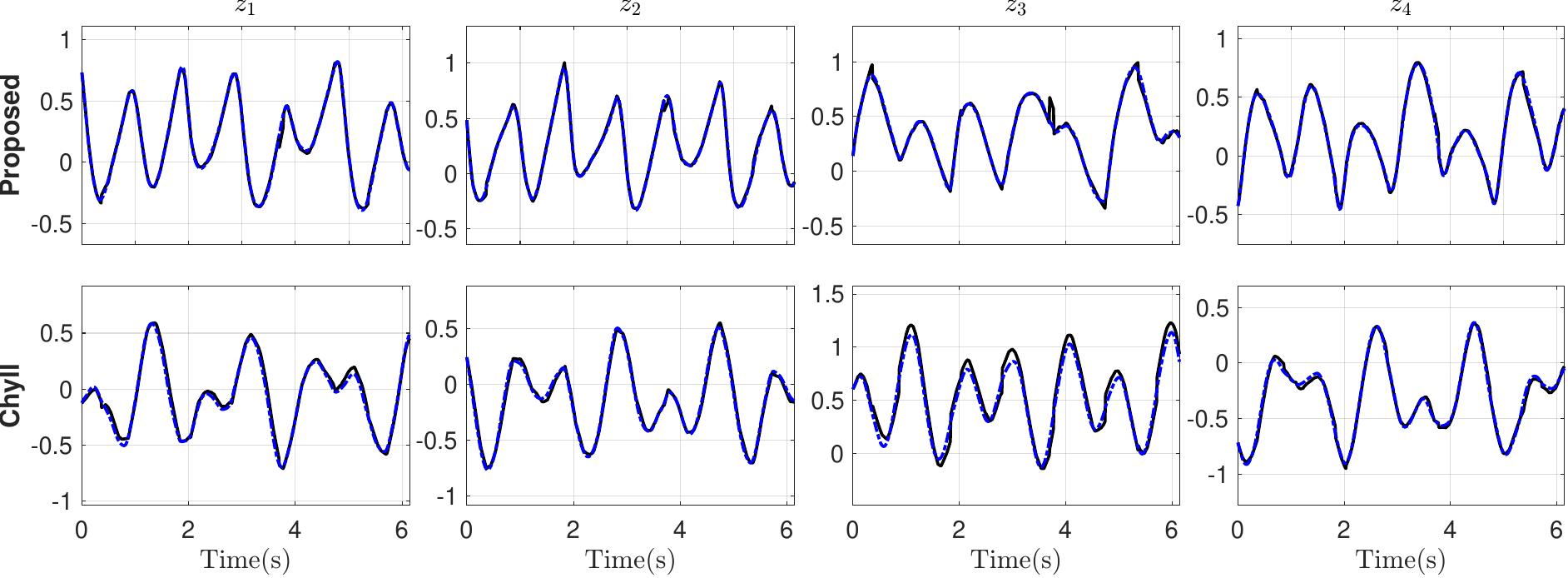}
    \caption{Prediction of latent trajectories of Torus.}
    \label{fig:torus-ztraj}
\end{figure}

\subsubsection{Klein Bottle}

\begin{figure}[H]
    \centering
    \includegraphics[width=1\linewidth]{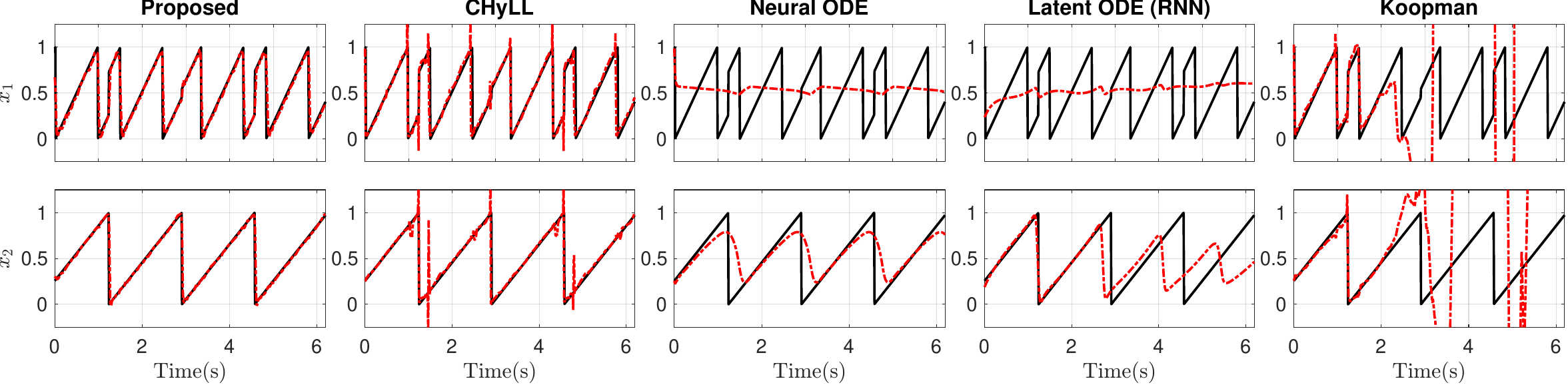}
    \caption{Prediction of $x$ trajectories of Klein Bottle.}
    \label{fig:klein-xtraj}
\end{figure}
\begin{figure}[H]
    \centering
    \includegraphics[width=1\linewidth]{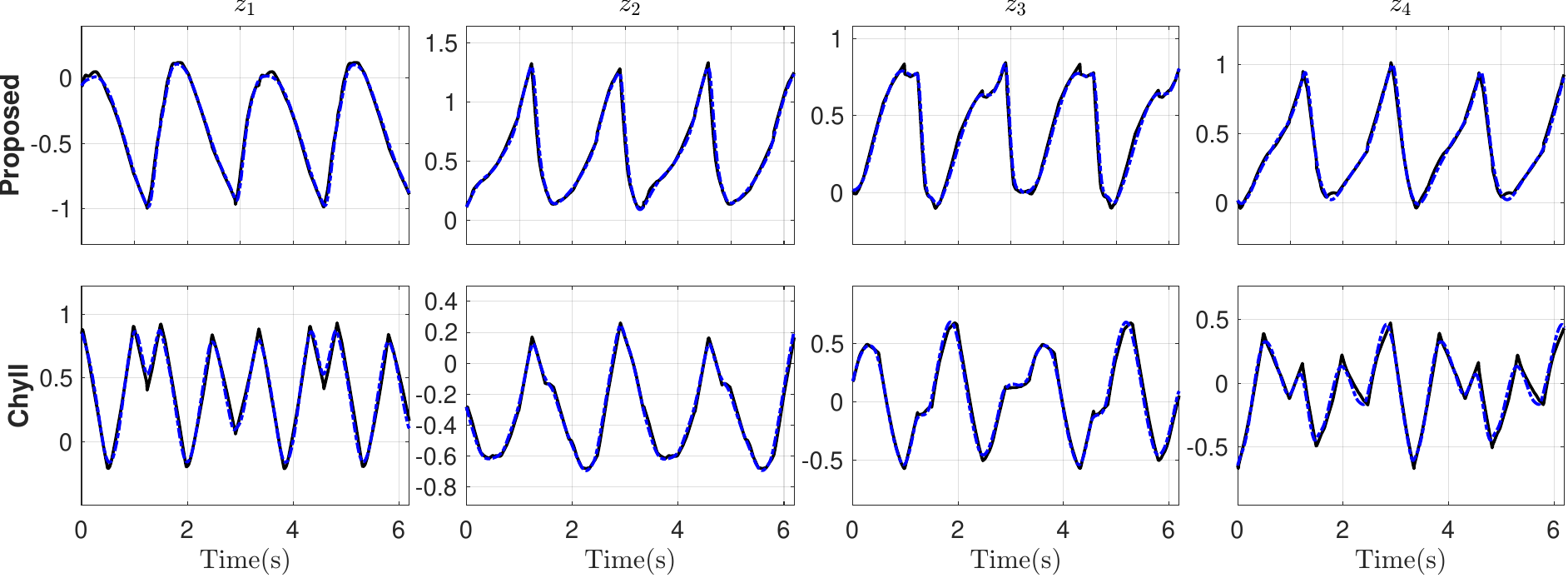}
    \caption{Prediction of latent trajectories of Klein Bottle.}
    \label{fig:klein-ztraj}
\end{figure}

\subsubsection{Three-Link Walker}

\begin{figure}[H]
    \centering
    \includegraphics[width=1\linewidth]{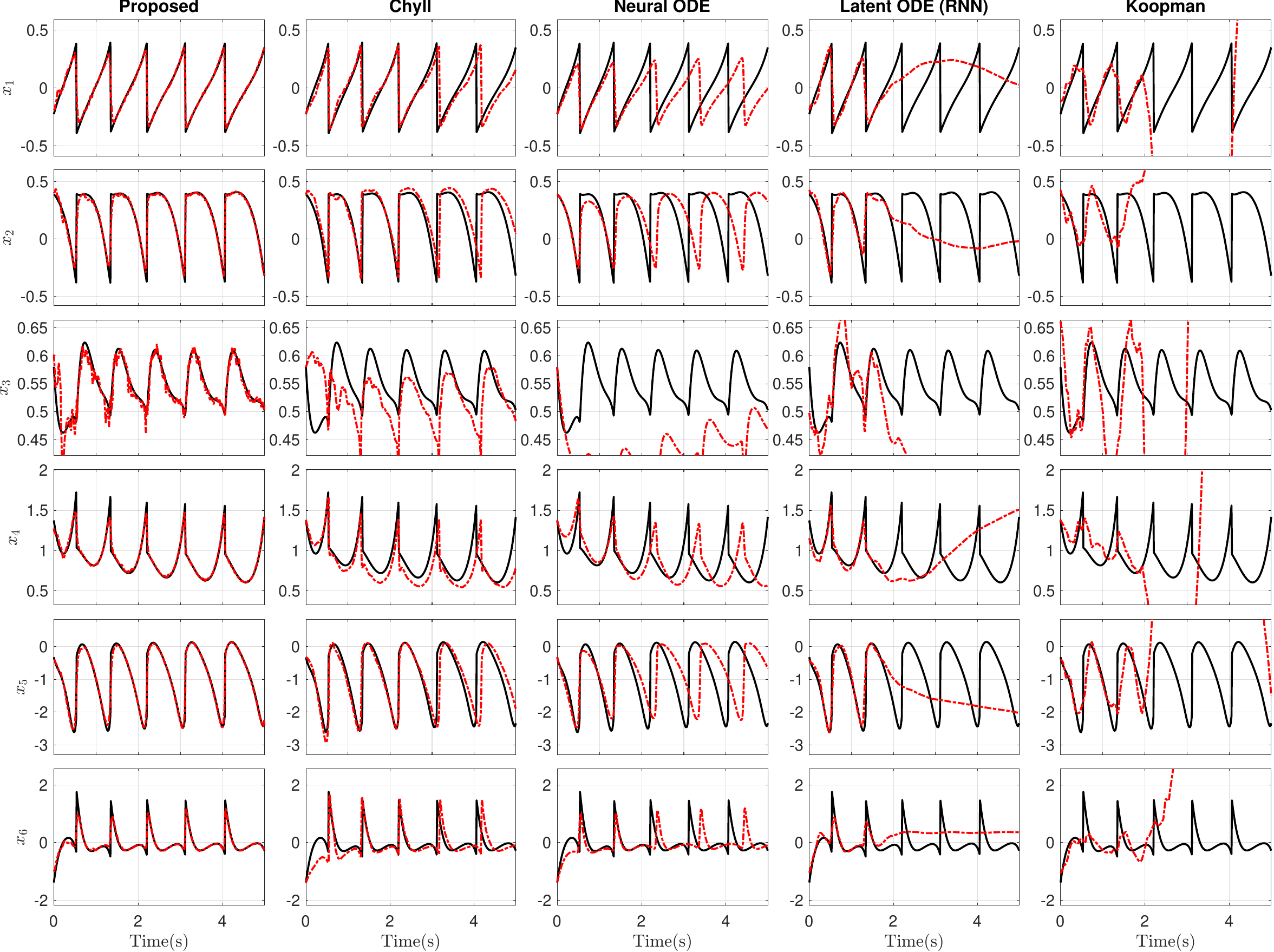}
    \caption{Prediction of $x$ trajectories of Three-Link Walker.}
    \label{fig:klein-xtraj}
\end{figure}

\begin{figure}[H]
    \centering
    \includegraphics[width=1\linewidth]{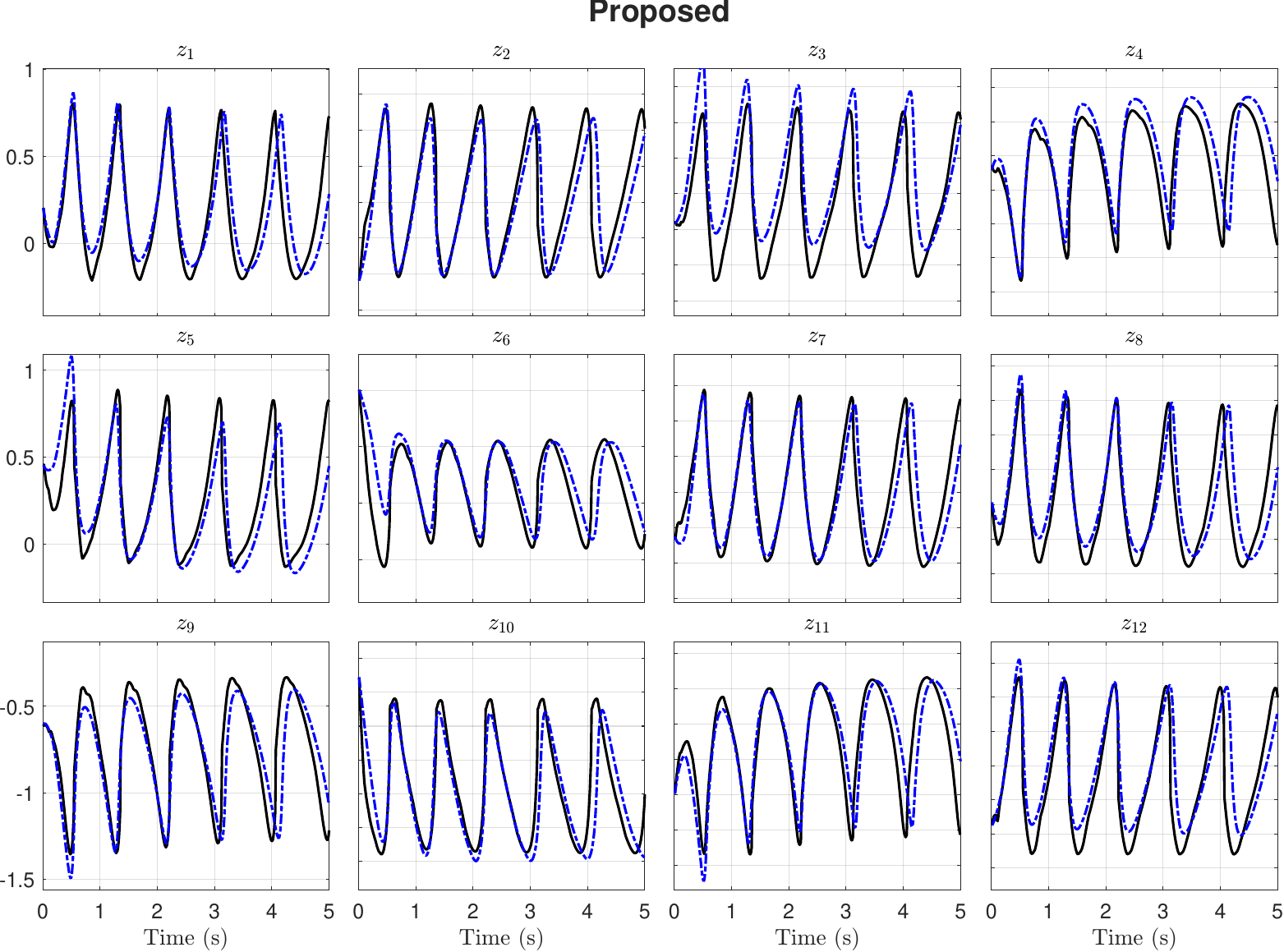}
    \caption{Prediction of latent trajectories of Klein Bottle for the proposed method.}
    \label{fig:klein-ztraj}
\end{figure}

\begin{figure}[H]
    \centering
    \includegraphics[width=1\linewidth]{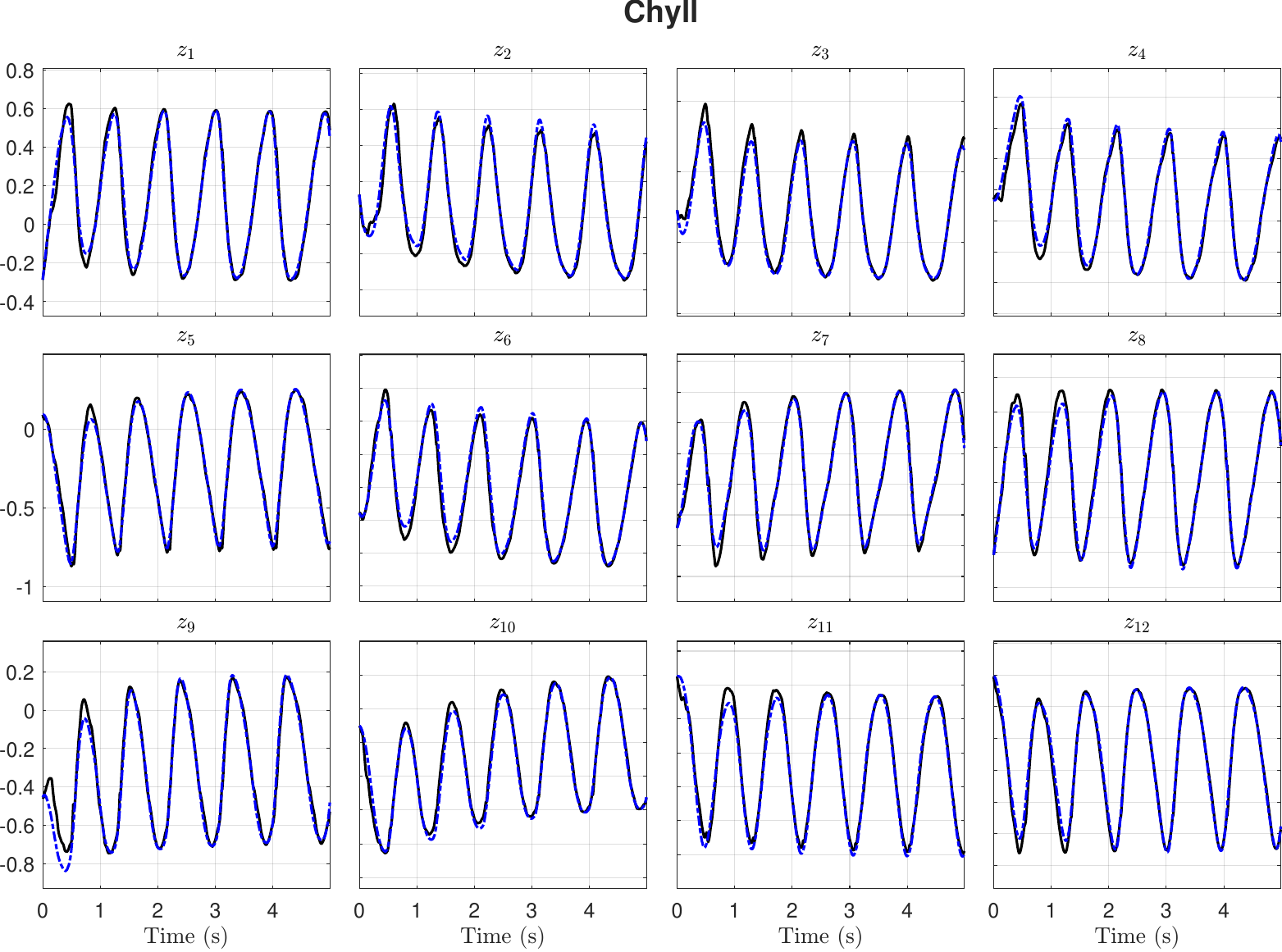}
    \caption{Prediction of latent trajectories of Klein Bottle for \cite{teng2025chyll}.}
    \label{fig:klein-ztraj}
\end{figure}

\subsubsection{3D Bouncing Ball}
\begin{figure}[H]
    \centering
    \includegraphics[width=1\linewidth]{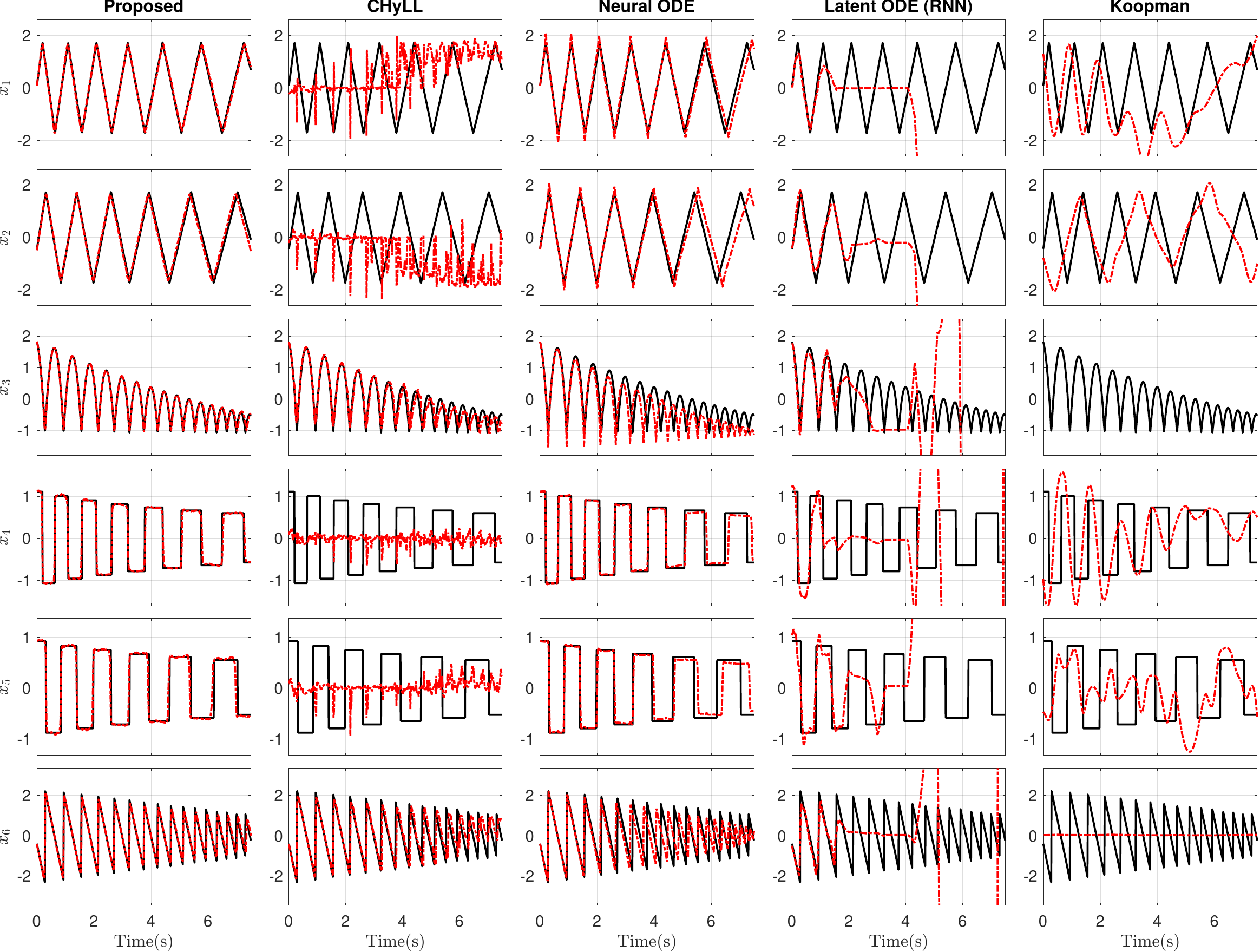}
    \caption{Prediction of $x$ trajectories of 3D Bouncing Ball. From the top to bottom: position $x-y-z$ and the velocity $x-y-z$. }
    \label{fig:3dbb-xtraj}
\end{figure}

\begin{figure}[H]
    \centering
    \includegraphics[width=1\linewidth]{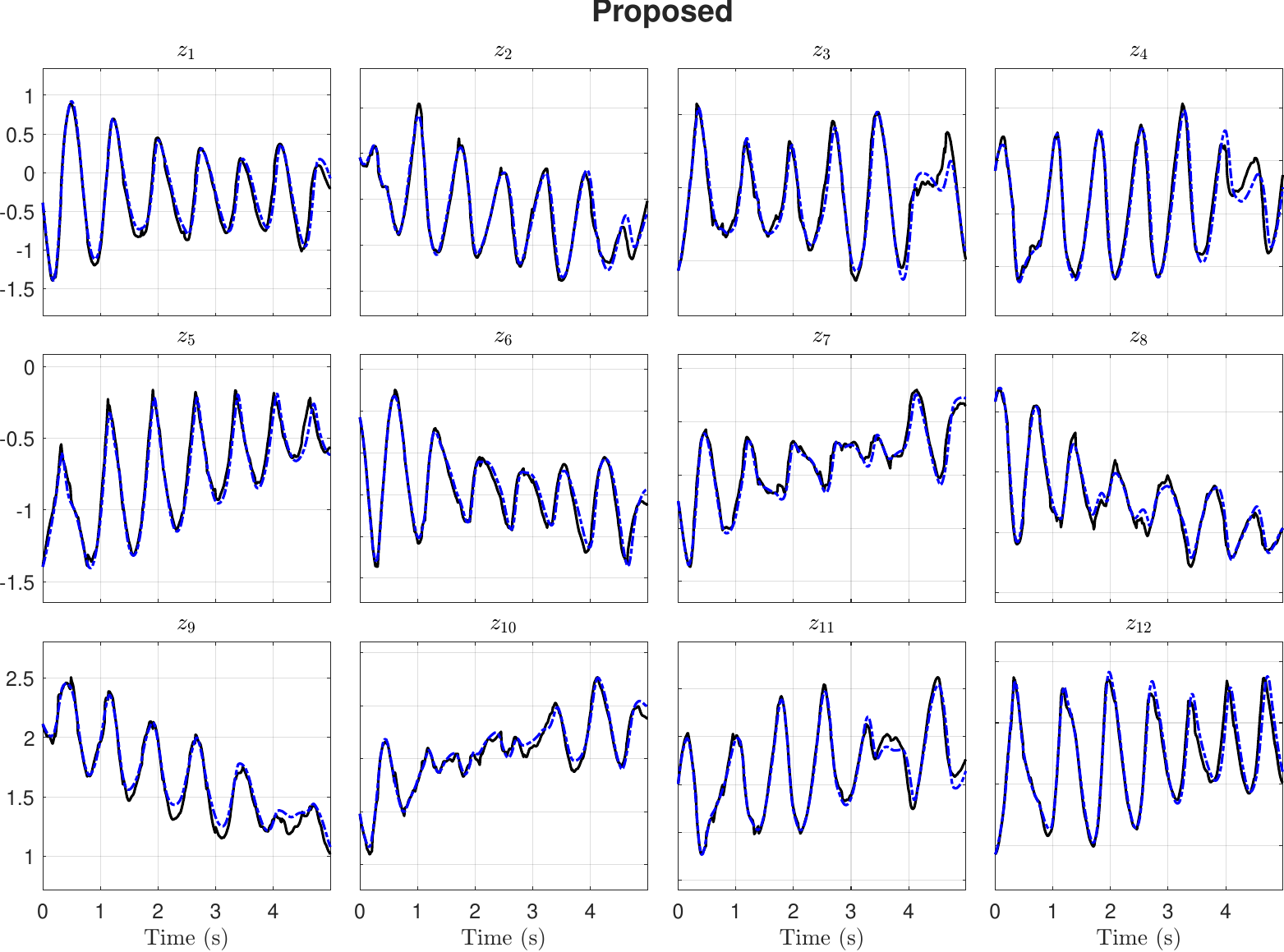}
    \caption{Prediction of latent trajectories of 3D Bouncing Ball for the proposed method.}
    \label{fig:klein-ztraj}
\end{figure}

\begin{figure}[H]
    \centering
    \includegraphics[width=1\linewidth]{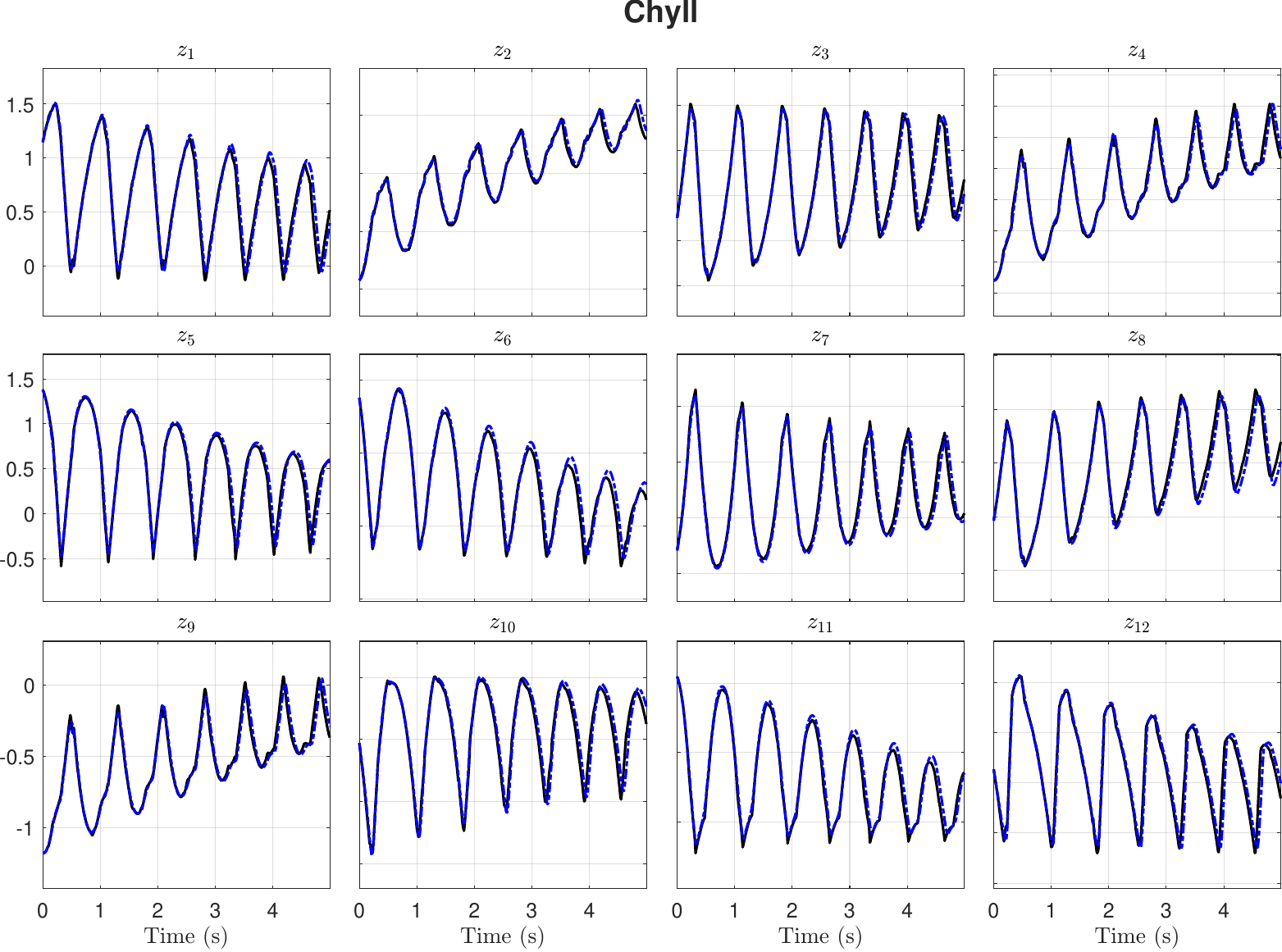}
    \caption{Prediction of latent trajectories of 3D Bouncing Ball for \cite{teng2025chyll}.}
    \label{fig:klein-ztraj}
\end{figure}

\section{Ablation Studies}
\label{apx:ablation}

We consider the identical number of layers and width in \Cref{apx:exp-setup} for all our ablation studies. 

We consider $w_x = w_z = 10$ if the corresponding loss is activated. We consider $w_c = 1000$ and $\Lambda = 0.09$ if activated. 

We consider $w_g = 1$ and $w_v=0.1$ if activated. For 3D Bouncing Ball case, $w_g = 1000$ if activated. 

For the ablation about the activation functions, we consider the last as $\sin$ or ReLU.



\end{document}